\documentclass{article} 
\usepackage{iclr2026_conference,times}

\usepackage{amsmath,amsfonts,bm}

\def\1{\bm{1}}

\DeclareMathAlphabet{\mathsfit}{\encodingdefault}{\sfdefault}{m}{sl}
\SetMathAlphabet{\mathsfit}{bold}{\encodingdefault}{\sfdefault}{bx}{n}

\usepackage{bbm}

\usepackage{hyperref}
\usepackage{url}
\usepackage{amsthm}

\newtheorem{theorem}{Theorem}[section]
\newtheorem{proposition}[theorem]{Proposition}
\newtheorem{lemma}[theorem]{Lemma}
\newtheorem{corollary}[theorem]{Corollary}

\newtheorem{assumption}[theorem]{Assumption}
\newtheorem{remark}[theorem]{Remark}

\usepackage{graphicx}
\usepackage{subcaption}

\usepackage{algorithm}
\usepackage{algorithmic}
\usepackage[linesnumbered,ruled,vlined,algo2e]{algorithm2e}
\SetKwInput{kwInit}{Init}

\title{Learning in Prophet Inequalities with Noisy Observations}

\author{
  Jung-hun Kim$^{1,3}$ \qquad Vianney Perchet$^{1,2,3}$ \\
  $^{1}$CREST, ENSAE, IP Paris\\
  $^{2}$Criteo AI Lab \\
  $^{3}$FairPlay joint team \\
  \texttt{junghun.kim@ensae.fr} \qquad \texttt{vianney.perchet@normalesup.org}
}
\iclrfinalcopy 
\begin{document}

\maketitle

\begin{abstract}
    We study the prophet inequality, a fundamental problem in online decision-making and optimal stopping, in a practical setting where rewards are observed only through noisy realizations and reward distributions are unknown. At each stage, the decision-maker receives a noisy reward whose true value follows a linear model with an unknown latent parameter, and observes a feature vector drawn from a distribution. To address this challenge, we propose algorithms that integrate learning and decision-making via lower-confidence-bound (LCB) thresholding. In the i.i.d.\ setting, we establish that both an Explore-then-Decide strategy and an $\varepsilon$-Greedy variant achieve the sharp competitive ratio of $1 - 1/e$, under a mild condition on the optimal value. For non-identical distributions, we show that a competitive ratio of $1/2$ can be guaranteed against a relaxed benchmark. Moreover, with limited window access to past rewards, the tight ratio of $1/2$ against the optimal benchmark is achieved. 
\end{abstract}
\section{Introduction}

The prophet inequality is a fundamental problem in online decision-making and optimal stopping \citep{hill1992survey}.
A decision-maker (or gambler) sequentially observes a stream of random variables (or rewards) revealed one by one and must decide at each stage whether to \textit{accept the current value and stop}, or \textit{continue to the next stage}.
The benchmark is the \emph{prophet}, an omniscient agent who knows all realizations in advance.
The objective of the gambler is to design an online stopping rule whose expected payoff is competitive with that of the prophet, aiming to maximize the competitive ratio.
This framework has been extensively studied, owing to its rich mathematical structure and broad applications such as posted-price mechanisms \citep{lucier2017economic}, online ad allocation \citep{alaei2012online}, and hiring processes in labor markets \citep{arsenis2022individual}.

Classical work has established sharp guarantees when the underlying
distributions are known. In particular, \citet{samuel1984comparison} showed
that a single-threshold strategy achieves the optimal competitive ratio $1/2$
for independent but non-identical distributions. In the i.i.d.\ case,
\citet{hill1982comparisons} established the ratio $1-1/e$, which is the optimal under single threshold/quantile rule \citep{correa2019recent,correa2017posted}.

Crucially, all of these guarantees assume full knowledge of the underlying reward
distributions---an assumption that is rarely met in practice. This has motivated
a recent line of work on prophet inequalities with \emph{unknown} distributions
\citep{correa2019prophet,correa2020unknown,goldenshluger2022optimal,immorlica2023prophet}.
A striking negative result of \citet{correa2019prophet} is that, in this setting,
\emph{there is no meaningful online learning of the distribution}: the best
achievable competitive ratio is $1/e \approx 0.368$, attained by the classical
secretary algorithm. Achieving a larger ratio 
of $1-1/e \approx 0.632$ for i.i.d.\ setting  requires $\Theta(n)$ additional \emph{offline} samples
\citep{correa2019prophet}. Similarly, in the non-i.i.d.\ case with unknown
distributions, obtaining a $1/2$ competitive ratio also necessitates $\Theta(n)$
offline samples \citep{rubinstein2019optimal}. More broadly, the literature
indicates that nontrivial learning is impossible without either $\Theta(n)$
offline data  or access to multiple independent
repetitions of the decision problem, as in regret-minimization formulations
rather than single-sequence competitive-ratio guarantees
\citep{gatmiry2024bandit,liu2025improved}. These requirements substantially
constrain applicability in real-world settings, where large offline datasets and
repeated independent sequences are often unavailable.


In this work, we study the prophet inequality in a novel and practical setting, in which  reward distributions are \textit{unknown} without available offline reward samples, and furthermore, at each stage only a \textit{noisy} realization of the random variable is observed.
Instead, the decision-maker has access to observable feature vectors drawn from distributions, and the rewards follow a linear model with an unknown latent parameter. This structural information enables estimation of the reward distribution and fundamentally distinguishes our setting from the classical unknown-distribution model \citep{correa2019prophet,rubinstein2019optimal}. 
This feature-based formulation is motivated by applications such as online advertising, hiring, and recommendation systems, where contextual information (e.g., ad profiles, candidate attributes, or item descriptions) and noisy feedback are observable, while the underlying reward distributions remain unknown.


\paragraph{Exploration under stopping.} In our setting, exploration is tied to the stopping
decision: once the decision-maker chooses \emph{stop and accept}, the process ends
and no further information can be gathered. As a result, continuing to explore
serves two intertwined purposes. First, it provides additional noisy,
feature-dependent observations that help \emph{learn} the unknown reward model
(and thereby reduce statistical uncertainty). Second, it keeps the option open
to \emph{encounter new opportunities}, i.e., potentially higher future rewards.
Since stopping irrevocably terminates both learning and opportunity, the
decision-maker faces a delicate tradeoff between gathering more information
and waiting for better candidates versus committing early to exploit a seemingly promising one under
uncertainty.

We study learning-based stopping policies that couple estimation and
stopping under noisy observations and feature feedback, and provides performance
guarantees for our proposed policy.
The main contributions are as follows:
\paragraph{Summary of Contributions.}
\begin{itemize}
     \item \textbf{Noisy linear prophet model with online learning.} Motivated by practical scenarios,
    we introduce a prophet-inequality setting in which the gambler observes
    only noisy rewards together with contextual features, while the latent
    rewards follow an \emph{unknown} linear model. This structure enables an
    online-learning approach to prophet inequalities.
    
    \item \textbf{i.i.d.\ guarantees.} In the i.i.d.\ case, we propose
    learning-based stopping policies based on lower-confidence-bound (LCB)
    thresholding and prove that they achieve the sharp competitive ratio
    $1-1/e$ against the optimal prophet benchmark under a mild condition on the
    optimal value.
    
    \item \textbf{Non-identical guarantees.} For non-identical distributions, we
    analyze an algorithm that attains a $1/2$ competitive ratio against a
    relaxed benchmark. Moreover, with limited window access to past rewards,
    our method achieves the optimal competitive ratio $1/2$ against the standard
    prophet benchmark.
\end{itemize}

\section{Related Work}

\paragraph{Prophet Inequalities under Known Reward Distributions.}  
The study of prophet inequalities originates from \cite{krengel1977semiamarts,krengel1978semiamarts}. 
A key milestone was established by \cite{samuel1984comparison}, who showed that a single-threshold strategy achieves the optimal competitive ratio of $1/2$ in the case of independent but non-identical distributions. 
In the order-selection variant, where the gambler can choose the order of arrivals, \cite{chawla2010multi} achieved a ratio of $1 - 1/e$. 
For the i.i.d.\ case, \cite{hill1982comparisons} established a ratio of $1 - 1/e$, which is optimal under single (time-invariant) threshold/quantile policies \citep{correa2019recent,correa2017posted}. Subsequent work
showed that this bound can be surpassed by allowing adaptive strategies
\citep{abolhassani2017beating,correa2017posted}.

Extending beyond exact observations, \citet{assaf1998statistical} demonstrated that analogous guarantees remain valid under noisy observations, though only with respect to a Bayesian version of the prophet benchmark, which is weaker than the classical one. Indeed, under noisy observations, any non-trivial guarantee with respect to the classical benchmark becomes impossible without additional structural assumptions, as we will show later. 
Finally, all of these results assume full knowledge of the underlying reward distributions—an assumption rarely satisfied in practical applications.  

\paragraph{Prophet Inequalities under Unknown Reward Distributions.}  
To address the limitation of knowing distributions, recent work has studied prophet inequalities under unknown reward distributions \citep{correa2019prophet,correa2020unknown,goldenshluger2022optimal,immorlica2023prophet,gatmiry2024bandit,li2022prophet}. 
For the i.i.d.\ setting, \cite{correa2019prophet} showed that a competitive ratio of $1/e (\approx 0.368)$
can be achieved by the classical optimal algorithm for the secretary problem as the horizon grows.
To obtain the higher ratio of $1-1/e (\approx 0.632)$, however, $\Theta(n)$ additional offline reward samples are required. Building on this, \citet{goldenshluger2022optimal} showed that an asymptotic ratio approaching $1$ is attainable, but only for fixed distributions whose maxima lie in the Gumbel or reverse-Weibull domains of attraction as the horizon grows.



The case of unknown non-identical distributions has been examined in \citet{kaplan2020competitive,rubinstein2019optimal,gatmiry2024bandit,liu2025improved}. However, achieving a $1/2$ competitive ratio requires $\Theta(n)$ offline samples. Although \citet{gatmiry2024bandit} and \citet{liu2025improved} avoid offline samples, their setting involves repeated sequences of rounds rather than a single sequence. This repetition allows information to be aggregated across rounds, making the learning problem tractable under bandit feedback. In contrast, our setting involves only a single sequence, and is therefore fundamentally different.
Prophet inequalities under unknown and non-independent distributions were also studied in \cite{immorlica2023prophet}, achieving a ratio of $1/(2er)$ for $r$-sparse correlated structures, but their model still assumes distributional knowledge of the independent components of the rewards.  

In contrast, we study a novel and practical setting that targets the optimal prophet under noisy reward observations and unknown reward distributions without available offline reward samples. Instead, we exploit observable feature vectors and their distribution, a setting motivated by real-world applications where feature information is available but the reward distribution is unknown.

\section{Problem Statement}

We consider $n$ non-negative random variables (or rewards) $X_1, \dots, X_n$, where each $X_i$ is independently drawn from an \textit{unknown} distribution $\mathcal{D}_i$.
In particular, we assume that
\[
X_i = x_i^\top \theta, \quad i \in [n],
\]
where $x_i \in \mathbb{R}^d$ is a feature vector drawn independently from a distribution $\mathcal{D}_{x,i}$ to which the learner has sampling access, and $\theta \in \mathbb{R}^d$ is an \textit{unknown} latent parameter. Since $\theta$ is unknown, the induced distributions $\mathcal{D}_i$ of the $X_i$ are also unknown to the gambler. 

At each stage $i$, the gambler does not observe $X_i$ directly. Instead, it observes a \textit{noisy} measurement
\[
y_i = X_i + \eta_i,
\]
where the noise $\eta_i$ is i.i.d drawn from a $\sigma$-sub-Gaussian distribution for $\sigma>0$. The noisy observations $y_1, y_2, \dots$ are revealed sequentially.

After observing $(y_i, x_i)$ at stage $i$, the gambler must make an irrevocable
decision to either accept the current index and stop, or continue to the next
stage. We denote by $\tau \in [n+1]$ the stopping time at which an index is
accepted, with $\tau = n+1$ corresponding to the event that the gambler rejects
all observations, in which case the realized payoff is set to $X_{n+1}=0$.  

The gambler’s expected payoff is $\mathbb{E}[X_\tau]$. As a benchmark, we consider the prophet—an omniscient decision maker who knows all values $X_1, \dots, X_n$ in advance—which achieves
$\mathbb{E}\left[\max_{i \in [n]} X_i\right].$
The goal of the stopping policy  $\tau$ is to maximize the \emph{asymptotic competitive ratio} against the prophet, given by
\[
\liminf_{n\to\infty}\frac{\mathbb{E}[X_\tau]}{\mathbb{E}[\max_{i\in[n]}X_i]}.
\]



\paragraph{Notation.} For a square matrix $M$, $\lambda_{\min}(M)$ denotes its minimum eigenvalue. 

We consider regularization conditions as follows.





\begin{assumption} \label{ass:norm_bd_S}  There exists $S>0$ such that $\|\theta\|_2^2\le S$. 
\end{assumption}

\begin{assumption} \label{ass:norm_bd}  There exists $L>0$ such that, for all $i\in[n]$ and $x\sim\mathcal{D}_{x,i}$, $\|x_i\|_2^2\le L$
\end{assumption}
 Our regularization assumptions are standard in the online linear learning literature \citep{abbasi2011improved,ruan2021linear, liu2025improved}.

\section{The i.i.d. Setting}

Here, we focus on the case where all reward distributions are identical, i.e., $\mathcal{D}_i = \mathcal{D}$ for every $i \in [n]$.
This holds, for instance, when the feature distributions are identical across stages, i.e., $\mathcal{D}_{x,i} = \mathcal{D}_x$ for $i \in [n]$.
Under this setting, we propose algorithms and analyze their competitive ratios.


\subsection{Explore-then-Decide with LCB Thresholding}

We first propose an algorithm (Algorithm~\ref{alg:ETD}) based on Explore-then-Decide with lower-confidence-bound (LCB) thresholding.
To address the unknown distribution $\mathcal{D}$, the algorithm begins with a forced exploration phase of length $\ell_n$, provided as an input. After the exploration phase, the algorithm enters the decision phase, where it
computes an LCB for the reward at each stage and applies an LCB-based thresholding
rule to decide whether to stop or continue.
This thresholding mechanism explicitly accounts for estimation uncertainty.
The full procedure is detailed below.

\subsubsection{Strategy}

\paragraph{Exploration.} With setting $\ell_n=o(n)$, during the first $\ell_n$ stages, we collect pairs of noisy rewards $y_t$ and features $x_t$ at each stage $t$. Using these observations, we estimate the unknown parameter $\theta$ as $
\hat{\theta} =V^{-1} {\sum_{t=1}^{ \ell_n} y_t x_t},$
where $V=\sum_{t=1}^{\ell_n}x_tx_t^\top+\beta I_d$ for a constant $\beta>0$.

After this exploration phase, the algorithm enters the decision phase, where it determines at each stage whether to stop or continue. The details regarding LCB Thresholding are given below.
\paragraph{Lower Confidence Bound (LCB).} 
We define the lower confidence bound for $X_i$ as
 \begin{align}
X_i^{\mathrm{LCB}}=x_i^\top\hat{\theta}-\xi(x_i),
    \label{eq:ETD_LCB}
\end{align}
 where  $\xi(x_i) := \sqrt{x_i^\top V^{-1}x_i}(\sigma\sqrt{d\log(n+n\ell_nL/d\beta)}+\sqrt{S\beta})$.
\paragraph{Decision with LCB Threshold.}  Using the CDF of $\mathbb{P}_{z\sim \mathcal{D}_x}(Z^{\mathrm{LCB}}\le \alpha|\hat{\theta}, V)$ 
where $Z^{\mathrm{LCB}}=
z^\top\hat{\theta}-\xi(z)$, we set threshold $\alpha$ s.t.
\begin{align}    \label{eq:ETD_threshold}
\mathbb{P}_{z\sim \mathcal{D}_x}(Z^{\mathrm{LCB}}\le \alpha|\hat{\theta}, V)=1-\frac{1}{n}
\end{align}

The algorithm stops at stage $i>\ell_n$ if $X_i^{\mathrm{LCB}} \ge \alpha$, in which case we set $\tau=i$.  By definition, if no stopping occurs throughout the horizon, we set $\tau=n+1$. Although Algorithm~\ref{alg:ETD} is presented assuming oracle access to $D_x$ for clarity,
the quantile defining the threshold $\alpha$ can be approximated arbitrarily well
via Monte Carlo sampling from $D_x$.

\begin{algorithm}[t]
 \caption{Explore-Then-Decide with LCB Thresholding (\texttt{ETD-LCBT})}
  \label{alg:ETD}
    \KwIn{Exploration length $\ell_n$; regularization parameter $\beta$}
  \KwOut{Stopping time $\tau$}

  \For{$i=1,\dots,n$}{
    \If{$i \le \ell_n$}{
        Observe $(y_i, x_i)$ 
        
        \If{$i = \ell_n$}{
            $V \leftarrow \sum_{t=1}^{\ell_n} x_t x_t^\top + \beta I_d$;\:
            $\hat{\theta} \leftarrow V^{-1} \sum_{t=1}^{\ell_n} y_t x_t$ 
             
             Compute $\alpha$ from \eqref{eq:ETD_threshold} (or \eqref{eq:threshold_noniid} for non-i.i.d.) \label{line:threshold_ETD}
        }
    }
    \Else{
        Observe $(y_i, x_i)$ 
        
        Compute $X_i^{\mathrm{LCB}}$ from \eqref{eq:ETD_LCB} 
        
        \If{$X_i^{\mathrm{LCB}} \ge \alpha$}{
            Stop and set $\tau \leftarrow i$ \;
        }

    }
  }
\end{algorithm}

\subsubsection{Theoretical Analysis}



 In this setting, a fundamental limitation emerges due to noisy observations. 
In fact, it is possible to construct instances where the observation noise drives the competitive ratio to a trivial limit, as formalized below (see Appendix~\ref{app:prop_difficulty} for the proof).

\begin{proposition}\label{prop:difficulty}
There exists a bounded i.i.d.\ distribution for $(X_i)_{i=1}^n$ together with an observation noise model such that, for any stopping policy $\tau$ based on the observations, \[\lim_{n\rightarrow \infty}
\frac{\mathbb{E}[X_\tau]}{\mathbb{E}[\max_{i\in[n]}X_i]}=0.\]
\end{proposition}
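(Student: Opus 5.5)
We construct an instance in which the noise completely masks the rare large values, so that any policy based on the observations has (asymptotically) the same information as a policy that knows nothing about $X_i$. Let each $X_i$ be i.i.d.\ Bernoulli with success probability $p_n = 1/n$, i.e.\ $X_i\in\{0,1\}$ (bounded). The noise is chosen so that $y_i$ is \emph{independent} of $X_i$. Take $\eta_i = Z_i - X_i$ with $Z_i$ an independent fair coin or a fixed-distribution variable. A cleaner choice is $y_i = X_i+\eta_i$ with $\eta_i$ uniform on a huge interval $[-M_n,M_n]$; the total-variation distance between the laws of $y_i$ given $X_i=0$ and given $X_i=1$ is then at most $1/M_n$.

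If the noise must be i.i.d.\ and independent of $X$, I would use the second construction, letting $M_n\to\infty$ and noting that it remains sub-Gaussian for each fixed $n$. Alternatively, if the proposition allows a general ``observation noise model'', the simplest choice is $y_i \equiv 0$ or $y_i=X_i+\eta_i$ with $\eta_i=-X_i$, which makes the observations uninformative. The first step is therefore to fix the construction: take $X_i\sim\mathrm{Bernoulli}(1/n)$ and $y_i$ independent of $(X_j)_j$, for instance via $\eta_i := W_i - X_i$ with $W_i$ i.i.d.\ independent of everything. This is a legitimate noise model for the proposition since it only asserts existence.

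The second step computes the benchmark: $\mathbb{E}[\max_i X_i] = 1-(1-1/n)^n \to 1-1/e>0$.

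The third step bounds any policy. Let $\mathcal{F}_i=\sigma(y_1,\dots,y_i)$, possibly together with independent randomization and independent features. Since $\tau$ is a stopping time with respect to a filtration independent of $(X_j)_j$, the event $\{\tau=i\}$ is independent of $X_i$, so
\[
\mathbb{E}[X_\tau]=\sum_{i=1}^n \mathbb{E}[X_i\mathbf{1}\{\tau=i\}]=\sum_{i=1}^n \mathbb{P}(\tau=i)\,\frac1n\le \frac1n.
\]
The ratio is therefore at most $\frac{1/n}{1-(1-1/n)^n}\to 0$.

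The main obstacle is reconciling the construction with the standing assumption that the noise is i.i.d.\ sub-Gaussian and independent of the rewards. If independence is required, I would use $\eta_i\sim\mathrm{Unif}[-M_n,M_n]$ with $M_n = n^2$ (sub-Gaussian for each $n$). The bound then becomes a coupling argument: the joint law of $(y_1,\dots,y_n)$ under the true model is within total variation $n/M_n\le 1/n$ of a model in which the $y$'s are independent of the $X$'s. Since $X_\tau\le 1$, this gives $\mathbb{E}[X_\tau]\le 1/n + n/M_n\to 0$, and the same conclusion follows.
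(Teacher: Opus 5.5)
You have a genuine gap. Your argument works only because the observations are made (asymptotically) independent of the rewards, and both ways you do this leave the model the proposition is about. The choice $\eta_i=W_i-X_i$ (or $y_i\equiv 0$) is not observation noise in the paper's sense. It is correlated with $X_i$ and not conditionally mean-zero given it; it simply deletes the signal. The fallback $\eta_i\sim\mathrm{Unif}[-n^2,n^2]$ is independent of $X$, but its sub-Gaussian constant is of order $n^2$. In the paper's setting, $\eta_i$ is i.i.d.\ $\sigma$-sub-Gaussian, independent of the rewards, with $\sigma>0$ a fixed constant, and the paper's proof uses $\mathcal N(0,\sigma^2)$ noise with constant $\sigma$. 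Under the loose literal wording (``an observation noise model'') your computation is correct. But it only shows that uninformative or diverging noise kills every guarantee. The paper's point is that collapse already occurs at a fixed noise level, with $\mathbb{E}[\max_i X_i]$ bounded away from zero.

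Your total-variation route also cannot be pushed to fixed $\sigma$. With Gaussian noise, the total variation between $\mathcal N(0,\sigma^2)$ and $\mathcal N(1,\sigma^2)$ is a positive constant. So even the sharpest form of your coupling bound, $\mathbb{E}[\sum_i X_i]$ times that constant, stays $\Theta(1)$. With fixed-scale bounded noise such as $\mathrm{Unif}[-M,M]$, the statement is actually false. Since $y_i>M$ certifies $X_i=1$, stopping at the first such index gives $\mathbb{E}[X_\tau]\to 1-e^{-1/(2M)}>0$ (for $M\ge 1/2$ and $p_n=1/n$).

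The missing idea is a posterior bound that exploits the unbounded but light Gaussian tail:
\begin{itemize}
\item For $\tau$ depending only on $Y$ (and independent randomization), $\mathbb{E}[X_\tau\mid Y]\le\max_i\Pr(X_i=1\mid Y_i)$.
\item With $p_n=c/n$, Bayes' rule gives $\Pr(X_i=1\mid Y_i=y)\le\frac{p_n}{1-p_n}e^{(2y-1)/(2\sigma^2)}$.
\item With probability at least $1-n^{-3}$, all $Y_i\le 1+\sigma\sqrt{8\log n}$.
\end{itemize}
Hence every posterior is at most $\frac{2c}{n}e^{O(\sqrt{\log n})}\to 0$, while $\mathbb{E}[\max_i X_i]\to 1-e^{-c}$. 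The observations carry non-negligible information in total, but no single index ever has a non-vanishing posterior. That is what defeats every stopping rule, and it is invisible to a total-variation comparison.
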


The trivial outcome in Proposition~\ref{prop:difficulty} explains why \citet{assaf1998statistical} studied a Bayesian version of the prophet inequality rather than the classical one ($\mathbb{E}[\max_{i\in[n]}X_i]$) under the noisy observation. As Proposition~\ref{prop:difficulty} shows, even with full knowledge of the reward distribution, no algorithm can avoid this collapse to a trivial competitive ratio. To overcome this fundamental challenge—both in targeting the classical prophet under noisy observation and in the presence of an unknown latent parameter in the reward distribution—we later impose a mild non-degeneracy condition on reward~scaling.





For notational convenience, let $
\lambda=\lambda_{\min}\!\left(\mathbb{E}_{x\sim \mathcal{D}_x}[xx^\top]\right),$
the minimum eigenvalue of the covariance matrix of the feature distribution.
Without loss of generality, we restrict attention to the case $\lambda>0$, ensuring non-degeneracy of the feature covariance. Under this notation, we can now state our main guarantee on the competitive ratio (see Appendix~\ref{app:thm_ETD} for the proof).
\begin{theorem}\label{thm:ETD} The stopping policy $\tau$ of
Algorithm~\ref{alg:ETD} with $\ell_n=o(n)$, $\ell_n=\omega(\frac{L\log d}{\lambda})$, and a constant $\beta>0$, achieves an asymptotic competitive ratio of  \begin{align*}
\liminf_{n\to\infty}\frac{\mathbb{E}[X_\tau]}{\mathbb{E}[\max_{i\in[n]}X_i]}
\;\ge\;
1-\frac{1}{e}
-\limsup_{n\to\infty}\varepsilon_n,
\end{align*}
where
\[
\varepsilon_n
:=
\mathcal{O}\!\left(
\frac{1}{\mathbb{E}\!\left[\max_{i\in[n]} X_i\right]}
\sqrt{
\frac{L (\sigma^2 d + S)\log(Ln)}
{\lambda\,\ell_n}
}
\right).
\]
\end{theorem}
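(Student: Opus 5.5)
The plan is to reduce the LCB-threshold rule to a classical single-threshold quantile rule on the true rewards, up to an additive error of order the confidence width. The argument runs on a good event $\mathcal{E}$ and has three steps.

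\textbf{Step 1 (the good event).} Let $\mathcal{E}$ be the intersection of two events. The first is the self-normalized confidence event of \citet{abbasi2011improved}, namely $\|\hat\theta-\theta\|_V\le \sigma\sqrt{d\log(n+n\ell_nL/d\beta)}+\sqrt{S\beta}$. The second is the matrix-concentration event $\lambda_{\min}(V)\ge \lambda\ell_n/2$. By matrix Chernoff with $\|x\|^2\le L$, the second event fails with probability at most $d\exp(-c\lambda\ell_n/L)$, which tends to $0$ because $\ell_n=\omega(L\log d/\lambda)$. The confidence level is chosen so that the first event fails with probability $O(1/n)$. On $\mathcal{E}$, Cauchy--Schwarz gives, for every $z$ in the support,
\[
z^\top\theta-2\xi(z)\le Z^{\mathrm{LCB}}\le z^\top\theta,
\qquad
\xi(z)\le \sqrt{2L/(\lambda\ell_n)}\,\bigl(\sigma\sqrt{d\log(\cdot)}+\sqrt{S\beta}\bigr)=:\delta_n .
\]
Up to constants, $\delta_n$ matches the quantity inside $\varepsilon_n$. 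Since $\mathcal{E}$ depends only on the first $\ell_n$ stages, the decision-phase features remain i.i.d.\ given $\mathcal{E}$.

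\textbf{Step 2 (standard quantile argument).} Condition on the exploration data and on $\mathcal{E}$. The decision phase is then a known-threshold problem on the i.i.d.\ variables $W_i=X_i^{\mathrm{LCB}}$, with $\mathbb{P}(W\ge\alpha)=1/n$ (assume a continuous distribution, or randomize ties). The number of decision stages is $m=n-\ell_n=n(1-o(1))$. The usual decomposition gives
\[
\mathbb{E}[X_\tau]\ge \mathbb{E}[W_\tau]\ge \bigl(1-(1-1/n)^m\bigr)\alpha+\sum_{i}(1-1/n)^{i-1}\,\mathbb{E}[(W_i-\alpha)^+].
\]
This is at least $(1-(1-1/n)^m)\bigl(\alpha+n\,\mathbb{E}[(W-\alpha)^+]\bigr)$, and $1-(1-1/n)^m\to 1-1/e$ because $\ell_n=o(n)$. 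For the upper bound, the prophet-side inequality gives $\mathbb{E}\max_i W_i\le \alpha+n\mathbb{E}[(W-\alpha)^+]$. By Step 1, $X_i\le W_i+2\delta_n$, so $\mathbb{E}\max_i X_i\le \alpha+n\mathbb{E}[(W-\alpha)^+]+2\delta_n$. Combining the two bounds,
\[
\mathbb{E}[X_\tau\mid\mathcal{E}]\ge (1-1/e-o(1))\bigl(\mathbb{E}\max X_i-2\delta_n\bigr).
\]
Using the LCB here matters: the lower bound $W\le X$ keeps the gambler's payoff comparable to $W$, while the shift by at most $2\delta_n$ controls the prophet's value.

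\textbf{Step 3 (handling the bad event).} The payoff is nonnegative, so $\mathbb{E}[X_\tau]\ge \mathbb{P}(\mathcal{E})\,\mathbb{E}[X_\tau\mid\mathcal{E}]$. This step is the main obstacle. The prophet's value under $\mathcal{E}$ is not literally $\mathbb{E}\max X_i$, because $\mathcal{E}$ is determined by the first $\ell_n$ samples. I would handle this by applying the Step 2 bound to the max over the decision stages only, which are independent of $\mathcal{E}$. I would then use
\[
\mathbb{E}\max_{i>\ell_n}X_i\ge (1-\ell_n/n)\,\mathbb{E}\max_{i\le n}X_i,
\]
which holds by exchangeability: the argmax lies among the last $m$ indices with probability $m/n$. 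Since $\mathbb{P}(\mathcal{E})\to1$, dividing by $\mathbb{E}\max X_i$ gives
\[
\frac{\mathbb{E}[X_\tau]}{\mathbb{E}\max_i X_i}\ge (1-1/e)(1-o(1))-\frac{O(\delta_n)}{\mathbb{E}\max_i X_i}.
\]
Taking $\liminf$ yields the claim with $\varepsilon_n=O(\delta_n/\mathbb{E}\max_i X_i)$, absorbing $\log(n\ell_nL)$ into $\log(Ln)$. A secondary technical point is the atoms of $W$, which make an exact $1-1/n$ quantile unavailable. I would treat them by randomized tie-breaking, or by noting that picking the largest $\alpha$ with $\mathbb{P}(W\ge\alpha)\ge1/n$ only helps the lower bound.
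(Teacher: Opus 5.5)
Your proof is correct. It is organized differently from the paper's in two places.

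\textbf{Threshold comparison.} The paper works with the oracle quantile $\alpha^*$ of the true reward. It proves the sandwich $\alpha^*-2g\le\alpha\le\alpha^*$ (Lemma~\ref{lem:threshold_bd}). Combining this with $X_i^{\mathrm{LCB}}\ge X_i-2g$, it bounds $\alpha+n\mathbb{E}[(X_i^{\mathrm{LCB}}-\alpha)^+\mid\mathcal{H}_{\ell_n}]$ from below by $\alpha^*-2h+n\mathbb{E}[(X_i-2h-\alpha^*)^+]$. Only then does it apply the prophet-side inequality to the true rewards, losing $4h$.

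You instead run the whole single-threshold argument on the LCB variables $W$, for which $\alpha$ is exactly the $1/n$ upper quantile. You pass to true rewards only at the end, through the pointwise coupling $W\le X\le W+2\delta_n$. Since $\mathbb{E}\max_i W_i\le\alpha+n\mathbb{E}[(W-\alpha)^+]$ holds for every $\alpha$, you never need to locate $\alpha$ relative to $\alpha^*$. The threshold-sandwich lemma therefore disappears, and the additive loss improves from $4h$ to $2\delta_n$.

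\textbf{Conditioning on $\mathcal{E}$.} The event $\mathcal{E}$ is determined by the exploration data. The paper handles this with i.i.d.\ ghost samples $Z_1,\dots,Z_n\sim\mathcal{D}$, independent of $\mathcal{H}_{\ell_n}$. It writes $n\mathbb{E}[(X_i-c)^+\mid\mathcal{H}_{\ell_n}]=\mathbb{E}[\sum_{s\in[n]}(Z_s-c)^+]$, which yields the full prophet value $\mathbb{E}[\max_s Z_s]=\mathbb{E}[\max_{i\in[n]}X_i]$ directly.

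You compare with the decision-stage prophet and pay a factor $1-\ell_n/n\to1$ via exchangeability, which is valid since $X_i\ge0$. Both routes work. Note, however, that your own prophet-side inequality over $n$ i.i.d.\ copies of $W$ already uses $n$ ghost copies, because there are only $m$ decision stages. Coupling $X$ and $W$ on those same copies, as the paper does, would have removed the \emph{main obstacle} of Step~3 at no cost.

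\textbf{Correction on atoms.} Randomized tie-breaking at $W=\alpha$, tuned so that the acceptance probability is exactly $1/n$, is the right fix. Taking the largest $\alpha$ with $\mathbb{P}(W\ge\alpha)\ge1/n$ does not, however, ``only help.'' With acceptance probability $q>1/n$, the gambler's expansion becomes $(1-(1-q)^m)\alpha+\frac{1-(1-q)^m}{q}\mathbb{E}[(W-\alpha)^+]$. The coefficient $\frac{1-(1-q)^m}{q}$ is decreasing in $q$, so it falls strictly below the $n(1-(1-1/n)^m)$ that your prophet-side bound needs. For example, take $W\in\{0,b\}$ with $\mathbb{P}(W=b)=n^{-2}$. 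That rule sets $\alpha=0$, stops at the first decision stage, and earns $b/n^2$, while $\mathbb{E}\max_i W_i$ is of order $b/n$. The paper sidesteps the issue by assuming that the exact quantile in \eqref{eq:ETD_threshold} exists.
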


This result highlights the critical role of the optimal value $\mathrm{OPT}:=\mathbb{E}[\max_{i\in [n]}X_i]$ in determining the competitive ratio under noisy learning.
As shown in Proposition~\ref{prop:difficulty}, without further structural assumptions, the competitive ratio can collapse to zero. 
To circumvent this issue, we impose a non-degeneracy condition on reward scaling, specifically on the growth of $\mathrm{OPT}$,  which ensures learnability under noise and allows us to recover the sharp bound established in Theorem~\ref{thm:ETD}.

\begin{corollary}\label{cor:ETD} We set $\ell_n=\tfrac{L(\sigma^2 d+S)}{\lambda}\, f(n)\log(Ln)$ for some function $f(n)$ (e.g., $f(n) = \Theta(\log^p n)$ for $p>0$, or $\Theta(n^q)$ for $0<q<1$) satisfying $\ell_n = o(n)$. If $\mathrm{OPT}=\omega(1/\sqrt{f(n)})$, then the stopping policy $\tau$ of 
Algorithm~\ref{alg:ETD} achieves an asymptotic competitive ratio of    \begin{align*}
\liminf_{n\to\infty}\frac{\mathbb{E}[X_\tau]}{\mathbb{E}[\max_{i\in[n]}X_i]}
\ge 1-\frac{1}{e}.
\end{align*}
\end{corollary}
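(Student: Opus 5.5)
The corollary follows by substituting the prescribed exploration length into Theorem~\ref{thm:ETD} and showing that the error term vanishes. I will first check that this choice of $\ell_n$ meets the theorem's hypotheses, then compute $\varepsilon_n$ explicitly.

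\textbf{Hypotheses.} The condition $\ell_n=o(n)$ is assumed in the statement. For $\ell_n=\omega(L\log d/\lambda)$, write
\[
\frac{\ell_n}{L\log d/\lambda}=\frac{(\sigma^2 d+S)\,f(n)\log(Ln)}{\log d}.
\]
With $d,\sigma,S$ fixed, this diverges as soon as $f(n)\log(Ln)\to\infty$. This holds for the examples given ($\log^p n$ and $n^q$). More generally, since $\log(Ln)\to\infty$, it holds whenever $f(n)$ is bounded below by a positive constant. That lower bound is implicit anyway: if $f(n)\to 0$, the condition $\mathrm{OPT}=\omega(1/\sqrt{f(n)})$ would force $\mathrm{OPT}\to\infty$, which is incompatible with bounded rewards, since $X_i\le\sqrt{LS}$ by Cauchy--Schwarz. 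I will record this mild requirement on $f$ explicitly.

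\textbf{Error term.} Substituting $\ell_n$ into $\varepsilon_n$, the factors $L(\sigma^2 d+S)\log(Ln)/\lambda$ in the numerator cancel exactly against the same factors in $\ell_n$. This gives
\[
\varepsilon_n=\mathcal{O}\!\left(\frac{1}{\mathrm{OPT}}\sqrt{\frac{1}{f(n)}}\right)=\mathcal{O}\!\left(\frac{1}{\mathrm{OPT}\sqrt{f(n)}}\right).
\]
The hypothesis $\mathrm{OPT}=\omega(1/\sqrt{f(n)})$ means exactly that $\mathrm{OPT}\sqrt{f(n)}\to\infty$. Hence $\varepsilon_n\to 0$, and so $\limsup_{n\to\infty}\varepsilon_n=0$.

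\textbf{Conclusion.} Plugging this into Theorem~\ref{thm:ETD} yields
\[
\liminf_{n\to\infty}\frac{\mathbb{E}[X_\tau]}{\mathbb{E}[\max_{i\in[n]}X_i]}\ge 1-\frac1e.
\]

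The argument is essentially bookkeeping. The only delicate point is the second hypothesis of Theorem~\ref{thm:ETD}, which requires $f(n)\log(Ln)$ to diverge. I will handle it as above: it holds automatically for the example choices of $f$, and in general it follows from $f$ being bounded away from zero, which the $\mathrm{OPT}$ growth condition and the boundedness of rewards already force.
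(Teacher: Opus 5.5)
Your proposal is correct and matches the paper, which gives no separate proof and simply reads the corollary off Theorem~\ref{thm:ETD} via the same cancellation $\varepsilon_n=\mathcal{O}\bigl(1/(\mathrm{OPT}\sqrt{f(n)})\bigr)\to 0$. Your check of $\ell_n=\omega(L\log d/\lambda)$ is a useful addition that the paper leaves implicit, with two caveats: the boundedness argument assumes $L$ and $S$ are fixed, whereas the paper allows $L$ to grow with $n$; and when $\mathrm{OPT}\le\sqrt{LS}$ is bounded, the hypothesis in fact forces $f(n)\to\infty$, which is cleaner than only ruling out $f(n)\to 0$.
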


The non-degeneracy condition of $\mathrm{OPT}=\omega(1/\sqrt{f(n)})$ in Corollary~\ref{cor:ETD} is mild in practice.  
For example, by using $f(n) = n^{2/3}$ for setting $\ell_n$, the requirement is satisfied in most applications since $\mathrm{OPT}$ typically remains bounded away from zero.  
In particular, it suffices that $\mathrm{OPT} \ge C$ for some constant $C > 0$ and all sufficiently large $n$. 

Our competitive ratio of $1-1/e$ matches that of \citet{hill1982comparisons} in the known i.i.d.\ setting and that of \citet{correa2019prophet} in the unknown i.i.d.\ setting but with $\Theta(n)$ additional offline reward samples.
Without such samples, only a $1/e$ ratio can be guaranteed \citep{correa2019prophet}, which is strictly weaker than our result.  
Moreover, because rewards in our setting are observed only through noisy realizations, these prior guarantees no longer apply. Also note that $1-1/e$ is the best possible competitive ratio attainable by single quantile-threshold policies \citep{correa2019recent,correa2017posted}, and our algorithm provides a close approximation to this class of policies.

Importantly, while \citet{correa2019prophet} show that $1/e$ is optimal for unknown distributions without sufficiently many offline reward samples of $\Omega(n)$, we demonstrate that by exploiting feature information under structural assumptions, the sharp bound of $1 - 1/e$ can in fact be achieved.  Moreover, our analysis accommodates distributions whose support grows with the horizon $n$ (e.g., $L = \sqrt{n}$ when setting $f(n) = \log n$ in Corollary~\ref{cor:ETD}), so that both the support and the variance of $\mathcal{D}$ may diverge as $n \to \infty$. This highlights that our framework is not restricted to the fixed distributional domains considered in \citet{goldenshluger2022optimal}, but also applies to settings where distributions may evolve with the horizon.


\subsection{$\varepsilon$-Greedy with LCB Thresholding}

While the Explore-then-Decide method achieves a sharp competitive ratio, its deterministic separation between exploration and decision phases—and the fact that exploration is confined to the early stages—limits its practicality in  applications where exploration spread across time is preferable, such as online advertising or sequential recommendation systems. To address this, we propose an $\varepsilon$-Greedy approach (Algorithm~\ref{alg:greedy}) that selects decision stages uniformly at random over the time horizon. The details of the strategy are described as follows.


\paragraph{Randomized Exploration.} At each stage $i \in [n]$, we draw a Bernoulli random variable $b_i \sim \mathrm{Bernoulli}(\varepsilon)$, 
where $\varepsilon = \sqrt{\ell_n / n}$ with setting $\ell_n = o(n)$.  
\begin{itemize}
    \item If $b_i = 1$, we perform exploration by observing the noisy reward $y_i$ and feature $x_i$, and update, $
    \hat{\theta}_i = V_i^{-1} \sum_{t\in \mathcal{I}_i} y_t x_t, $ where $ 
    V_i = \sum_{t\in \mathcal{I}_i} x_t x_t^\top + \beta I_d$ for a constant $\beta>0$.
    \item If $b_i = 0$, we enter the decision phase and determine whether to stop based on a dynamic threshold.
\end{itemize}

Unlike the Explore-then-Decide method, here the exploration rounds are distributed over the entire horizon.  
Consequently, $\hat{\theta}_i$ and $V_i$ are updated continuously, which in turn affects both the LCB and the threshold dynamically, described below.

\paragraph{Lower Confidence Bound.}  
We redefine the LCB for $X_i = x_i^\top \theta$ as  
\begin{align}
X_i^{\mathrm{LCB}} = x_i^\top \hat{\theta}_i - \xi_i(x_i), \label{eq:greedy_lcb}
\end{align}
where  $
\xi_i(x_i) :=  \sqrt{x_i^\top V_i^{-1} x_i}\,\bigl(\sigma\sqrt{d \log(n+n|\mathcal{I}_i|L/d\beta)} + \sqrt{S\beta}\bigr).$
\paragraph{Decision with LCB Threshold.}  
 Using a CDF of 
$\mathbb{P}_{z \sim \mathcal{D}_x}(Z_i^{\mathrm{LCB}} \le \alpha \mid \hat{\theta}_i, V_i, \mathcal{I}_i)$ 
where $Z_i^{\mathrm{LCB}} = z^\top \hat{\theta}_i - \xi_i(z)$,
for each $i \in [n]$, we set the dynamic threshold $\alpha_i$ such that  
\begin{align}\label{eq:adaptive_threshold}
\mathbb{P}_{z \sim \mathcal{D}_x}(Z_i^{\mathrm{LCB}} \le \alpha_i \mid \hat{\theta}_i, V_i,\mathcal{I}_i) = 1 - \frac{1}{n}.
\end{align}
The algorithm stops at stage $i$ if $X^{\mathrm{LCB}}_{i} \ge \alpha_i$, in which case we set $\tau = i$.  
Unlike Explore-then-Decide, this procedure employs a dynamic threshold.  
Recall  $
\lambda=\lambda_{\min}\!\left(\mathbb{E}_{x\sim \mathcal{D}_x}[xx^\top]\right)$. Then, the algorithm satisfies the following theorem (see Appendix~\ref{app:thm_greedy} for the proof).



 



\setcounter{AlgoLine}{0}

\begin{algorithm}[t]
  \caption{$\varepsilon$-Greedy with LCB Thresholding (\texttt{$\varepsilon$-Greedy-LCBT})}
  \label{alg:greedy}
  \KwIn{Bernoulli parameter $\varepsilon$; regularization parameter $\beta$ }
  \KwOut{Stopping time $\tau$}

  \For{$i = 1, \dots, n$}{
    Sample $b_i \sim \mathrm{Bernoulli}(\varepsilon)$ \;

    \If{$b_i = 1$}{
        $\mathcal{I}_i \leftarrow \mathcal{I}_{i-1} \cup \{i\}$ \;
        
        Observe $(x_i, y_i)$ \;
        
        $V_i \leftarrow \sum_{t \in \mathcal{I}_i} x_t x_t^\top + \beta I_d$;
        $\hat{\theta}_i \leftarrow V_i^{-1} \sum_{t \in \mathcal{I}_i} y_t x_t$ \;
    }
    \Else{
        $\mathcal{I}_i \leftarrow \mathcal{I}_{i-1}$, $\hat{\theta}_i \leftarrow \hat{\theta}_{i-1}$, $V_i \leftarrow V_{i-1}$ 
        
        Observe $(x_i, y_i)$
        
        Compute $X^{\mathrm{LCB}}_i$ from \eqref{eq:greedy_lcb} and $\alpha_i$ using \eqref{eq:adaptive_threshold}
        
        \If{$X^{\mathrm{LCB}}_i \ge \alpha_i$}{
            Stop with $\tau \leftarrow i$ \;
        }
    }
  }
\end{algorithm}

\begin{theorem}\label{thm:greedy}
The stopping policy $\tau$ of Algorithm~\ref{alg:greedy} with  $\varepsilon=\sqrt{\ell_n/n}$, $\ell_n=o(n)$,
$\ell_n=\Omega(\max\{\frac{L\log d\log n}{\lambda},\log n\})$, and a constant $\beta>0$, achieves an asymptotic competitive ratio of    \begin{align*}
\liminf_{n\to\infty}\frac{\mathbb{E}[X_\tau]}{\mathbb{E}[\max_{i\in[n]}X_i]}
\;\ge\;
1-\frac{1}{e}
-\limsup_{n\to\infty}\varepsilon_n,
\end{align*}
where
\[
\varepsilon_n
:=
\mathcal{O}\!\left(
\frac{1}{\mathbb{E}\!\left[\max_{i\in[n]} X_i\right]}
\sqrt{
\frac{L (\sigma^2 d + S)\log(Ln)}
{\lambda\,\ell_n}
}
\right).
\]
Furthermore, by setting $
\ell_n=\frac{L(\sigma^2 d+S)}{\lambda}\, f(n)\log (Ln)$
for some function $f(n)$ (e.g., $f(n) = \Theta(\log^p n)$ for $p>0$, or $\Theta(n^q)$ for $0<q<1$) satisfying $\ell_n=o(n)$, if $\mathrm{OPT}=\omega(1/\sqrt{f(n)})$, then 
\[
\liminf_{n\to\infty}\frac{\mathbb{E}[X_\tau]}{\mathbb{E}[\max_{i\in[n]}X_i]}
\;\;\ge\;\; 1-\frac{1}{e}.
\]
\end{theorem}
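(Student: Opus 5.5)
We follow the structure of the proof of Theorem~\ref{thm:ETD} and isolate the two new ingredients. First, the estimator $\hat\theta_i$ and the threshold $\alpha_i$ change over time. Second, decision stages are interleaved with random exploration stages. We work on a good event $\mathcal{E}$ built from three pieces.
\begin{itemize}
\item[(i)] The self-normalized confidence bound of \citet{abbasi2011improved} holds uniformly over all $i$: $|x^\top(\hat\theta_i-\theta)|\le \xi_i(x)$ for every $x$. With the $\log n$ factor inside $\xi_i$, this fails with probability $O(1/n)$.
\item[(ii)] A Chernoff bound on $|\mathcal{I}_i|$ gives $|\mathcal{I}_i|\ge \varepsilon i/2$ for all $i\ge i_0:=\sqrt{\ell_n n}$. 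This is where $\ell_n=\Omega(\log n)$ is used, since $\varepsilon i_0=\ell_n$.
\item[(iii)] A matrix Chernoff bound gives $\lambda_{\min}(V_i)\ge |\mathcal{I}_i|\lambda/2$ once $|\mathcal{I}_i|\gtrsim L\log d\log n/\lambda$, after a union bound over $i$. This uses the condition $\ell_n=\Omega(L\log d\log n/\lambda)$.
\end{itemize}
On $\mathcal{E}$, for every $i\ge i_0$ and every $z$ in the support of $\mathcal{D}_x$,
\[
0\le z^\top\theta - Z_i^{\mathrm{LCB}}\le 2\xi_i(z)\le \delta_n:=O\Bigl(\sqrt{L(\sigma^2d+S)\log(Ln)/(\lambda\ell_n)}\Bigr).
\]
Thus every LCB is a uniformly $\delta_n$-accurate underestimate after stage $i_0$. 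Stages before $i_0$ form a vanishing fraction $o(n)$ of the horizon, since $i_0/n=\sqrt{\ell_n/n}\to0$. We simply discard any reward collected there.

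Next we compare with the ideal quantile rule. Conditioned on the history, each decision stage $i$ stops with probability exactly $1/n$, because $\alpha_i$ is the $(1-1/n)$-quantile of $Z_i^{\mathrm{LCB}}$ under $\mathcal{D}_x$ and $x_i\sim\mathcal{D}_x$ is independent of $(\hat\theta_i,V_i,\mathcal{I}_i)$. We assume $\mathcal{D}_x$ is atomless or use randomized tie-breaking. The number of decision stages among $\{i_0,\dots,n\}$ is $(1-\varepsilon)(n-i_0)=n(1-o(1))$ in expectation. Hence the probability of stopping somewhere in $[i_0,n]$ tends to $1-(1-1/n)^{n(1-o(1))}\to 1-1/e$. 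The probability of stopping before $i_0$ is at most $i_0/n\to0$.

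To lower bound $\mathbb{E}[X_\tau]$, write $X_\tau\ge X_\tau^{\mathrm{LCB}}$ on $\mathcal{E}$. Given that stage $i$ stops, the distribution of $X_i^{\mathrm{LCB}}$ is the top-$1/n$ tail of $Z_i^{\mathrm{LCB}}$, so its conditional mean is $n\,\mathbb{E}[Z_i^{\mathrm{LCB}}\mathbb{1}\{Z_i^{\mathrm{LCB}}\ge\alpha_i\}]$. The top-$1/n$ tail mean of $Z_i^{\mathrm{LCB}}$ is at least the top-$1/n$ tail mean of $X=z^\top\theta$ minus $\delta_n$, because $Z_i^{\mathrm{LCB}}\ge X-\delta_n$ pointwise. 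We denote the latter tail mean by $n\,\mathbb{E}[X\mathbb{1}\{X\ge q\}]$, where $q$ is the true $(1-1/n)$-quantile. Summing over decision stages with the survival probabilities $(1-1/n)^{k}$ gives
\[
\mathbb{E}[X_\tau]\ge (1-1/e-o(1))\bigl(n\,\mathbb{E}[X\mathbb{1}\{X\ge q\}]-\delta_n\bigr)-O(\Pr(\mathcal{E}^c))\cdot(\text{bounded term}).
\]
We then invoke the standard fact $n\,\mathbb{E}[X\mathbb{1}\{X\ge q\}]\ge \mathbb{E}[\max_i X_i]$: the prophet's value is at most the best value obtainable with total selection mass $1$, which is this quantity. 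Dividing by OPT gives the ratio $1-1/e-O(\delta_n/\mathrm{OPT})$.

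The $\mathcal{E}^c$ contribution is controlled by $X\le\sqrt{LS}$ times $O(1/n)$. This term is absorbed into $\varepsilon_n$ because $\sqrt{LS}/(n\,\mathrm{OPT})$ is dominated by $\delta_n/\mathrm{OPT}$. For the second claim, substituting $\ell_n$ makes $\delta_n=O(1/\sqrt{f(n)})$, so $\varepsilon_n\to0$ when $\mathrm{OPT}=\omega(1/\sqrt{f(n)})$.

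The main obstacle is the adaptivity of $\alpha_i$: the event $\{\tau=i\}$ depends on past LCB comparisons made with different thresholds. We handle it by conditioning on the filtration $\mathcal{F}_{i-1}$ together with $b_i$. At stage $i$ the pair $(\hat\theta_i,V_i)$ is $\mathcal{F}_{i-1}$-measurable and $x_i$ is fresh. This gives the exact per-stage hazard $1/n$ and the per-stage tail-mean bound simultaneously, so the product form of the survival probability holds despite the changing thresholds. We must also check that the bound $2\xi_i(z)\le\delta_n$ holds uniformly in $i\ge i_0$ without losing more than constants. Since $|\mathcal{I}_i|$ only grows, the worst case is $i=i_0$, where $|\mathcal{I}_{i_0}|\approx\ell_n$.
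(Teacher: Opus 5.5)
Your proposal is correct and is essentially the paper's proof. The shared structure is:
\begin{itemize}
\item a good event built from the time-uniform self-normalized bound, a Chernoff bound on $|\mathcal{I}_{a_n}|$, and a matrix Chernoff bound at $a_n=\lceil\sqrt{n\ell_n}\rceil$;
\item an exact hazard of $1/n$ at each decision stage;
\item the per-stage bound $\alpha_i+n\,\mathbb{E}[(X_i^{\mathrm{LCB}}-\alpha_i)^+]\ge \alpha^*-2g+n\,\mathbb{E}[(X_i-2g-\alpha^*)^+]\ge \mathrm{OPT}-O(g)$, with $g\lesssim\delta_n$.
\end{itemize}
The paper gets the third item from the threshold sandwich $\alpha^*-2g\le\alpha_i\le\alpha^*$; this is your tail-mean comparison together with $n\,\mathbb{E}[X\mathbbm{1}\{X\ge q\}]\ge\mathrm{OPT}$, written out explicitly. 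Also, the paper uses $V_i\succeq V_{a_n}$ rather than union bounds over $i$, which avoids needing large constants in the $\Omega(\cdot)$ conditions.

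One slip: $\Pr(\mathcal{E}^c)$ is not $O(1/n)$, because the terms $d e^{-\lambda\ell_n/(16L)}$ and $e^{-\ell_n/12}$ can dominate. So you should not charge the bad event additively through $\sqrt{LS}$, since that term does not appear in $\varepsilon_n$. Because $X_i\ge 0$, just discard the bad event and carry the factor $\Pr(\mathcal{E})\to 1$ multiplicatively, as the paper does.
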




Notably, the $\varepsilon$-Greedy approach achieves the same competitive ratio as established for the Explore-then-Decide method in Corollary~\ref{cor:ETD}, while ensuring uniformly random decision stages.

\section{Non-Identical Distributions}
In this section, we consider the setting where the distributions $\mathcal{D}_i$ are not identical across $i \in [n]$.  In what follows,
we propose algorithms and analyze their competitive ratios.

\subsection{Explore-then-Decide with LCB Thresholding}
We build on the Explore-then-Decide framework in Algorithm~\ref{alg:ETD}, adapting the thresholding policy accordingly. In the initial exploration phase of length $\ell_n$, we collect data and estimate $
\hat{\theta} =V^{-1} {\sum_{t=1}^{ \ell_n} y_t x_t},$
where $V=\sum_{t=1}^{\ell_n}x_tx_t^\top+\beta I$ for a constant $\beta>0$. In the subsequent decision phase, we apply LCB-based thresholding for non-identical distributions, as described below.

\paragraph{Decision with LCB Threshold.}

Let $\{z_s\}_{s=\ell_n+1}^n$ be independent random vectors with
$z_s \sim \mathcal{D}_{x,s}$. We define the threshold:
\begin{align}
\alpha &:= \frac{1}{2} \mathbb{E}\left[ \max_{s \in [\ell_n+1,n]} z_s^\top\hat{\theta}  \mid \hat{\theta} \right]. \label{eq:threshold_noniid}
\end{align}
Recall the lower confidence bound for $X_i$ in the Explore-then-Decide framework: $
X_i^{\mathrm{LCB}}=x_i^\top\hat{\theta}-\xi(x_i),$
 where  $\xi(x_i) := \sqrt{x_i^\top V^{-1}x_i}(\sigma\sqrt{d\log(n+n\ell_nL/d\beta)}+\sqrt{S\beta})$. Then, the algorithm stops at stage $i$ if $X^{\mathrm{LCB}}_{i} \ge \alpha$. 



 For notational convenience, let $
\lambda'=\min_{i\in[n]}\lambda_{\min}\left(\mathbb{E}_{x\sim \mathcal{D}_{x,i}}[xx^\top]\right)$. Then, the algorithm satisfies with the following theorem (see Appendix~\ref{app:thm_noniid} for  the proof).











 


\begin{theorem}\label{thm:ETD_noniid}
Consider the stopping policy $\tau$ of  Algorithm~\ref{alg:ETD} with $\ell_n=o(n)$, $\ell_n=\omega(\frac{L\log d}{\lambda'})$, and a constant $\beta>0$,
where the threshold value is chosen according to \eqref{eq:threshold_noniid}. 
Then
\[\liminf_{n\to \infty}
\frac{\mathbb{E}[X_\tau]}{\mathbb{E}\!\left[\max_{i\in\{\ell_n+1,\dots,n\}} X_i\right]}
\;\ge\;
\frac{1}{2}-\limsup_{n\to\infty}\varepsilon_n,
\]
where
\[
\varepsilon_n
:=
\mathcal{O}\!\left(
\frac{1}{\mathbb{E}\!\left[\max_{i\in\{\ell_n+1,\dots,n\}} X_i\right]}
\sqrt{\frac{L(\sigma^2 d+S)\log(Ln)}{\lambda'\,\ell_n}}
\right).
\]
Let $\widetilde{\mathrm{OPT}} := \mathbb{E}\!\left[\max_{i\in\{\ell_n+1,\dots,n\}} X_i\right]$ and set $\ell_n=\tfrac{L(\sigma^2 d+S)}{\lambda'}\, f(n)\log (Ln)$ for some function $f(n)$  (e.g., $f(n) = \Theta(\log^p n)$ for $p>0$, or $\Theta(n^q)$ for $0<q<1$) satisfying $\ell_n = o(n)$.  If $\widetilde{\mathrm{OPT}}=\omega(1/\sqrt{f(n)})$, then \begin{align*}
&\liminf_{n\rightarrow  \infty}\frac{\mathbb{E}[X_\tau]}{\mathbb{E}[\max_{i\in[\ell_n+1,n]}X_i]}
\ge\frac{1}{2}.
\end{align*}
\end{theorem}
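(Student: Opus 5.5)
Our plan is to reduce the statement to the classical Samuel-Cahn single-threshold argument run on the decision stages $\{\ell_n+1,\dots,n\}$, with the LCB playing the role of the observed value. The perturbation from estimation error is then charged to a single additive term $\delta_n$.

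\textbf{Step 1 (good event and uniform error).} Condition on the exploration data. By the self-normalized bound of \citet{abbasi2011improved}, the event $\mathcal{E}=\{|x^\top(\hat\theta-\theta)|\le \xi(x)\ \forall x\}$ has probability at least $1-1/n$. On $\mathcal{E}$ we have $X_i-2\xi(x_i)\le X_i^{\mathrm{LCB}}\le X_i$ for every decision stage. Next, a matrix Chernoff bound, using $\ell_n=\omega(L\log d/\lambda')$, gives $\lambda_{\min}(V)\ge c\lambda'\ell_n$ with probability $1-o(1)$. Hence $\xi(x)\le \delta_n/2$ uniformly, where $\delta_n:=\mathcal{O}\big(\sqrt{L(\sigma^2 d+S)\log(Ln)/(\lambda'\ell_n)}\big)$. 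On the complement we only need $X_\tau\ge 0$. Since $\widetilde{\mathrm{OPT}}$ dominates, the failure probability contributes $o(1)$ after normalization, provided we use boundedness $X\le\sqrt{LS}$ together with $\widetilde{\mathrm{OPT}}=\omega(1/\sqrt{f(n)})$. This is the one place where some care is needed.

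\textbf{Step 2 (threshold accuracy).} Let $M=\max_{s>\ell_n} X_s$. On the good event, $|z_s^\top\hat\theta-z_s^\top\theta|\le\delta_n/2$ for all $z_s$ in the support. Therefore $|\alpha-\tfrac12\widetilde{\mathrm{OPT}}|\le\delta_n/4$. The key point is that $\hat\theta$ depends only on the exploration stages, which are independent of the decision stages. So conditional on $(\hat\theta,V)$, the decision $\mathbbm{1}\{X_i^{\mathrm{LCB}}\ge\alpha\}$ is a deterministic function of the independent $x_i$, and $\alpha$ is a fixed threshold.

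\textbf{Step 3 (Samuel-Cahn with perturbation).} We write $X_\tau\ge \alpha\,\mathbbm{1}\{\tau\le n\}+(X_\tau^{\mathrm{LCB}}-\alpha)^+$. Let $p=\mathbb{P}(\tau\le n)$. Summing over $i$ the probability that no earlier stage stopped, times $\mathbb{E}(X_i^{\mathrm{LCB}}-\alpha)^+$, and using independence, we obtain
\[
\mathbb{E}[X_\tau]\ \ge\ \alpha p+(1-p)\,\mathbb{E}\Big[\max_{i>\ell_n}(X_i^{\mathrm{LCB}}-\alpha)^+\Big].
\]
On $\mathcal{E}$ we have $X_i^{\mathrm{LCB}}\ge X_i-\delta_n$, and $\mathbb{E}[\max_i (X_i-\alpha)^+]\ge \widetilde{\mathrm{OPT}}-\alpha$. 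It follows that $\mathbb{E}[X_\tau]\ge \alpha p+(1-p)(\widetilde{\mathrm{OPT}}-\alpha-\delta_n)$. Plugging in $\alpha=\tfrac12\widetilde{\mathrm{OPT}}\pm\delta_n/4$ gives $\mathbb{E}[X_\tau]\ge \tfrac12\widetilde{\mathrm{OPT}}-\mathcal{O}(\delta_n)$. We then divide by $\widetilde{\mathrm{OPT}}$ and take $\liminf$ to get $\tfrac12-\limsup\varepsilon_n$. For the second claim, with the stated $\ell_n$ we have $\delta_n=\mathcal{O}(1/\sqrt{f(n)})$, so $\varepsilon_n\to0$ whenever $\widetilde{\mathrm{OPT}}=\omega(1/\sqrt{f(n)})$.

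\textbf{Main obstacle.} The delicate step is Step 1's handling of the bad event and of the random $\alpha$. We will handle both by conditioning on exploration data and taking the expectation of the conditional bound at the end. Where needed, we would also bound $\mathbb{P}(\mathcal{E}^c)\cdot\sqrt{LS}$ relative to $\widetilde{\mathrm{OPT}}$.
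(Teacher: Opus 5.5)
Your proposal is correct and follows essentially the same route as the paper: the same good event (confidence ellipsoid plus matrix Chernoff), the same threshold sandwich $|\alpha-\tfrac12\widetilde{\mathrm{OPT}}|\le h/2$, and the same Samuel-Cahn decomposition conditional on the exploration data $(\hat\theta,V)$. That decomposition yields $\mathbb{E}[X_\tau\mathbbm{1}(\mathcal{E})\mid\hat\theta,V]\ge \mathbbm{1}(\mathcal{E})\,\tfrac12\widetilde{\mathrm{OPT}}-O(h)$, and averaging over the exploration data gives the result.

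The boundedness $X\le\sqrt{LS}$ you hedge on in Step 1 is unnecessary. $\mathcal{E}$ is determined by the exploration data, and $\max_{i>\ell_n}X_i$ is independent of it, so the failure event costs only the multiplicative factor $\mathbb{P}(\mathcal{E})\ge 1-\tfrac1n-d\,e^{-\lambda'\ell_n/(8L)}\to 1$, exactly as in the paper. An additive $\sqrt{LS}\,\mathbb{P}(\mathcal{E}^c)$ term would in fact not be covered by $\varepsilon_n$ in the first claim; that term is needed only in the window-access theorem, where the threshold depends on exploration-phase rewards.
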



In the theorem, we target the relaxed prophet of $\mathbb{E}[\max_{i\in[\ell_n+1,n]}X_i]$ due to the inherent difficulty of the non-i.i.d.\ setting against the original prophet, as shown in Proposition~\ref{prop:noniid_difficulty} (see Appendix~\ref{app:noniid_difficulty} for the proof).
\begin{proposition}\label{prop:noniid_difficulty}
There exist non-identical distributions $\{\mathcal{D}_{x,i}\}_{i=1}^n$ for the feature vectors $x_i$'s, and a parameter vector $\theta\in\mathbb{R}^d$, such that when observing noise-free rewards $X_i = x_i^\top \theta$ for $i \in [n]$, the following holds: for any stopping rule $\tau$,  \[
\frac{\mathbb{E}[X_\tau]}{\mathbb{E}\left[\max_{i\in[n]} X_i\right]} \le \min\Big\{\frac{1}{2},\frac{1}{d}\Big\}.\]
\end{proposition}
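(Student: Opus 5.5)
The plan is to prove the two bounds $1/2$ and $1/d$ with two separate constructions, and then to combine them. The $1/d$ bound only matters when $d\ge 2$; there I would embed the hard instance in the coordinate structure. Throughout, I read the statement the way it is used in the paper: the stopping rule $\tau$ is fixed first, and then the adversary picks $\theta$ (which the gambler does not know) together with the feature laws $\mathcal{D}_{x,i}$. I also allow the bound to hold up to an arbitrarily small slack that vanishes as a construction parameter $\epsilon\to 0$ (or $n\to\infty$); this is the standard sense of classical prophet-inequality tightness.

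\textbf{Step 1 (the $1/2$ bound).} I would encode the classical Samuel-Cahn instance in the first coordinate. Take $x_1=e_1$ deterministically, $x_2=e_1/\epsilon$ with probability $\epsilon$ and $x_2=0$ otherwise, and $x_i=0$ for $i\ge 3$, with $\theta=e_1$. Then $X_1=1$ and $X_2\in\{0,1/\epsilon\}$. Every stopping rule earns at most $\max\{1,\mathbb{E}X_2\}=1$, while $\mathbb{E}[\max_i X_i]=2-\epsilon$. The ratio is therefore $1/(2-\epsilon)\to 1/2$. Since the rewards are noise-free and even $\theta$ is revealed by $X_1$, knowledge of $\theta$ does not help the gambler here.

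\textbf{Step 2 (the $1/d$ bound).} The key point is that the gambler does not know $\theta$, so a feature vector is only informative once the relevant coordinate is known. I would take $\theta=e_J$ with $J$ chosen by the adversary. By an averaging argument, it suffices to show that for $J$ uniform on $[d]$ the gambler's expected payoff is at most $1/d$ times the prophet's value, plus $o(1)$. The instance I would try first has a stage $1$ with $x_1=\tfrac1d\mathbf 1$, so $X_1=1/d$ for every $J$ and this stage reveals nothing about $J$. It is followed by $d$ stages $i=2,\dots,d+1$ with $x_i=e_{i-1}/\epsilon$ with probability $\epsilon$ and $x_i=0$ otherwise.

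For a fixed $J$, exactly one of the later stages can carry mass, and the prophet earns roughly $1/d+1$. The gambler, on the other hand, must commit at stage $1$ without information about $J$. If it continues, a nonzero feature at a stage is equally consistent, from its point of view, with that coordinate being active or not. It can therefore only guess, and by symmetry over $J$ it captures the $1/\epsilon$ payoff with conditional probability about $1/d$.

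I would make this precise with a Bayesian computation over uniform $J$. The observed $X_i$ does reveal whether $i-1=J$ once $x_i\ne 0$. So the scaling has to be redesigned so that large values are rare and unpredictable, and the real task is to bound the value of the optimal Bayesian stopping rule (via backward induction over the $d$ stages) by $\mathbb{E}[\max X_i]/d$.

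\textbf{Step 3 (combining).} Return the Step 1 instance when $d\le 2$ and the Step 2 instance when $d\ge 3$, padding the remaining coordinates and stages with zero features. This gives the bound $\min\{1/2,1/d\}$.

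\textbf{Main obstacle.} The hard part is Step 2. With noise-free rewards, a realized $X_i$ can leak $J$. The construction must make the information revealed before the decisive stage useless, for example by giving all informative stages values whose conditional expectation matches the safe option. The resulting dynamic program then has to be shown to yield at most a $1/d$ fraction of the prophet's value. The first thing I would try is a telescoping construction in which coordinate $k$ becomes relevant only at stage $k$ with probability $1/(d-k+1)$. This mimics a secretary-type uniform-guessing argument in which each continuation decision is, by symmetry, worth no more than stopping.
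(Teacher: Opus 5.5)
Your Step 1 is exactly the paper's construction for the $1/2$ bound, and the case split in Step 3 is fine. The gap is Step 2, the only part that goes beyond the classical Samuel-Cahn bound: it is not a proof, so as written you have the proposition only for $d\le 2$.

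The instance you actually write down fails. The gambler observes $X_i=x_i^\top\theta$ itself, not just $x_i$. It can therefore skip stage $1$ and stop at the first positive reward, collecting the spike $1/\epsilon$ with probability $\epsilon$. That is a payoff of $1$ against a prophet value of $1+(1-\epsilon)/d$, a ratio near $d/(d+1)$. Your sentence that a nonzero feature is ``equally consistent with that coordinate being active or not'' is exactly what noise-free observation rules out, as you concede.

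The suggested repair also points the wrong way. Suppose the ``telescoping'' idea means a value sequence $v_1,v_2,\dots$ is cut to $0$ after a uniformly random stage $J$ (hazard $1/(d-k+1)$). Then the rule ``stop at stage $k$'' earns $v_k(d-k+1)/d$. With $A:=\max_k v_k(d-k+1)$ you get $\max_{i\le j}v_i\le A/(d-j+1)$, so the averaged prophet value is at most $AH_d/d$, where $H_d=\sum_{k\le d}1/k$. A uniform prior can therefore never certify anything below $1/H_d>1/d$.

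The missing idea is to put all the uncertainty into \emph{future} values rather than the current one, and to make those values grow geometrically. Take deterministic features $x_k=e_k$ for $k\le d$ (and $x_k=0$ afterwards), and build $\theta$ adaptively against the fixed rule: $\theta_1=\epsilon$, $\theta_k=\theta_{k-1}/\epsilon$.

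\begin{itemize}
\item Because $\tau$ is non-anticipating, $\mathbb{P}(\tau=k)$ depends only on $\theta_1,\dots,\theta_k$.
\item Since $\sum_{k\le d}\mathbb{P}(\tau=k)\le 1$, some $k\le d$ has $\mathbb{P}(\tau=k)\le 1/d$.
\item Setting $\theta_{k+1}=\dots=\theta_d=0$ leaves these probabilities unchanged.
\item The prophet then gets $\theta_k$, while the gambler gets at most $\theta_k/d+\sum_{j<k}\theta_j\le\theta_k\bigl(1/d+\epsilon/(1-\epsilon)\bigr)$.
\end{itemize}

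Letting $\epsilon\to0$ gives the $1/d$ bound. If you prefer your averaging formulation, use the same features with $\theta=(\epsilon^{-1},\dots,\epsilon^{-J},0,\dots,0)$, where $J$ is drawn with $\mathbb{P}(J=j)\propto\epsilon^{j}$ instead of uniformly. Each scenario then contributes equally to the prophet's value. Meanwhile, stopping at stage $k$ earns only $\epsilon^{-k}\mathbb{P}(J\ge k)$, which is at most $1/(1-\epsilon)$ times one scenario's contribution.
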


Proposition~\ref{prop:noniid_difficulty} shows that, even in the noise-free case ($\sigma = 0$), the initial stages must be sacrificed to learn $\theta$. For the standard prophet of $ \mathbb{E}[\max_{i\in[n]}X_i]$, the competitive ratio approaches zero with large enough $d$ (e.g. $d=\log(n)$).  
This motivates our
focus on the relaxed prophet benchmark
$\mathbb{E}\!\left[\max_{i\in\{\ell_n+1,\ldots,n\}} X_i\right],$
 under which meaningful guarantees are attainable.

The optimal competitive ratio for non-identical distributions is known to be $1/2$ \citep{samuel1984comparison}.  
In our setting with unknown distributions, Theorem~\ref{thm:ETD_noniid} shows that attaining this ratio requires relaxing the prophet benchmark by excluding the initial exploration phase.  
In the next subsection, we present the case with a logarithmic number of offline samples under which the optimal prophet benchmark can be attained in our setting.






\subsection{Explore-Then-Decide with Window Access}

In the standard non-identical distribution setting, items are revealed sequentially, and the gambler must decide immediately whether to accept or reject the \textit{current} observation. As discussed in  \cite{marshall2020windowed,benomar2024lookback},
this assumption, however, can be overly pessimistic: in many practical scenarios, early opportunities are not irrevocably lost but may remain available for a short period of time. For instance, in a hiring process, one may be able to interview several candidates sequentially  before making a final choice among them.
\paragraph{Window Access.}  
Motivated by this observation, we consider a mild relaxation of the standard setting by using window access for the previous time steps, same as \cite{marshall2020windowed}. More specifically, given a window size $w_n$, at time $i$ the decision-maker
has access to the indices in the window $[i-w_n+1, i]$ and may choose any index
in this window at which to stop; if no such index is selected, the process
continues to time $i+1$. Interestingly, even for any $w_n\le n-1$, the non-i.i.d.\ setting remains as hard
as the standard model (corresponding to window size $w_n=1$), in the
sense that the optimal competitive ratio does not improve, as shown in the following (see Appendix~\ref{app:thm_noniid_window_upper} for the proof).

\begin{proposition}\label{prop:window_upper} In the non-i.i.d. setting with window access of size $w_n \le n-1$, for any algorithm, there always exist non-identical distributions such that the competitive ratio is bounded above by 
    \[\frac{\mathbb{E}[X_\tau]}{\mathbb{E}[\max_{i\in[n]}X_i]}\le 1/2.\]
\end{proposition}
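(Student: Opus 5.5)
Our plan is to adapt the classical two-point instance showing that $1/2$ is tight, arranging it so that the hard decision happens after the window has closed on the early values. With window size $w_n\le n-1$ it suffices to consider just two nonzero variables separated by at least $w_n$ steps. Take $X_1 = 1$ deterministically, $X_i=0$ for $2\le i\le n-1$, and $X_n = 1/\delta$ with probability $\delta$ and $0$ otherwise, for small $\delta>0$. These rewards can be realized in our model with $d=1$, $\theta=1$, and features $x_1\equiv 1$, $x_i\equiv 0$ for $1<i<n$, and $x_n\in\{0,1/\delta\}$. Take the observations noise-free, which can only help the algorithm. Then $\mathbb{E}[\max_i X_i] = \delta\cdot\frac1\delta + (1-\delta)\cdot 1 = 2-\delta$.

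Next we bound every algorithm with window access. Since $w_n\le n-1$, index $1$ lies outside the window $[n-w_n+1,n]$ at time $n$. Hence any decision to take $X_1$ has to be made at some time $i\le n-1$. At that point $X_n$ has not yet been revealed, and it is independent of everything observed so far. Let $p$ be the (possibly randomized) probability that the algorithm stops at some index before $n$. Indices $2,\dots,n-1$ yield $0$, so stopping early yields at most $1$. With probability $1-p$ the algorithm reaches time $n$; its best option there is the maximum of the values still in the window, which is at most $X_n$ plus zeros. This is because index $1$ is no longer accessible and the other indices are worth $0$. So $\mathbb{E}[X_\tau]\le p\cdot 1 + (1-p)\,\mathbb{E}[X_n] = p + (1-p) = 1$. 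The independence of $X_n$ from the event of stopping early is what justifies the factorization here.

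Therefore the ratio is at most $1/(2-\delta)$, and letting $\delta\to 0$ gives an upper bound arbitrarily close to $1/2$. To obtain exactly $\le 1/2$ as stated, we would adjust the instance, or interpret the claim as holding for the supremum over instances (i.e.\ the asymptotic/limiting ratio), as is standard. The main point to handle carefully is the window bookkeeping: we must check that at no time $i$ are both index $1$ and index $n$ simultaneously accessible, which holds exactly because $n-1\ge w_n$.
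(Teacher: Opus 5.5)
Your proposal is correct and follows the paper's proof. It uses the same instance ($X_1=1$, $X_2=\dots=X_{n-1}=0$, $X_n=1/\epsilon$ with probability $\epsilon$) and the same observation that index $1$ has left the window by time $n$ because $w_n\le n-1$, giving $\mathbb{E}[X_\tau]\le 1$ against $\mathbb{E}[\max_i X_i]=2-\epsilon$. It also concludes via the same limit $\epsilon\to 0$, which the paper likewise uses to read the bound as $1/2$; your explicit independence argument and linear-model realization simply fill in details the paper leaves implicit.
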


These observations raise the following question: 
\emph{Can the optimal competitive ratio under window access also be achieved in the setting of unknown non-identical distributions and noisy reward observations? If so, what window size $w_n$ is required, and how frequently is window access required?}

To handle this setting, we  propose an algorithm (Algorithm~\ref{alg:ETD_window} in Appendix~\ref{app:noniid_window}) adopting the Explore-then-Decide method. 
After the exploration phase, at time $\ell_n+1$, the decision-maker is granted
window access with size $w_n=\ell_n+1$ and may stop at any index in
$[\ell_n+1]$; if no such index is selected, the process continues to time
$\ell_n+2$.

Notably, our algorithm requires only a single window access at time $\ell_n+1$, with window size $\ell_n+1$. Due to space constraints, we defer the detailed description of the algorithm to Appendix~\ref{app:noniid_window}.  In what follows, we provide a theorem for the competitive ratio of the method with window access (see Appendix~\ref{app:thm_noniid_window} for the proof).





\begin{theorem}\label{thm:noniid_window}
In the non-i.i.d.\ setting with unknown distributions and window access of size $w_n > \ell_n$, 
the stopping policy $\tau$ of Algorithm~\ref{alg:ETD_window} with  $\ell_n=o(n)$, $\ell_n=\omega(\frac{L\log d}{\lambda'})$, and a constant $\beta > 0$ 
achieves the following asymptotic competitive ratio:\[
\liminf_{n\to\infty}
\frac{\mathbb{E}[X_\tau]}{\mathbb{E}\!\left[\max_{i\in[n]} X_i\right]}
\;\ge\;
\frac{1}{2}-\limsup_{n\to\infty}\varepsilon_n,
\]
where
\[
\varepsilon_n
:=
\mathcal{O}\!\left(
\frac{1}{\mathbb{E}\!\left[\max_{i\in[n]} X_i\right]}\left(
\sqrt{\frac{L(\sigma^2 d+S)\log(Ln)}{\lambda'\,\ell_n}}+\sqrt{LS}\left(\frac{1}{n}+\frac{d}{e^{\lambda '\ell_n/8L}}\right)\right)
\right).
\]
Furthermore, by setting $\ell_n=\frac{L(\sigma^2 d+S)}{\lambda'} f(n)\log (Ln)$ for some function $f(n)$  (e.g., $f(n) = \Theta(\log^p n)$ for $p>0$, or $\Theta(n^q)$ for $0<q<1$) satisfying $\ell_n = o(n)$, 
if $\mathrm{OPT}=\omega(\max\{1/\sqrt{f(n)},\sqrt{LS}/n\})$, then 
\begin{align*}
&\liminf_{n\rightarrow  \infty}\frac{\mathbb{E}[X_\tau]}{\mathbb{E}[\max_{i\in[n]}X_i]}
\ge\frac{1}{2}.
\end{align*}
\end{theorem}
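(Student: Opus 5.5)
Algorithm~\ref{alg:ETD_window} is not displayed here, so I assume it runs as follows. It explores for $\ell_n$ rounds and forms $\hat\theta,V$. At time $\ell_n+1$ it computes the threshold $\alpha=\frac12\mathbb{E}[\max_{s\in[n]} z_s^\top\hat\theta\mid\hat\theta]$, where $z_s\sim\mathcal{D}_{x,s}$ independently. Using the single window access it stops at any $i\in[\ell_n+1]$ with $X_i^{\mathrm{LCB}}\ge\alpha$, taking the LCB computed from $(\hat\theta,V)$. Otherwise it proceeds through $i=\ell_n+2,\dots,n$ with the same LCB threshold rule.

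The plan is to reduce to the classical Samuel-Cahn half-threshold argument, run over the full index set $[n]$ instead of $[\ell_n+1,n]$. The key point is that the window lets the exploration-phase items be treated as if they arrived after estimation. The estimate $\hat\theta$ depends on $x_1,\dots,x_{\ell_n}$, so those features are not independent of $\hat\theta$. The window however removes the \emph{ordering} problem, since every item can be compared with the threshold, and the dependence is handled by the error terms.

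\textbf{Step 1 (good event).} I would define the event $\mathcal{E}$ on which two things hold. First, $|x^\top\hat\theta-x^\top\theta|\le\xi(x)$ for all relevant $x$; by the self-normalized bound of \citet{abbasi2011improved} with the stated radius, this fails with probability $O(1/n)$. Second, $\lambda_{\min}(V)\ge\lambda'\ell_n/2$; by matrix Chernoff this fails with probability at most $d e^{-\lambda'\ell_n/8L}$. On the complement of $\mathcal{E}$ I bound the loss by $\max_i X_i\le\sqrt{LS}$, which produces the $\sqrt{LS}(1/n+d e^{-\lambda'\ell_n/8L})$ term. On $\mathcal{E}$ we have $\xi(x)\le\Delta:=O\bigl(\sqrt{L(\sigma^2d+S)\log(Ln)/(\lambda'\ell_n)}\bigr)$ uniformly. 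Hence $X_i-2\Delta\le X_i^{\mathrm{LCB}}\le X_i$, and $|\alpha-\frac12\mathrm{OPT}'|\le\Delta$, where $\mathrm{OPT}'$ is the same expectation taken with $\theta$.

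\textbf{Step 2 (Samuel-Cahn with perturbation).} Accepting any $X_i^{\mathrm{LCB}}\ge\alpha$ yields $X_\tau\ge\alpha$ whenever we stop. Write $X_\tau\ge\alpha\mathbb{1}\{\text{stop}\}+\sum_i(X_i^{\mathrm{LCB}}-\alpha)^+\mathbb{1}\{\text{no earlier acceptance}\}$. For the post-window items $i>\ell_n+1$, conditionally on $\hat\theta$ the event of no earlier acceptance is independent of $x_i$, giving the usual factor $\mathbb{P}(\text{no stop})$. For the window items I use that the algorithm stops at the best window item. Then
$$X_\tau\ge \alpha+\max_{i\le\ell_n+1}(X_i^{\mathrm{LCB}}-\alpha)^+\ge\alpha+\tfrac{1}{\ell_n+1}\textstyle\sum_{i\le\ell_n+1}(\cdot)^+,$$
but this bound is too weak. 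Instead I would bound the window contribution by $\mathbb{E}[\max_{i\le \ell_n+1}(X_i-\alpha)^+]-2\Delta$. I would then combine it with the tail, using $p=\mathbb{P}(\text{stop})$:
$$\mathbb{E}X_\tau\ge p\alpha+(1-p)\bigl(\mathbb{E}[\max_{i\in[n]}X_i]-\alpha\bigr)-O(\Delta),$$
via $(\max_i X_i-\alpha)^+\le\sum_{\text{post}}(X_i-\alpha)^++\max_{\text{window}}(X_i-\alpha)^+$. With $\alpha\approx\mathrm{OPT}/2$ this gives $\mathrm{OPT}/2-O(\Delta)$.

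\textbf{Step 3 (main obstacle).} The hard part is that the window features $x_1,\dots,x_{\ell_n}$ are correlated with $\hat\theta$, so their expectations are not computed under the independent law that defines $\alpha$. I would handle this by working only through $\theta$-quantities on $\mathcal{E}$: there $X_i^{\mathrm{LCB}}\ge X_i-2\Delta$ deterministically, and $\alpha$ is within $\Delta$ of the deterministic value $\mathrm{OPT}/2$. I would therefore replace $\alpha$ by the fixed threshold $\mathrm{OPT}/2$ and pay $O(\Delta)$ plus the bad-event mass; the classical argument then applies to the independent $X_i$. Making this replacement rigorous requires sandwiching the stopping rule between the fixed-threshold rules $\mathrm{OPT}/2\pm3\Delta$, and that is the delicate step. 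Dividing by $\mathrm{OPT}$ and applying the choice of $\ell_n$ and the growth assumption on $\mathrm{OPT}$ gives the corollary part.
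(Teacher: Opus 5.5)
Your argument analyzes a different policy from the one in the statement. Algorithm~\ref{alg:ETD_window} does not set the threshold with fresh draws on the exploration indices. It uses $\alpha=\tfrac12\mathbb{E}\bigl[\max\{\max_{i\in[\ell_n]}x_i^\top\hat\theta,\ \max_{k\in[\ell_n+1,n]}z_k^\top\hat\theta\}\mid\mathcal{F}_{\ell_n}\bigr]$, with the \emph{realized} $x_1,\dots,x_{\ell_n}$ plugged in. For this threshold, the claim your Step 3 rests on fails. That claim is that on $\mathcal{E}$, $\alpha$ is within $O(\Delta)$ of the deterministic number $\mathrm{OPT}/2$, so the rule can be sandwiched between the fixed-threshold rules $\mathrm{OPT}/2\pm3\Delta$. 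On $\mathcal{E}$ one only gets $|\alpha-\tfrac12\mathbb{E}[\max_{i\in[n]}X_i\mid\mathcal{F}_{\ell_n}]|\le h/2$, and this $\mathcal{F}_{\ell_n}$-measurable random quantity can be far from $\mathrm{OPT}/2$. For instance, $2\alpha\ge\max_{i\le\ell_n}X_i-h$, which is large whenever a large value shows up during exploration. With a threshold $T$ that depends on the exploration data, the unconditional factorization $\mathbb{E}[(X_i-T)^+\mathbbm{1}\{\text{no earlier stop}\}]=\mathbb{E}[(X_i-T)^+]\,\mathbb{P}(\text{no earlier stop})$ behind your Samuel-Cahn step no longer holds. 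So the proposal does not establish the theorem for the stated algorithm.

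The missing idea is to condition on $\mathcal{F}_{\ell_n}$ throughout, as the paper does. Given $\mathcal{F}_{\ell_n}$, the objects $\hat\theta$, $V$, $\alpha$ and the event $\mathcal{E}$ are fixed, and the exploration values are constants. The reduced sequence $\widetilde X_1=\max_{k\le\ell_n+1}X_k$, $\widetilde X_r=X_{\ell_n+r}$ for $2\le r\le n-\ell_n$, is then conditionally independent, with $\max_r\widetilde X_r=\max_{i\in[n]}X_i$. Samuel-Cahn runs conditionally: with $p=\mathbb{P}(\widetilde\tau=n-\ell_n+1\mid\mathcal{F}_{\ell_n})$ and $2\alpha\le\mathbb{E}[\max_r\widetilde X_r\mid\mathcal{F}_{\ell_n}]+h$ on $\mathcal{E}$, one gets $\mathbb{E}[X_\tau\mathbbm{1}(\mathcal{E})\mid\mathcal{F}_{\ell_n}]\ge\mathbbm{1}(\mathcal{E})\bigl(\alpha(1-p)+p(\alpha-3h)\bigr)\ge\mathbbm{1}(\mathcal{E})\alpha-3h$. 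The proof is finished by the tower property, the reverse bound $2\alpha\ge\mathbb{E}[\max_r\widetilde X_r\mid\mathcal{F}_{\ell_n}]-h$ on $\mathcal{E}$, and $\mathbb{E}[M\mathbbm{1}(\mathcal{E})]\ge\mathbb{E}[M]-\sqrt{LS}\,\mathbb{P}(\mathcal{E}^c)$ for $M=\max_iX_i$. This conditioning also removes what you call the main obstacle. The correlation between $x_1,\dots,x_{\ell_n}$ and $\hat\theta$ is harmless once both are frozen, which is exactly why the algorithm's threshold uses the realized exploration features.

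For the variant you assumed (fresh draws over all of $[n]$, as in Algorithm~\ref{alg:D_sample}), your sandwich argument is sound and not delicate. The window block needs no independence because $\{\widetilde\tau\ge1\}$ is sure; removing $\mathbbm{1}(\mathcal{E})$ there costs only $\sqrt{LS}\,\mathbb{P}(\mathcal{E}^c)$. For $r\ge2$, the feature $x_{\ell_n+r}$ is independent of $\{\widetilde\tau\ge r\}\cap\mathcal{E}$. So you have proved the same guarantee for a different policy, not for Algorithm~\ref{alg:ETD_window}.
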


Notably, Algorithm~\ref{alg:ETD_window} achieves the optimal ratio, matching the upper bound in Proposition~\ref{prop:window_upper}.  
\begin{remark}{
The $1/2$ guarantee  can also be obtained by running LCB-Thresholding with $\ell_n$ offline samples, without window access.
 Exploiting the linear structure allows our method to achieve the optimal $1/2$ ratio using only $\ell_n$ samples—for example, $\ell_n = O(\log^{p+1} n)$ when $f(n)=\Theta(\log^p n)$ for $p>0$. Further details are provided in Appendix~\ref{app:offline}.}
\end{remark}


\section{Experiments}

We evaluate our algorithms on synthetic data with Gaussian reward noise of
variance $\sigma^2$. We set $d=2$, $\ell_n=n^{2/3}$, and $\beta=1$.
As no prior method directly fits our setting, we use the \texttt{Gusein-Zade}
rule~\citep{gusein1966problem} as a baseline: it skips the first $n/e$ samples
and then stops at the first record exceeding the maximum of this prefix. This
rule guarantees a worst-case competitive ratio of $1/e$ for unknown i.i.d.\
distributions~\citep{correa2019prophet} and can be asymptotically optimal for
certain distributions~\citep{goldenshluger2022optimal}. All results are averaged
over independent Monte Carlo runs.\footnote{Code: \url{https://github.com/junghunkim7786/LearningProphetInequalityNoisyObservation}}
\paragraph{I.i.d.\ distribution.}
In the i.i.d.\ setting with $n=100{,}000$, we draw $\theta$ and $x_i$
coordinate-wise uniformly and then normalize. Figure~\ref{fig:exp} shows that our algorithms, 
\texttt{ETD-LCBT(iid)} (Algorithm~\ref{alg:ETD}) and
\texttt{$\varepsilon$-Greedy-LCBT} (Algorithm~\ref{alg:greedy}), achieve
competitive ratios exceeding $1-1/e$, consistent with Corollary~\ref{cor:ETD}
and Theorem~\ref{thm:greedy}, and  outperform
\texttt{Gusein-Zade}. Moreover, the performance gap widens as $\sigma^2$ increases,
suggesting that our approaches are robust to observation noise.

\paragraph{Non-i.i.d.\ distributions.}
For $n=30{,}000$, we draw $\theta$ as above, while each $x_i$ comes from a
distinct distribution by first sampling a coordinate-wise range and then drawing
$x_i$ uniformly within that range. Figure~\ref{fig:exp2} shows that the
window-access method \texttt{ETD-LCBT-WA} (Algorithm~\ref{alg:ETD_window})
achieves a competitive ratio above $1/2$, matching Theorem~\ref{thm:noniid_window}.
Even \texttt{ETD-LCBT(non-iid)} (Algorithm~\ref{alg:ETD}), analyzed only against
a relaxed benchmark (Theorem~\ref{thm:ETD_noniid}), empirically exceeds $1/2$
against the standard prophet. As expected, \texttt{ETD-LCBT-WA} improves upon
\texttt{ETD-LCBT(non-iid)}, and both outperform \texttt{Gusein-Zade}, with a gap
that widens as $\sigma^2$ increases.
\begin{figure*}[t]
    \centering
    \begin{subfigure}[t]{0.33\linewidth}
        \centering
        \includegraphics[width=\linewidth]{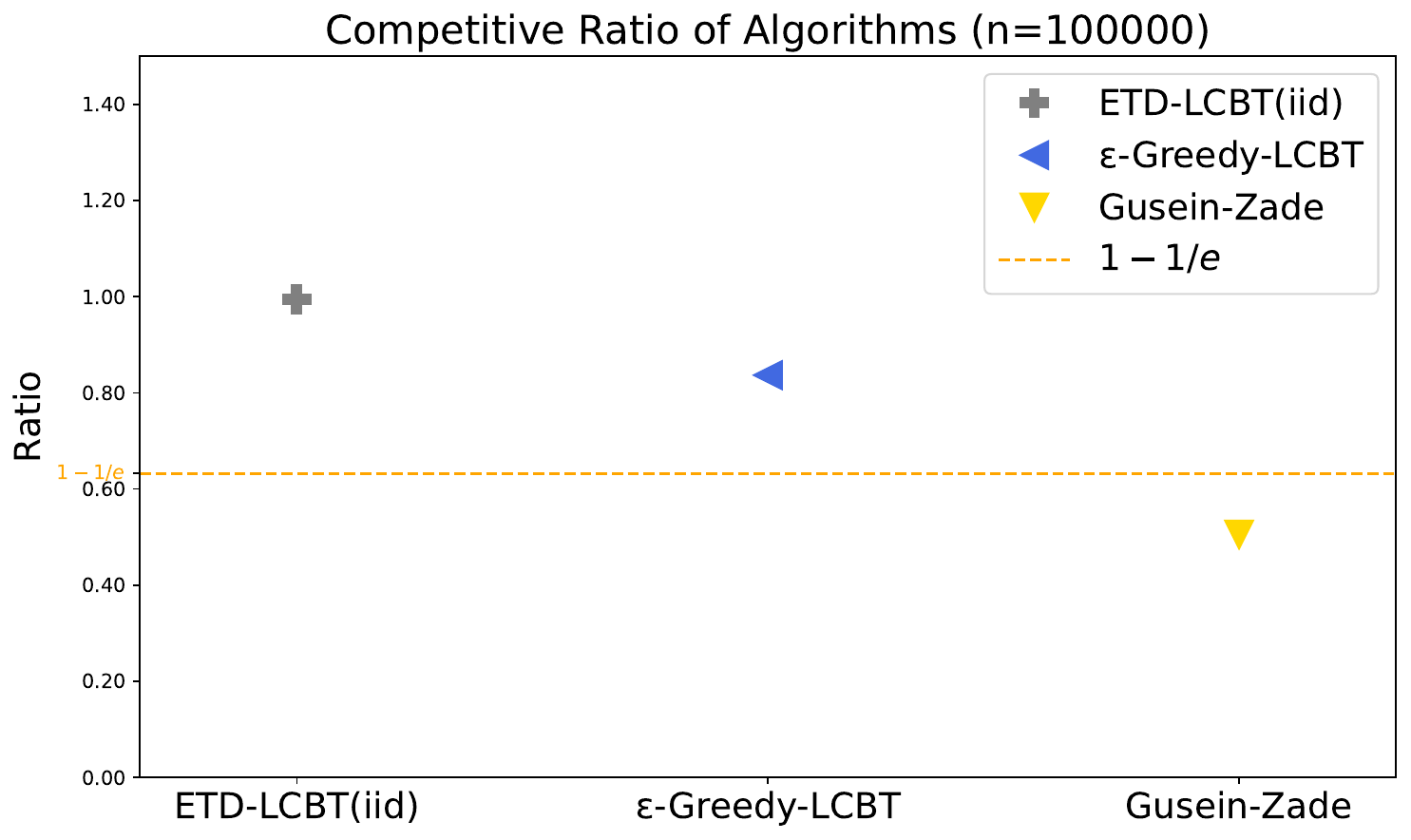}
        \caption{$\sigma=0.1$}
    \end{subfigure}\hfill
    \begin{subfigure}[t]{0.33\linewidth}
        \centering
        \includegraphics[width=\linewidth]{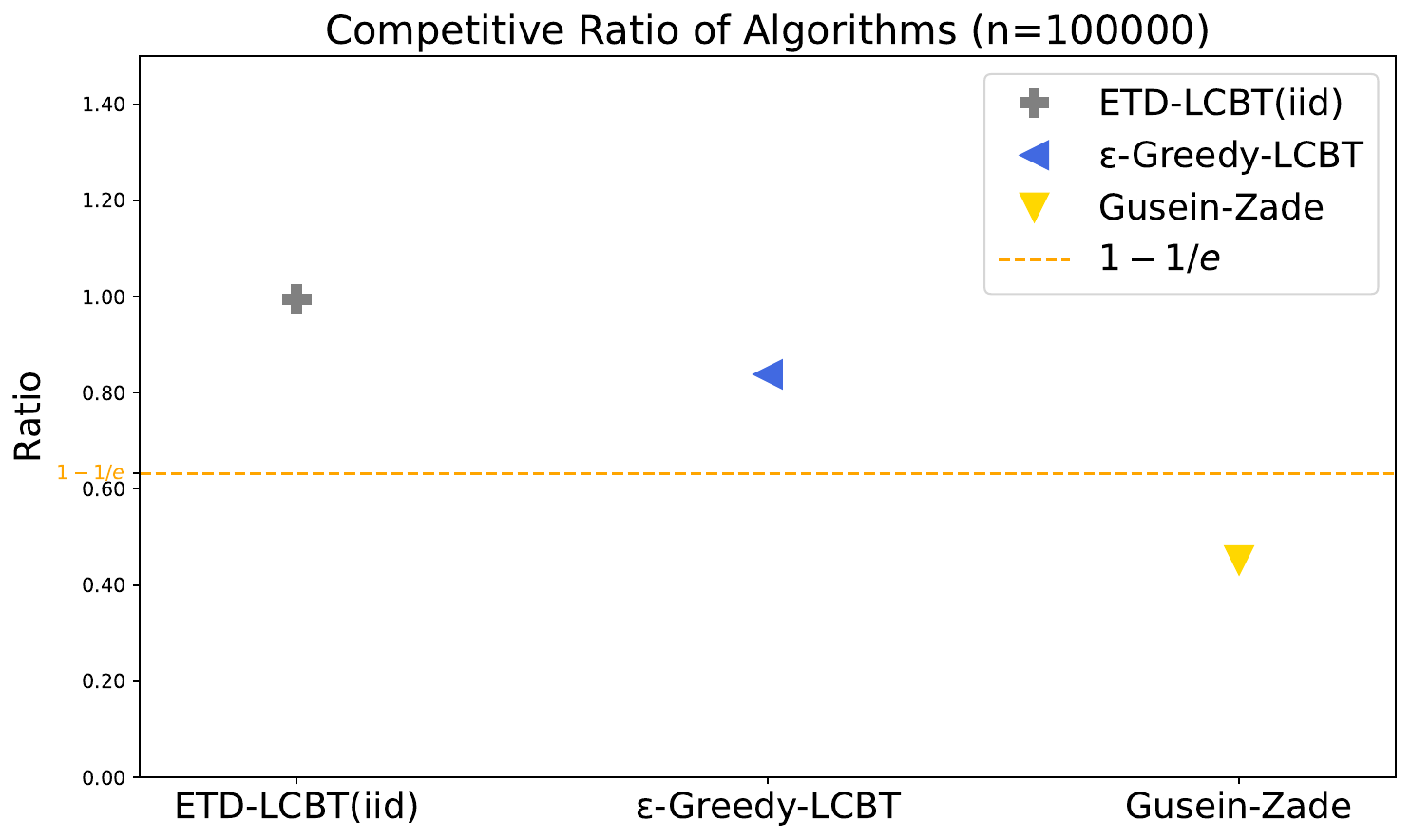}
        \caption{$\sigma=0.5$}
    \end{subfigure}\hfill
    \begin{subfigure}[t]{0.33\linewidth}
        \centering
        \includegraphics[width=\linewidth]{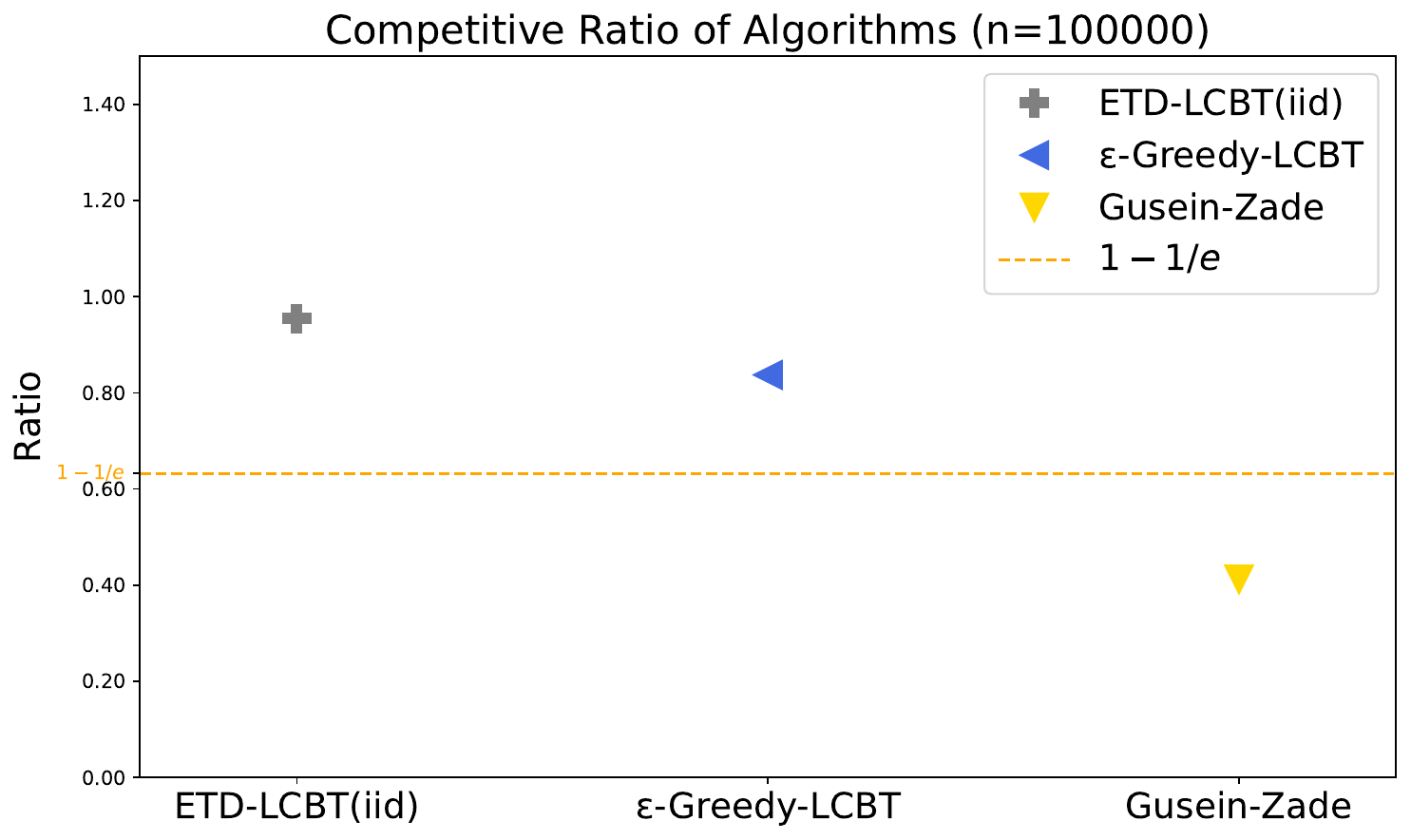}
        \caption{$\sigma=0.8$}
    \end{subfigure}
    \caption{Competitive ratio under the i.i.d.\ setting with noise standard deviation $\sigma$.}
    \label{fig:exp}
\end{figure*}

\begin{figure*}[t]
    \centering
    \begin{subfigure}[t]{0.33\linewidth}
        \centering
        \includegraphics[width=\linewidth]{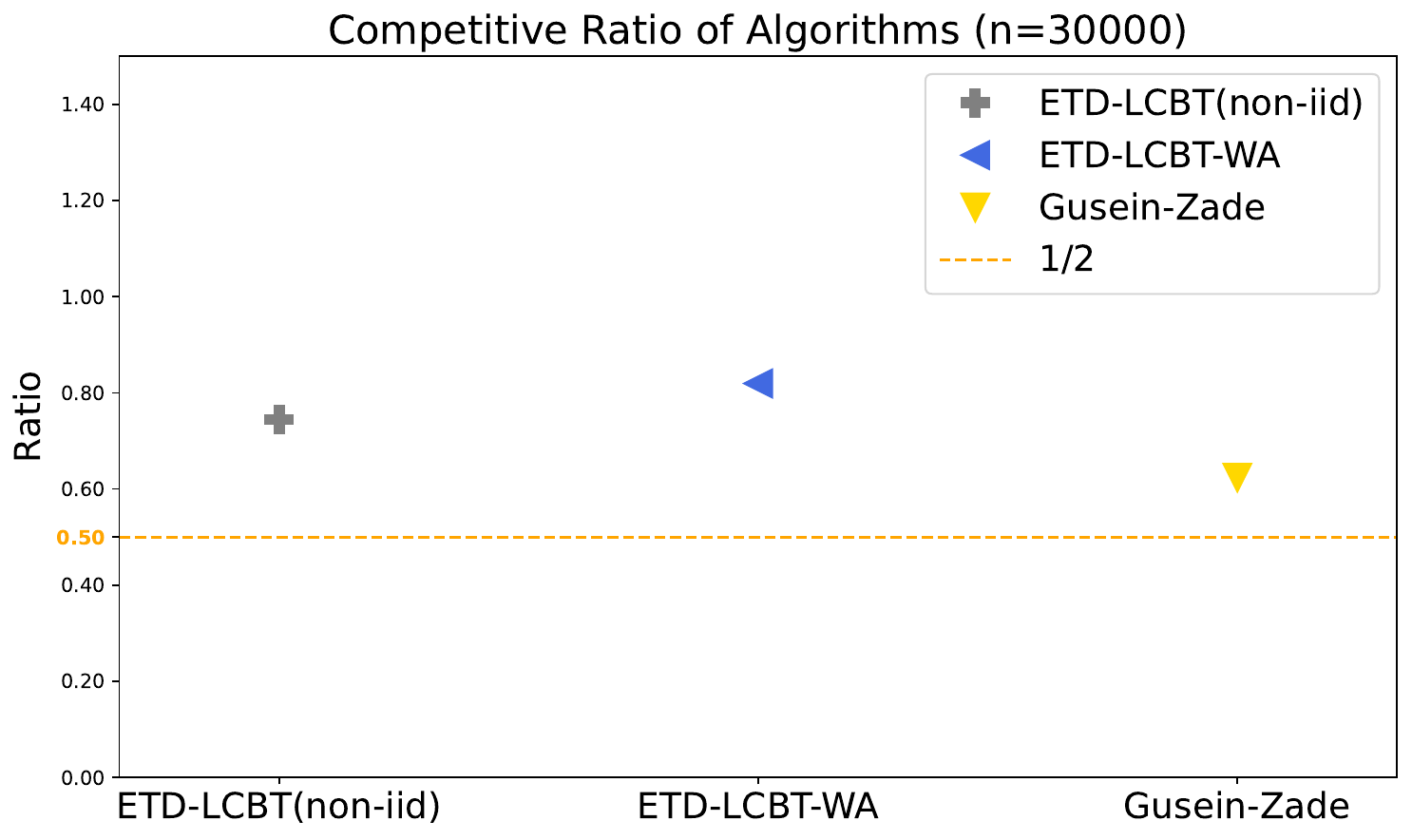}
        \caption{$\sigma=0.1$}
    \end{subfigure}\hfill
    \begin{subfigure}[t]{0.33\linewidth}
        \centering
        \includegraphics[width=\linewidth]{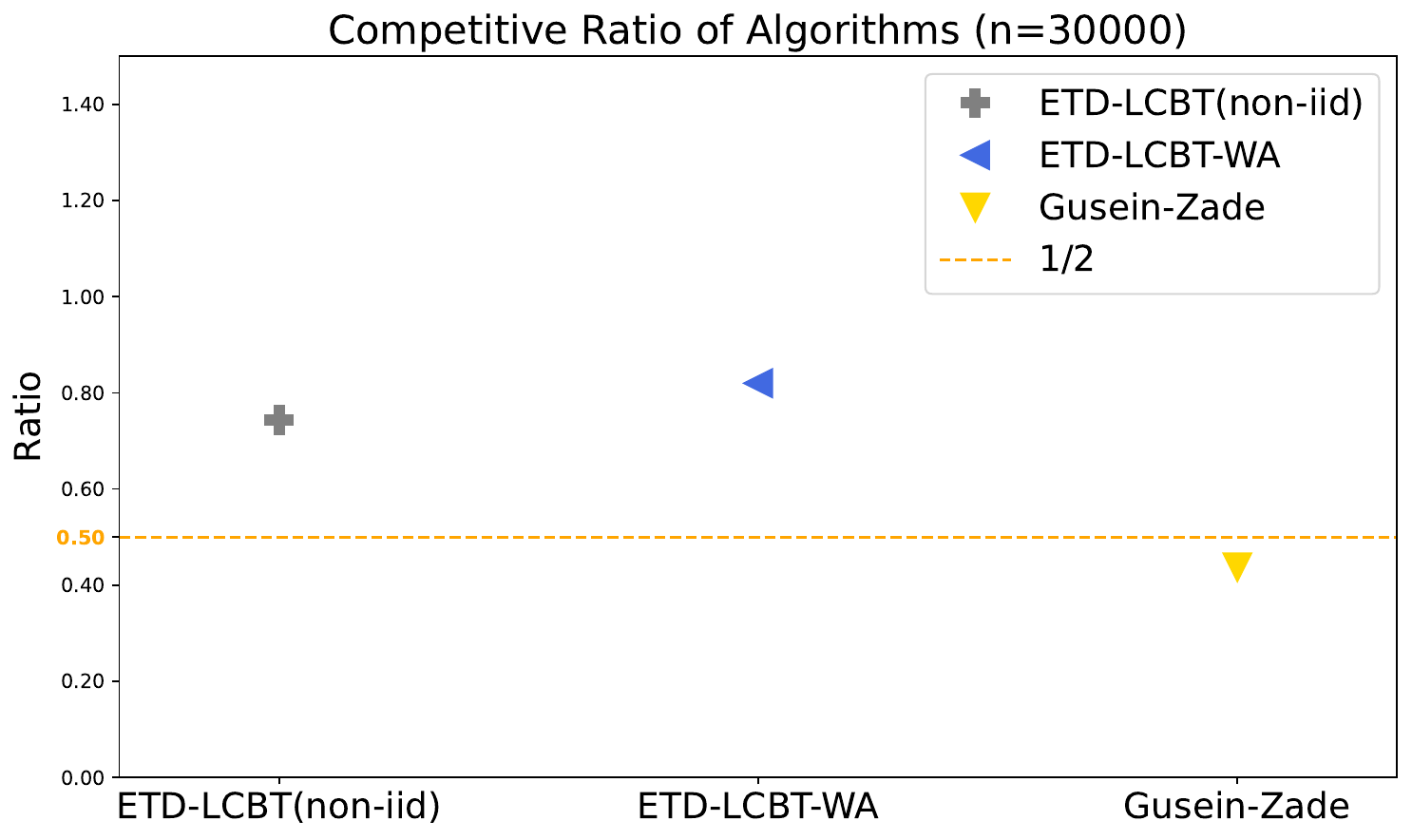}
        \caption{$\sigma=0.5$}
    \end{subfigure}\hfill
    \begin{subfigure}[t]{0.33\linewidth}
        \centering
        \includegraphics[width=\linewidth]{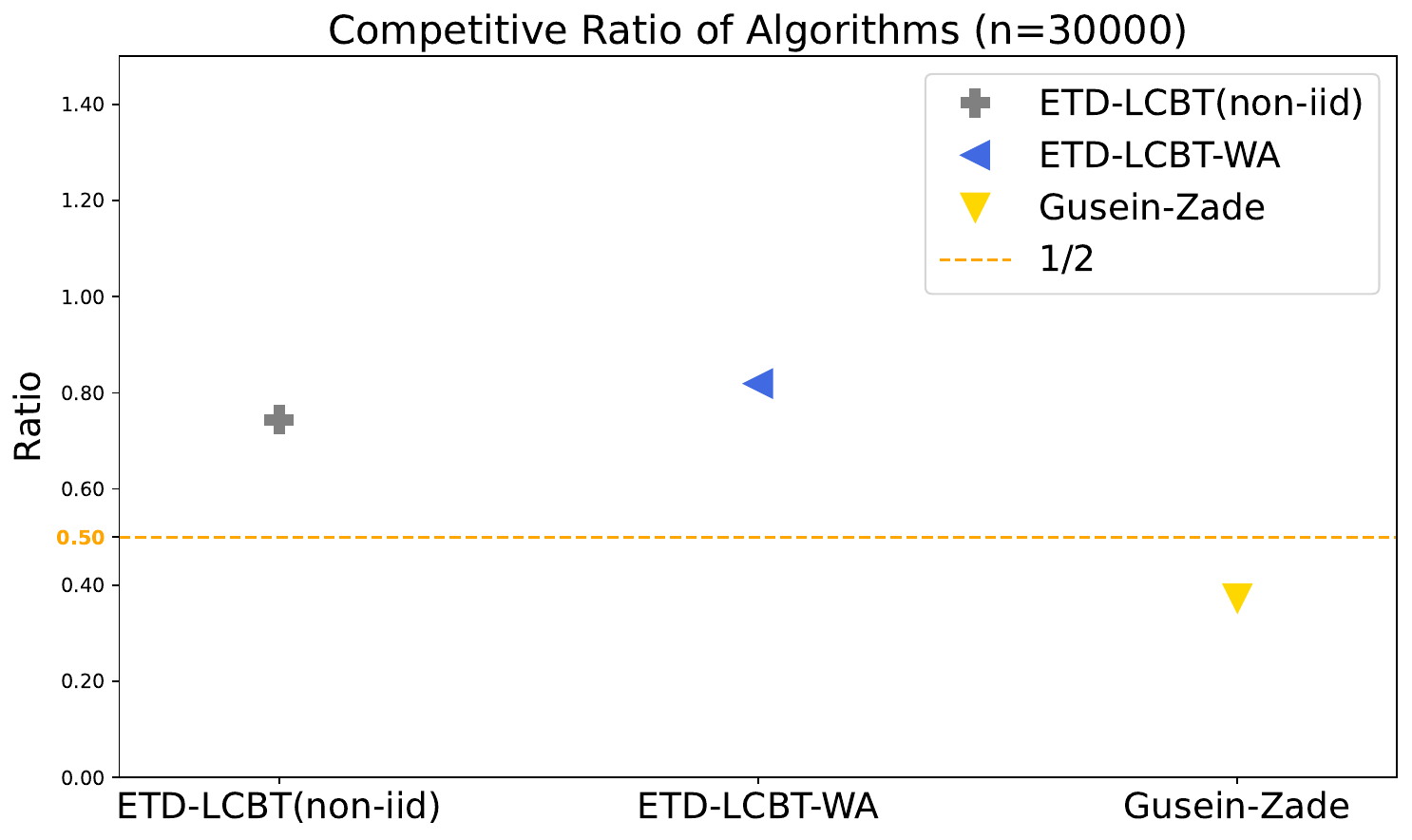}
        \caption{$\sigma=0.8$}
    \end{subfigure}
    \caption{Competitive ratio under non-identical distributions with noise standard deviation $\sigma$.}
    \label{fig:exp2}
\end{figure*}

\section{Conclusion}
\vspace{-1mm}
We introduced a new framework for prophet inequalities under noisy observations and unknown reward distributions, motivated by real-world applications where noisy reward and contextual information are observable but reward distributions are not. By combining learning with LCB-based stopping rules, we achieved the sharp competitive ratio of $1 - 1/e$ in the i.i.d.\ setting. For non-identical distributions, we showed that the optimal bound of $1/2$ can be attained under window access or under a relaxed prophet benchmark.  Our empirical results demonstrate the efficiency of our algorithms.

\section*{Reproducibility Statement} 
All theoretical claims are stated under explicit assumptions and are supported by complete proofs in the appendix.
Algorithmic details, including full pseudocode, are provided in the main paper and appendix.
For the experimental results, we describe the data-generation process in the main paper and release the complete source code needed to reproduce all figures and numerical results at the provided URL.
\section*{Acknowledgements}
The authors acknowledge the support of ANR through the PEPR IA FOUNDRY project (ANR-23-PEIA-0003)  and the Doom project (ANR-23-CE23-0002), as well as the ERC through the Ocean project (ERC-2022-SYG-OCEAN-101071601).

\bibliography{mybib}
\bibliographystyle{iclr2026_conference}
\newpage
\appendix
\section{Appendix}

\subsection{Proof of Proposition~\ref{prop:difficulty}}\label{app:prop_difficulty}
We give a construction under additive Gaussian noise, which is $\sigma$-sub-Gaussian. 

Let $X_1,\dots,X_n$ be i.i.d.\ Bernoulli with success probability
$p_n = c/n$ for some fixed $c\in(0,\infty)$.
Let $(\eta_i)_{i=1}^n$ be i.i.d.\ $\mathcal{N}(0,\sigma^2)$ for constant $\sigma>0$, independent of $(X_i)_{i=1}^n$,
and suppose the gambler observes
\[
Y_i = X_i + \eta_i, \qquad i=1,\dots,n.
\]

Let $\tau$ be any (possibly randomized) index valued in $\{1,\dots,n\}$ that is
measurable with respect to $Y=(Y_1,\dots,Y_n)$.
Then, conditioning on $Y$ and using $\sum_{i=1}^{n+1} \Pr(\tau=i\mid Y)=1$ and $X_{n+1}:=0$,
\[
\mathbb{E}[X_\tau \mid Y]
= \sum_{i=1}^{n+1} \Pr(\tau=i\mid Y)\,\mathbb{E}[X_i\mid Y]
\le \max_{1\le i\le n} \mathbb{E}[X_i\mid Y].
\]
Moreover, since $(X_i,\eta_i)$ are independent across $i$, we have
$\mathbb{E}[X_i\mid Y]=\mathbb{E}[X_i\mid Y_i]=\Pr(X_i=1\mid Y_i)$.
Let
\[
\pi(y) := \Pr(X_i=1\mid Y_i=y).
\]
By Bayes' rule for the Gaussian likelihoods,
\[
\pi(y)
= \frac{p_n \varphi_\sigma(y-1)}
{(1-p_n)\varphi_\sigma(y)+p_n\varphi_\sigma(y-1)},
\quad
\text{where }\varphi_\sigma(t)=\frac{1}{\sqrt{2\pi}\sigma}e^{-t^2/(2\sigma^2)}.
\]
Hence, using $(1-p_n)\varphi_\sigma(y)$ as a lower bound on the denominator,
\[
\pi(y)
\le \frac{p_n}{1-p_n}\cdot \frac{\varphi_\sigma(y-1)}{\varphi_\sigma(y)}
= \frac{p_n}{1-p_n}\exp\!\left(\frac{2y-1}{2\sigma^2}\right).
\]

Now define the event
\[
\mathcal{E}_n := \Big\{\max_{1\le i\le n} Y_i \le t_n\Big\},
\qquad
t_n := 1 + \sigma\sqrt{8\log n}.
\]
Since $X_i\in\{0,1\}$, $Y_i>t_n$ implies $\eta_i>t_n-1$, and Gaussian tails give
\[
\Pr(\mathcal{E}_n^c)
\le n\Pr(\eta_1>t_n-1)
\le n\exp\!\left(-\frac{(t_n-1)^2}{2\sigma^2}\right)
= n e^{-4\log n} = n^{-3}\xrightarrow[n\to\infty]{}0.
\]
On $\mathcal{E}_n$, we have $\max_i \pi(Y_i)\le \pi(t_n)$, so
\[
\mathbb{E}[X_\tau]
\le \mathbb{E}\!\left[\max_{1\le i\le n}\pi(Y_i)\right]
\le \pi(t_n) + \Pr(\mathcal{E}_n^c)
\le \frac{2c}{n}\exp\!\left(\frac{2t_n-1}{2\sigma^2}\right) + o(1)
\xrightarrow[n\to\infty]{}0,
\]
where we used $p_n=c/n$ and $1/(1-p_n)\le 2$ for large $n$.

On the other hand, the oracle that sees the true $X$'s obtains $\max_i X_i$, so
\[
\mathbb{E}\!\left[\max_{1\le i\le n} X_i\right]
= 1-(1-p_n)^n
= 1-\left(1-\frac{c}{n}\right)^n
\xrightarrow[n\to\infty]{} 1-e^{-c} >0.
\]
Therefore,
\[
\frac{\mathbb{E}[X_\tau]}{\mathbb{E}[\max_i X_i]}
\xrightarrow[n\to\infty]{}0,
\]
and the conclusion holds for any algorithm.
\subsection{Proof of Theorem~\ref{thm:ETD}}\label{app:thm_ETD}

We first provide a lemma for estimation error.
\begin{lemma}[Theorem 2 in \cite{abbasi2011improved}]\label{lem:confi_ETD}
For $\delta>0$, we have
\begin{align*}
    \mathbb{P}\left( \|\hat{\theta}-\theta\|_{V}\le \sqrt{S\beta}+\sigma\sqrt{d\log\left(\frac{1+\ell_nL/d\beta}{\delta}\right)}\right)\ge 1-\delta.
\end{align*}
\end{lemma}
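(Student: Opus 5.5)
The plan is to reduce the statement to the standard self-normalized martingale inequality for vector-valued martingales, applied to the ridge estimator built from the exploration data $\{(x_t,y_t)\}_{t=1}^{\ell_n}$.

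\textbf{Step 1: decompose the estimation error.} Write $\epsilon_V:=\sum_{t=1}^{\ell_n}\eta_t x_t$. Since $y_t=x_t^\top\theta+\eta_t$, we have $\sum_t y_tx_t=(V-\beta I_d)\theta+\epsilon_V$. Hence
\[
\hat\theta-\theta=V^{-1}\epsilon_V-\beta V^{-1}\theta .
\]
The triangle inequality in the $V$-norm then gives
\[
\|\hat\theta-\theta\|_V\le \|\epsilon_V\|_{V^{-1}}+\beta\|\theta\|_{V^{-1}} .
\]
Since $V\succeq \beta I_d$, we get $\beta\|\theta\|_{V^{-1}}\le \sqrt{\beta}\,\|\theta\|_2\le\sqrt{S\beta}$ by Assumption~\ref{ass:norm_bd_S}. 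This accounts for the deterministic part of the radius.

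\textbf{Step 2: self-normalized bound (main obstacle).} Take the filtration $\mathcal F_t=\sigma(x_1,\eta_1,\dots,x_t,\eta_t,x_{t+1})$. Then $x_t$ is $\mathcal F_{t-1}$-measurable and $\eta_t$ is conditionally $\sigma$-sub-Gaussian; in the exploration phase the noise is even independent of the features.

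Set $W=\sum_{t\le\ell_n} x_tx_t^\top$. For each fixed $u\in\mathbb R^d$, the process
\[
M_t(u)=\exp\!\bigl(u^\top \epsilon_t-\tfrac{\sigma^2}{2}u^\top W_tu\bigr)
\]
is a nonnegative supermartingale with $\mathbb E M_t(u)\le 1$, by the sub-Gaussian moment generating function bound.

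I would use the method of mixtures: integrate $u$ against the Gaussian prior $\mathcal N(0,(\sigma^2\beta)^{-1}I_d)$. Computing the Gaussian integral in closed form gives
\[
\bar M=\Bigl(\tfrac{\det(\beta I_d)}{\det V}\Bigr)^{1/2}\exp\!\Bigl(\tfrac{1}{2\sigma^2}\|\epsilon_V\|^2_{V^{-1}}\Bigr),
\]
and $\mathbb E\bar M\le1$ still holds. Markov's inequality then yields, with probability at least $1-\delta$,
\[
\|\epsilon_V\|_{V^{-1}}^2\le 2\sigma^2\log\!\Bigl(\tfrac{\det(V)^{1/2}\det(\beta I_d)^{-1/2}}{\delta}\Bigr).
\]
Getting this mixture computation and the supermartingale property right is the technical heart of the argument. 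Since $\ell_n$ is fixed, no stopping-time argument is needed.

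\textbf{Step 3: determinant bound.} By AM--GM on the eigenvalues of $V$ and Assumption~\ref{ass:norm_bd},
\[
\det V\le\bigl(\operatorname{tr}V/d\bigr)^d\le(\beta+\ell_nL/d)^d .
\]
Therefore $\log\bigl(\det(V)^{1/2}\det(\beta I_d)^{-1/2}\bigr)\le\frac d2\log(1+\ell_nL/(d\beta))$, and
\[
\|\epsilon_V\|_{V^{-1}}\le\sigma\sqrt{d\log(1+\ell_nL/d\beta)+2\log(1/\delta)} .
\]

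\textbf{Step 4: combine.} For $d\ge2$ this is at most $\sigma\sqrt{d\log\bigl((1+\ell_nL/d\beta)/\delta\bigr)}$. For $d=1$, the factor $2$ on $\log(1/\delta)$ can be absorbed into the constants, which suffices for all later uses in the paper. Adding the deterministic term from Step 1 gives the stated inequality.
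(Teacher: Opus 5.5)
Your argument is the standard proof of the cited result, which is all the paper relies on: the decomposition of $\hat\theta-\theta$, the method-of-mixtures self-normalized bound (Theorem~1 of \cite{abbasi2011improved}), and the determinant--trace inequality. The steps check out, including the mixing prior $\mathcal N(0,(\sigma^2\beta)^{-1}I_d)$ and the bias bound $\beta\|\theta\|_{V^{-1}}\le\sqrt{S\beta}$. For $d\ge 2$ and $\delta<1$ it proves the lemma as stated; $\delta\ge 1$ is vacuous.

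Your caveat at $d=1$ is not a weakness of your argument but of the statement. As displayed (it transcribes the simplified second bound of the cited theorem), the inequality is false for $d=1$, and since the display contains no constants, nothing can be ``absorbed'' to rescue it. A concrete counterexample:
\begin{itemize}
\item Take $d=1$, $\sigma=\beta=L=1$, $\ell_n=3$, $x_t\equiv 1$, $\theta=0$, and $\eta_t\sim\mathcal N(0,1)$.
\item Then $V=4$ and $\|\hat\theta-\theta\|_V=\tfrac12|\eta_1+\eta_2+\eta_3|\sim\tfrac{\sqrt3}{2}|\mathcal N(0,1)|$.
\item For $\delta=10^{-6}$ the claimed radius is $\sqrt S+\sqrt{\log(4\cdot 10^{6})}\approx\sqrt S+3.90$.
\item With $S=10^{-6}$, the failure probability is $\mathbb P(|\mathcal N(0,1)|>4.50)\approx 6.7\times 10^{-6}>\delta$.
\end{itemize}
More generally, for $d=1$ and $\ell_nL>\beta$, the failure probability decays only like $\delta^{(\ell_nL+\beta)/(2\ell_nL)}$, up to subpolynomial factors. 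So the correct $d=1$ statement is the one your Step~3 actually gives, for instance with radius $\sqrt{S\beta}+\sigma\sqrt{2\log((1+\ell_nL/\beta)/\delta)}$.

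Your claim that this ``suffices for all later uses'' also needs qualifying. The width $\xi$ is hard-coded in the algorithms with $\delta=1/n$, so for $d=1$ your bound only gives $\mathbb P(\mathcal E_1)\ge 1-n^{-1/2}$. That is enough wherever only $\mathbb P(\mathcal E)\to 1$ is used, as in Theorems~\ref{thm:ETD} and~\ref{thm:ETD_noniid}. However, it turns the $\sqrt{LS}/n$ term of Theorem~\ref{thm:noniid_window} into $\sqrt{LS/n}$, unless the width is enlarged by a factor $\sqrt2$.
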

The above lemma implies that
\begin{align}
    \mathbb{P}\left(\left|x^\top (\hat{\theta}-\theta)\right|\le {\sqrt{x^\top V^{-1} x}}\left(\sigma\sqrt{d\log(n+n\ell_nL/d\beta)}+\sqrt{S\beta}\right), \forall x\in \mathbb{R}^d \right) \ge 1-1/n.\label{eq:conf_bd}
\end{align}
We define an event $\mathcal{E}_1=\{ |x^\top (\hat{\theta}-\theta)|\le {\sqrt{x^\top V^{-1} x}}(\sigma\sqrt{d\log(n+n\ell_nL/d\beta)}+\sqrt{S\beta}), \forall x\in \mathbb{R}^d \}$, which holds with $\mathbb{P}(\mathcal{E}_1)\ge 1-\frac{1}{n}$.

We define $g:=\sqrt{L\|V^{-1}\|_2}(\sigma\sqrt{d\log(n+n\ell_nL/d\beta)}+\sqrt{S\beta})$. Recall \[\xi(x_i) = \sqrt{x_i^\top V^{-1}x_i}(\sigma\sqrt{d\log(n+n\ell_nL/d\beta)}+\sqrt{S\beta}).\] Then we have $\xi(x_i)\le g$.
Under $\mathcal{E}_1$, for any $i> \ell_n$ we have
\begin{align}
  X_i-2g\le x_i^\top\hat{\theta}-g\le x_i^\top\hat{\theta}-\xi(x_i)= X_i^{\mathrm{LCB}}\le X_i.\label{eq:X_i_bd}  
\end{align}
We define $\alpha^*$ s.t. $\mathbb{P}_{X\sim \mathcal{D}}(X\le \alpha^*)=1-\frac{1}{n}$. Then we have the following lemma regarding the bounds for the threshold value.  We denote $\mathcal{H}_{\ell_n}=\{\hat{\theta},V\}$.



    

\begin{lemma}\label{lem:threshold_bd} Under $\mathcal{E}_1$, for any $\mathcal{H}_{\ell_n}$, we have
    \[\alpha^*-2g\le \alpha\le \alpha^*.\]
\end{lemma}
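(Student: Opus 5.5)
We argue by comparing CDFs: monotonicity of the quantile map turns the pointwise sandwich \eqref{eq:X_i_bd} into the claimed sandwich for the thresholds. Under $\mathcal{E}_1$, $\hat{\theta}$ and $V$ are fixed, and the inequality $|z^\top(\hat\theta-\theta)|\le \xi(z)$ holds for every $z\in\mathbb{R}^d$, not only for the realized $x_i$. For $z$ in the support of $\mathcal{D}_x$ (so $\|z\|_2^2\le L$ and hence $\xi(z)\le g$), the same chain as in \eqref{eq:X_i_bd} gives, deterministically,
\[
z^\top\theta - 2g \;\le\; Z^{\mathrm{LCB}} \;\le\; z^\top\theta .
\]
Here $z^\top\theta$ is distributed as $X\sim\mathcal{D}$.

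\emph{Upper bound $\alpha\le\alpha^*$.} Since $Z^{\mathrm{LCB}}\le z^\top\theta$, we have $\{z^\top\theta\le t\}\subseteq\{Z^{\mathrm{LCB}}\le t\}$ for every $t$. Therefore
\[
\mathbb{P}_z(Z^{\mathrm{LCB}}\le t\mid\mathcal{H}_{\ell_n})\;\ge\; \mathbb{P}_{X\sim\mathcal{D}}(X\le t).
\]
Taking $t=\alpha^*$, the left-hand side is at least $1-1/n$. Because $\alpha$ is the (smallest) point at which the LCB CDF reaches $1-1/n$, it follows that $\alpha\le\alpha^*$.

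\emph{Lower bound $\alpha\ge\alpha^*-2g$.} Since $Z^{\mathrm{LCB}}\ge z^\top\theta-2g$, we have $\{Z^{\mathrm{LCB}}\le t\}\subseteq\{z^\top\theta\le t+2g\}$. Hence
\[
\mathbb{P}_{X\sim\mathcal{D}}(X\le \alpha+2g)\;\ge\;\mathbb{P}_z(Z^{\mathrm{LCB}}\le\alpha\mid\mathcal{H}_{\ell_n})=1-\frac1n .
\]
Since $\alpha^*$ is the $(1-1/n)$-quantile of $\mathcal{D}$, this gives $\alpha^*\le\alpha+2g$.

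The main obstacle is the definition of the quantiles when the CDFs have atoms or flat parts, where an exact solution of \eqref{eq:ETD_threshold} may fail to exist or may not be unique. To handle this, we interpret both $\alpha$ and $\alpha^*$ as generalized quantiles, $\inf\{t:F(t)\ge 1-1/n\}$. We then use only that $F(\alpha)\ge 1-1/n$ and that $F(t)\ge 1-1/n$ implies $t\ge$ the quantile. Both inequalities above require nothing more than these two facts, so the argument goes through. The alternative is to assume continuity, as the paper's equality notation implicitly does.

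A second point to state explicitly is that the bound $\xi(z)\le g$ is uniform over $z$ in the support. This holds by $z^\top V^{-1}z\le \|V^{-1}\|_2\|z\|_2^2\le L\|V^{-1}\|_2$, so the sandwich holds for all $z$ simultaneously. Since it is a deterministic statement on $\mathcal{E}_1$ given $\mathcal{H}_{\ell_n}$, it transfers directly to the conditional probabilities in \eqref{eq:ETD_threshold}.
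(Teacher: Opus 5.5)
Your proof is correct and follows the paper's argument: both use the deterministic sandwich $z^\top\theta-2g\le Z^{\mathrm{LCB}}\le z^\top\theta$ on $\mathcal{E}_1$ together with monotonicity of quantiles under pointwise ordering (the paper routes the lower bound through the quantile $\alpha'=\alpha^*-2g$ of $z^\top\theta-2g$, whereas you compare the CDFs directly). Your generalized-quantile convention for both $\alpha$ and $\alpha^*$ tightens the paper's equality-based phrasing, which tacitly assumes these $(1-1/n)$-quantiles exist and are unique.
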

\begin{proof}
For $z\sim \mathcal{D}_x$, we define $Z=z^\top \theta$ and  $\hat{Z}=z^\top\hat{\theta}$. Then, under $\mathcal{E}_1$, for any given $\mathcal{H}_{\ell_n}$, we have $Z-2g\le \hat{Z}-g\le\hat{Z}-\xi(z)\le {Z}$ with $\xi(z)\le g$.  Since $\hat{Z}-\xi(z)\le {Z}$ and $ \mathbb{P}(\hat{Z}-\xi(z)\ge \alpha\mid  \mathcal{H}_{\ell_n})=\mathbb{P}(Z\ge \alpha^*\mid  \mathcal{H}_{\ell_n})=1/n$, we can easily obtain \[\alpha\le \alpha^*.\] 
 Likewise,  for $\alpha'$ s.t. $\mathbb{P}(Z-2g\ge \alpha'\mid \mathcal{H}_{\ell_n})=\frac{1}{n}$, from $Z-2g\le \hat{Z}-\xi(z)$ and $ \mathbb{P}(Z-2g\ge \alpha'\mid  \mathcal{H}_{\ell_n})=\mathbb{P}(\hat{Z}-\xi(z)\ge \alpha\mid \mathcal{H}_{\ell_n})=1/n$, we have $\alpha'\le \alpha$. Therefore, with $\alpha'+2g=\alpha^*$ from $\mathbb{P}(Z\ge \alpha^*\mid \mathcal{H}_{\ell_n})=\mathbb{P}(Z-2g\ge \alpha' \mid \mathcal{H}_{\ell_n})=1/n$, we have     \[\alpha^*-2g\le \alpha,\] which concludes the proof.
\end{proof}


\begin{lemma}\label{lem:cov_variance_bd}
   For $l\ge 1$, let $z_1,\dots,z_l \stackrel{i.i.d.}{\sim} \mathcal D_x$ satisfying Assumption~\ref{ass:norm_bd}. Recall $\lambda=\lambda_{\min}(\mathbb{E}_{z\sim\mathcal D_x}[zz^\top])>0$.
Then
    \[\mathbb{P}\left(\frac{1}{l}\sum_{s=1}^{l}z_sz_s^\top \succeq \frac{\lambda}{2}I_d\right)\ge 1-d\exp\left(-\frac{\lambda l}{8L}\right).\]
\end{lemma}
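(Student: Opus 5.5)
This is a direct application of the matrix Chernoff inequality (Tropp, 2012) for the minimum eigenvalue of a sum of independent, bounded, positive semidefinite random matrices. That result states: if $A_1,\dots,A_l$ are independent random PSD matrices of dimension $d$ with $\lambda_{\max}(A_s)\le R$ almost surely, and $\mu_{\min}:=\lambda_{\min}\big(\sum_{s=1}^l\mathbb{E}[A_s]\big)$, then for $\delta\in[0,1)$,
\[
\mathbb{P}\Big(\lambda_{\min}\Big(\sum_{s=1}^l A_s\Big)\le (1-\delta)\mu_{\min}\Big)\le d\left(\frac{e^{-\delta}}{(1-\delta)^{1-\delta}}\right)^{\mu_{\min}/R}\le d\exp\!\left(-\frac{\delta^2\mu_{\min}}{2R}\right).
\]

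We take $A_s=z_sz_s^\top$. Each $A_s$ is PSD and rank one, with $\lambda_{\max}(A_s)=\|z_s\|_2^2\le L$ almost surely by Assumption~\ref{ass:norm_bd}, so $R=L$. The matrices are independent since the $z_s$ are i.i.d. Their common mean is $\mathbb{E}[zz^\top]$, so
\[
\mu_{\min}=\lambda_{\min}\big(l\,\mathbb{E}[zz^\top]\big)=l\lambda .
\]

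Now choose $\delta=1/2$. The simplified tail bound gives
\[
\mathbb{P}\Big(\lambda_{\min}\Big(\sum_{s=1}^l z_sz_s^\top\Big)\le \tfrac{l\lambda}{2}\Big)\le d\exp\!\left(-\frac{(1/2)^2\,l\lambda}{2L}\right)=d\exp\!\left(-\frac{\lambda l}{8L}\right).
\]

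Taking complements, with probability at least $1-d\exp(-\lambda l/(8L))$ we have $\lambda_{\min}\big(\sum_{s=1}^l z_sz_s^\top\big)> l\lambda/2$. Dividing by $l$ gives $\frac1l\sum_{s=1}^l z_sz_s^\top\succeq\frac{\lambda}{2}I_d$, which is the claim.

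The only step needing care is using the simplified exponent $\delta^2\mu_{\min}/(2R)$. This relies on the elementary inequality $-\delta-(1-\delta)\log(1-\delta)\le-\delta^2/2$ for $\delta\in[0,1)$. To verify it, let $h(\delta)$ be the left side plus $\delta^2/2$. Then $h(0)=0$ and $h'(\delta)=\log(1-\delta)+\delta\le 0$, so $h\le 0$ on $[0,1)$.
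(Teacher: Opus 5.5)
Your proposal is correct and matches the paper's proof essentially step for step: the matrix Chernoff bound (Tropp, Theorem 5.1.1) applied to $A_s=z_sz_s^\top$ with $R=L$, $\mu_{\min}=l\lambda$ and $\delta=1/2$. The one addition is your check that the exponent simplifies to $\delta^2\mu_{\min}/(2R)$, which the paper uses without verifying.
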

\begin{proof}
Let  $\mu_{\min}=\lambda_{\min}(\mathbb{E}[\sum_{s=1}^lz_sz_s^\top])$. By the matrix Chernoff bound (Theorem 5.1.1 in \cite{tropp2015introduction}) for sums of independent PSD matrices  with Assumption~\ref{ass:norm_bd}, for any $\delta\in[0,1]$,
\[
\Pr\!\left[ \lambda_{\min}\left(\sum_{s=1}^lz_sz_s^\top\right) \le (1-\delta)\mu_{\min} \right]
\;\le\; d \left(\frac{e^{-\delta}}{(1-\delta)^{\,1-\delta}}\right)^{\mu_{\min}/L}
\;\le\; d \exp\!\left( - \frac{\delta^2}{2}\cdot \frac{\mu_{\min}}{L} \right).
\]
Choosing $\delta=\tfrac12$ yields
\[
\Pr\!\left[ \lambda_{\min}\left(\sum_{s=1}^lz_sz_s^\top\right) \le \frac{\mu_{\min}}{2} \right]
\;\le\; d \exp\!\left( - \frac{\mu_{\min}}{8L} \right)= d\exp\left(-\frac{l\lambda}{8L}\right).
\]
Equivalently, with probability at least $1-d\exp(-\lambda l/(8L))$,
\[
 \sum_{s=1}^l z_s z_s^\top \succeq \frac{\mu_{\min}}{2} I_d= \frac{l\lambda}{2} I_d,
\]
which completes the proof.
\end{proof}

Let $\mathcal{E}_2=\{{\sum_{s=1}^{\ell_n} x_{s}x_{s}^\top \succeq \frac{\lambda  \ell_n}{2}I_d}\}$, which holds with probability at least $1-\frac{d}{e^{\lambda  \ell_n/8L}}$ from Lemma~\ref{lem:cov_variance_bd}. Then under $\mathcal{E}_2$,  we have $\|V^{-1}\|_2\le \|(\sum_{s=1}^{\ell_n}x_{s}x_{s}^\top)^{-1}\|_2 \le  2 \frac{1}{\lambda \ell_n}$. Then, we have  \begin{align*}
    \xi(x_i)&\le \sqrt{L\|V^{-1}\|_2}(\sigma\sqrt{d\log(n+n\ell_nL/d\beta)}+S\sqrt{\beta})(=g)\cr &\le \sqrt{L\frac{2}{\lambda \ell_n}}(\sigma\sqrt{d\log(n+n^2L/d\beta)}+S\sqrt{\beta}).
\end{align*} Here we define $h:= \sqrt{L\frac{2}{\lambda \ell_n}}(\sigma\sqrt{d\log(n+n^2L/d\beta)}+S\sqrt{\beta})$ and  $\mathcal{E}:=\mathcal{E}_1\cap \mathcal{E}_2$.
For analyzing $X_\tau$, we first examine the probability that the stopping time $\tau$ equals $i$. 
\begin{lemma}\label{lem:P_tau_i}For $i>\ell_n$, we have
    \[\mathbb{P}(\tau=i\mid \mathcal{H}_{\ell_n})=\left(1-\frac{1}{n}\right)^{i-\ell_n-1}\frac{1}{n}.\]
\end{lemma}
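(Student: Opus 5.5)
The plan is to condition on the exploration history $\mathcal{H}_{\ell_n}=\{\hat{\theta},V\}$ and show that, once this history is fixed, the decision-phase stopping events are i.i.d.\ Bernoulli trials with success probability exactly $1/n$. The stopping time is then geometric, started at $\ell_n+1$.

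\textbf{Step 1: conditional independence of the decision-phase data.} Given $\mathcal{H}_{\ell_n}$, the threshold $\alpha$ is a deterministic quantity, since it is fixed by \eqref{eq:ETD_threshold} as a function of $(\hat{\theta},V)$ alone. The confidence width $\xi(\cdot)$ is also deterministic, as it depends only on $V$ and known constants. For $i>\ell_n$, the features $x_i$ are drawn i.i.d.\ from $\mathcal{D}_x$, independently of $(x_t,\eta_t)_{t\le \ell_n}$ and hence of $\mathcal{H}_{\ell_n}$. It follows that
\[
X_i^{\mathrm{LCB}}=x_i^\top\hat{\theta}-\xi(x_i)
\]
is a fixed measurable function of $x_i$ alone. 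Therefore the events $A_i:=\{X_i^{\mathrm{LCB}}\ge\alpha\}$ for $i=\ell_n+1,\dots,n$ are conditionally independent given $\mathcal{H}_{\ell_n}$. The noise $\eta_i$ and the observation $y_i$ do not enter the decision rule at all, so they play no role here.

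\textbf{Step 2: each trial succeeds with probability $1/n$.} Each $A_i$ has conditional probability $\mathbb{P}_{z\sim\mathcal{D}_x}(Z^{\mathrm{LCB}}\ge\alpha\mid\hat{\theta},V)$, and by the choice of $\alpha$ in \eqref{eq:ETD_threshold} this equals $1/n$. This is the one delicate point. The defining equation is stated for $\{Z^{\mathrm{LCB}}\le\alpha\}$, whereas stopping uses $\{\ge\alpha\}$, so I would note that $\alpha$ is chosen so that $\mathbb{P}(Z^{\mathrm{LCB}}\ge\alpha\mid\mathcal{H}_{\ell_n})=1/n$. This holds if the conditional law of $Z^{\mathrm{LCB}}$ has no atom at $\alpha$. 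Otherwise, one reads the threshold as the exact upper-tail quantile, possibly with randomized tie-breaking. This is the same convention already used implicitly in the proof of Lemma~\ref{lem:threshold_bd}.

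\textbf{Step 3: the geometric law.} The event $\{\tau=i\}$ is exactly $A_{\ell_n+1}^c\cap\cdots\cap A_{i-1}^c\cap A_i$, because Algorithm~\ref{alg:ETD} never stops during exploration. By conditional independence, the conditional probability of this event is the product
\[
\left(1-\frac1n\right)^{i-\ell_n-1}\cdot\frac1n,
\]
which is the claimed formula. The only real obstacle is the atom and tie issue in Step 2; the rest is routine.
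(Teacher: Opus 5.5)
Your proposal is correct and follows the paper's proof. Both condition on $\mathcal{H}_{\ell_n}$, use that the decision-phase LCBs are conditionally i.i.d.\ functions of the fresh features $x_i$, and factor $\{\tau=i\}$ into $i-\ell_n-1$ non-stopping factors $1-1/n$ and one stopping factor $1/n$. Your remark on atoms is a genuine refinement: the paper's proof writes the stopping event as $\{X_i^{\mathrm{LCB}}>\alpha\}$ although Algorithm~\ref{alg:ETD} stops on $\{X_i^{\mathrm{LCB}}\ge\alpha\}$, and it tacitly assumes that the level $1-1/n$ in \eqref{eq:ETD_threshold} is attained exactly.
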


\begin{proof}
For $i>\ell_n$, we have
    \begin{align*}
    &\mathbb{P}(\tau=i\mid \mathcal{H}_{\ell_n})\cr &=\mathbb{P}(X_{\ell_n+1}^{\mathrm{LCB}}\le\alpha, \dots,X_{i-1}^{\mathrm{LCB}}\le\alpha,X_i^{\mathrm{LCB}}>\alpha \mid \mathcal{H}_{\ell_n})
    \cr &=\mathbb{P}(X_{\ell_n+1}^{\mathrm{LCB}}\le\alpha, \dots, X_{i-1}^{\mathrm{LCB}}\le\alpha \mid \mathcal{H}_{\ell_n})\mathbb{P}(X_i^{\mathrm{LCB}}> \alpha\mid\mathcal{H}_{\ell_n})
     \cr &=\mathbb{P}(X_{\ell_n+1}^{\mathrm{LCB}}\le\alpha, \dots, X_{i-1}^{\mathrm{LCB}}\le\alpha\mid \mathcal{H}_{\ell_n})\frac{1}{n},
     \end{align*}
     where the second equality is obtained from the fact that, given $\hat{\theta}$ and $V$, $X_i^{\mathrm{LCB}}$ is independent to $X_{\ell_n+1}^{\mathrm{LCB}},\dots, X_{i-1}^{\mathrm{LCB}}$. Similarly, for the last term above, we have
     \begin{align*}
     &\mathbb{P}(X_{\ell_n+1}^{\mathrm{LCB}}\le\alpha, \dots, X_{i-1}^{\mathrm{LCB}}\le\alpha\mid\mathcal{H}_{\ell_n})\frac{1}{n}
    \cr &=\mathbb{P}(X_{\ell_n+1}^{\mathrm{LCB}}\le\alpha, \dots, X_{i-2}^{\mathrm{LCB}}\le\alpha \mid \mathcal{H}_{\ell_n})\mathbb{P}(X_{i-1}^{\mathrm{LCB}}\le \alpha\mid\mathcal{H}_{\ell_n})\frac{1}{n}
    \cr &=\mathbb{P}(X_{\ell_n+1}^{\mathrm{LCB}}\le\alpha, \dots, X_{i-2}^{\mathrm{LCB}}\le\alpha \mid \mathcal{H}_{\ell_n})\left(1-\frac{1}{n}\right)\frac{1}{n}
    \cr &\quad\vdots
    \cr &=\left(1-\frac{1}{n}\right)^{i-\ell_n-1}\frac{1}{n},
\end{align*}
which concludes the proof.
\end{proof}

 From the exploration phase in the algorithm, we have $\mathbb{P}(\tau=i\mid \mathcal{E})=0$ for all $1\le i\le \ell_n$. Therefore,  we have
\begin{align}
&\mathbb{E}\left[\mathbb{E}[X_\tau \mathbbm{1}( \mathcal{E})\mid\mathcal{H}_{\ell_n}]\right]\cr &=\mathbb{E}\left[\sum_{i=1}^n \mathbb{P}(\tau=i \mid \mathcal{H}_{\ell_n})\mathbb{E}[X_i\mathbbm{1}( \mathcal{E})\mid\tau=i,\mathcal{H}_{\ell_n}]\right]
    \cr&=\mathbb{E}\left[\sum_{i=\ell_n+1}^{n} \mathbb{P}(\tau=i \mid \mathcal{H}_{\ell_n})\mathbb{E}[X_{i}\mathbbm{1}( \mathcal{E}) \mid\tau=i,\mathcal{H}_{\ell_n}]\right]
    \cr &\ge \mathbb{E}\left[\sum_{i=\ell_n+1}^{n} \mathbb{P}(\tau=i\mid \mathcal{H}_{\ell_n})\mathbb{E}[X_i^{\mathrm{LCB}}\mathbbm{1}( \mathcal{E})\mid\tau=i,\mathcal{H}_{\ell_n}]\right]\cr
    &\ge \mathbb{E}\left[\sum_{i=\ell_n+1}^{n}\left(1-\frac{1}{n}\right)^{i-\ell_n-1}\frac{1}{n}\mathbb{E}[X_i^{\mathrm{LCB}}\mathbbm{1}( \mathcal{E})\mid\tau=i,\mathcal{H}_{\ell_n}]\right]\cr 
 &= \mathbb{E}\Bigg[\sum_{i=\ell_n+1}^{n}\left(1-\frac{1}{n}\right)^{i-\ell_n-1}\frac{1}{n}\times\left(\mathbb{E}\left[\alpha\mathbbm{1}(\mathcal{E})\mid \tau=i,\mathcal{H}_{\ell_n}\right]+\mathbb{E}[(X_i^{\mathrm{LCB}}-\alpha)\mathbbm{1}(\mathcal{E})\mid X_i^{\mathrm{LCB}}\ge \alpha, \mathcal{H}_{\ell_n}]\right)\Bigg]\cr &= \mathbb{E}\Bigg[\sum_{i=\ell_n+1}^{n}\left(1-\frac{1}{n}\right)^{i-\ell_n-1}\frac{1}{n} \times\left(\mathbb{E}\left[\alpha\mathbbm{1}(\mathcal{E})\mid  \mathcal{H}_{\ell_n}\right]+\mathbb{E}[(X_i^{\mathrm{LCB}}-\alpha)^+\mathbbm{1}(\mathcal{E})\mid X_i^{\mathrm{LCB}}\ge \alpha, \mathcal{H}_{\ell_n}]\right)\Bigg]
    \cr &=\mathbb{E}\Bigg[\sum_{i=\ell_n+1}^{n}\left(1-\frac{1}{n}\right)^{i-\ell_n-1}\frac{1}{n}\times\left(\mathbb{E}\left[\alpha\mathbbm{1}(\mathcal{E})\mid  \mathcal{H}_{\ell_n}\right]+\frac{\mathbb{E}[(X_i^{\mathrm{LCB}}-\alpha)^+\mathbbm{1}(\mathcal{E})\mid  \mathcal{H}_{\ell_n}]}{\mathbb{P}(X_i^{\mathrm{LCB}}\ge \alpha\mid \mathcal{H}_{\ell_n})}\right)\Bigg]\cr& 
    \ge \mathbb{E}\Bigg[\sum_{i=\ell_n+1}^{n}\left(1-\frac{1}{n}\right)^{i-\ell_n-1}\frac{1}{n}\times\left(\mathbb{E}\left[\alpha^*\mathbbm{1}(\mathcal{E})-2g\mathbbm{1}(\mathcal{E})\mid \mathcal{H}_{\ell_n}\right]+n\mathbb{E}\left[\left(  X_i-2g-\alpha^*\right)^+\mathbbm{1}(\mathcal{E})\mid \mathcal{H}_{\ell_n}\right]\right)\Bigg],\label{eq:X_tau_bd2}\cr
\end{align}
where the second inequality is obtained from Lemma~\ref{lem:P_tau_i}, and the last inequality is obtained from Lemma~\ref{lem:threshold_bd} and \eqref{eq:X_i_bd}.
  Let i.i.d. $Z_s\sim \mathcal{D}$ for $s\in[n]$.
Then, with $\ell_n=o(n)$, for the last term in \eqref{eq:X_tau_bd2}, we have
\begin{align*}
    & \mathbb{E}\Bigg[\sum_{i=\ell_n+1}^{n}\left(1-\frac{1}{n}\right)^{i-\ell_n-1}\frac{1}{n}\times \left(\mathbb{E}\left[(\alpha^*-2g)\mathbbm{1}(\mathcal{E})\mid \mathcal{H}_{\ell_n}\right]+n\mathbb{E}\left[\left(  X_i-2g-\alpha^*\right)^+\mathbbm{1}(\mathcal{E})\mid \mathcal{H}_{\ell_n}\right]\right)\Bigg]
    \cr 
    &\ge \sum_{i=\ell_n+1}^{n}\left(1-\frac{1}{n}\right)^{i-\ell_n-1}\frac{1}{n}\times
    \mathbb{E}\left[\mathbb{E}[\alpha^*\mathbbm{1}(\mathcal{E})\mid \mathcal{H}_{\ell_n}]-2\mathbb{E}[h\mathbbm{1}(\mathcal{E})\mid \mathcal{H}_{\ell_n}]+n\mathbb{E}[\left( X_i-2h-\alpha^*\right)^+\mathbbm{1}(\mathcal{E})\mid \mathcal{H}_{\ell_n}]\right]\cr
    &= \sum_{i=\ell_n+1}^{n}\left(1-\frac{1}{n}\right)^{i-\ell_n-1}\frac{1}{n}\times
    \mathbb{E}\left[\mathbb{E}[\alpha^*\mathbbm{1}(\mathcal{E})\mid \mathcal{H}_{\ell_n}]-2\mathbb{E}[h\mathbbm{1}(\mathcal{E})\mid\mathcal{H}_{\ell_n}]+\mathbb{E}\left[\sum_{s\in[n]}\left( Z_s-2h-\alpha^*\right)^+\mathbbm{1}(\mathcal{E})\mid \mathcal{H}_{\ell_n}\right]\right]
\cr 
    &\ge \sum_{i=\ell_n+1}^{n}\left(1-\frac{1}{n}\right)^{i-\ell_n-1}\frac{1}{n}
    \mathbb{E}\left[\mathbb{E}[\alpha^*\mathbbm{1}(\mathcal{E})\mid \mathcal{H}_{\ell_n}]-2\mathbb{E}[h\mathbbm{1}(\mathcal{E})\mid\mathcal{H}_{\ell_n}]+\mathbb{E}\left[\max_{s\in[n]}\left( Z_s-2h-\alpha^*\right)^+\mathbbm{1}(\mathcal{E})\mid \mathcal{H}_{\ell_n}\right]\right] \cr
    &\ge\sum_{i=\ell_n+1}^{n}\left(1-\frac{1}{n}\right)^{i-\ell_n-1}\frac{1}{n}\times\left(\alpha^*\mathbb{P}(\mathcal{E}) -2h\mathbb{P}(\mathcal{E})+\mathbb{E}\left[\max_{s\in[n]}\left(Z_s- 2h-\alpha^*\right)\mathbbm{1}(\mathcal{E})\right]\right)\end{align*}
    \begin{align}\label{eq:X_tau_bd3}
    &=\sum_{i=\ell_n+1}^{n}\left(1-\frac{1}{n}\right)^{i-\ell_n-1}\frac{1}{n}\times\left(\alpha^*\mathbb{P}(\mathcal{E}) -2h\mathbb{P}(\mathcal{E})+\mathbb{E}\left[\max_{s\in[n]}Z_s\right]\mathbb{P}(\mathcal{E})-(2h+\alpha^*)\mathbb{P}(\mathcal{E})\right)
    \cr 
&\ge\sum_{i=\ell_n+1}^{n}\left(1-\frac{1}{n}\right)^{i-\ell_n-1}\frac{1}{n}\times\left( \mathbb{E}\left[\max_{s\in[n]}X_s\right]\mathbb{P}(\mathcal{E})-4h\right)
        \cr 
&\ge\sum_{i=\ell_n+1}^{n}\left(1-\frac{1}{n}\right)^{i-\ell_n-1}\frac{1}{n}\times\left( \mathbb{E}\left[\max_{s\in[n]}X_s\right]\mathbb{P}(\mathcal{E})-O\left(\sqrt{\frac{L\log(nL)}{\lambda \ell_n}}(\sigma\sqrt{d}+\sqrt{S})\right)\right)
    \cr& 
    \ge  \frac{1-(1-\frac{1}{n})^{n-\ell_n}}{1/n} \frac{1}{n}\left( \mathbb{E}\left[\max_{s\in[n]}X_s\right]\left(1-\frac{1}{n}-\frac{d}{e^{\lambda \ell_n/8L}}\right)-O\left(\sqrt{\frac{L\log(nL)}{\lambda \ell_n}}(\sigma\sqrt{d}+\sqrt{S})\right)\right),
\end{align}
where the first inequality is obtained from $g\le h$.

Finally, from \eqref{eq:X_tau_bd2} and \eqref{eq:X_tau_bd3}, we have
\begin{align*}
    &\liminf_{n\rightarrow \infty}\frac{ \mathbb{E}[X_\tau]}{\mathbb{E}[\max_{i\in[n]}X_i]}=\liminf_{n\rightarrow \infty}\frac{ \mathbb{E}\left[\mathbb{E}[X_\tau\mid\mathcal{H}_{\ell_n}]\right]}{\mathbb{E}[\max_{i\in[n]}X_i]}\cr&\ge \liminf_{n\rightarrow \infty}\frac{\mathbb{E}\left[\mathbb{E}[X_\tau\mathbbm{1}(\mathcal{E})\mid \mathcal{H}_{\ell_n}]\right]}{\mathbb{E}[\max_{i\in[n]}X_i]}
    \cr &\ge \liminf_{n \rightarrow \infty}\frac{1-(1-\frac{1}{n})^{n-\ell_n}}{1/n} \frac{1}{n}\left(\left(1-\frac{1}{n}-\frac{d}{e^{\lambda \ell_n/8L}}\right)-\mathcal{O}\left(\frac{1}{\mathbb{E}[\max_{i\in [n]}X_i]}  \sqrt{L\frac{\log(Ln)}{\lambda \ell_n}}(\sigma\sqrt{d}+\sqrt{S})\right)\right) 
    \cr &= \left(1-\frac{1}{e}\right)-\mathcal{O}\left(\limsup_{n \rightarrow \infty} \frac{1}{\mathbb{E}[\max_{i\in [n]}X_i]} \sqrt{\frac{L(\sigma^2d+S)\log(Ln)}{\lambda \ell_n}}\right),
\end{align*}
where the last inequality is obtained from limits $\lim_{n\rightarrow \infty}(1-1/n)^n=1/e$ and $\lim_{n\rightarrow \infty}(1-1/n)^{\ell_n}=1$ (since $\ell_n=o(n)$) and $\ell_n=\omega(\frac{L\log d}{\lambda})$. 

\subsection{Proof of Theorem~\ref{thm:greedy}}\label{app:thm_greedy}

\begin{lemma}[Theorem 2 in \cite{abbasi2011improved}]
We have
\begin{align*}
    \mathbb{P}\left(\forall i\in [n], \|\hat{\theta}_{i}-\theta\|_{V_{i}}\le \sqrt{S\beta}+\sigma\sqrt{d\log\left(\frac{1+|\mathcal{I}_i|L/d\beta}{\delta}\right)}\right)\ge 1-\delta
\end{align*}
\end{lemma}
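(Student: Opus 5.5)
The plan is to reduce the statement to the standard self-normalized martingale bound of \cite{abbasi2011improved}, applied to the subsequence of exploration rounds. Let $t_1<t_2<\cdots$ be the (random) stages with $b_{t}=1$, and re-index the exploration data as $(\tilde x_k,\tilde y_k)=(x_{t_k},y_{t_k})$, $\tilde\eta_k=\eta_{t_k}$. By construction of Algorithm~\ref{alg:greedy}, for every $i\in[n]$ the pair $(\hat\theta_i,V_i)$ coincides with the regularized least-squares estimate and Gram matrix built from the first $k=|\mathcal{I}_i|$ re-indexed samples (and with $(\hat\theta_0,V_0)=(0,\beta I_d)$ when $k=0$). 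Hence a bound holding simultaneously for all $k\ge 0$ on the re-indexed sequence yields the claimed bound simultaneously for all $i\in[n]$.

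The key step is to exhibit a filtration under which the re-indexed noise is conditionally $\sigma$-sub-Gaussian and the features are predictable. I would take $\tilde{\mathcal F}_k=\sigma\big(b_1,\dots,b_{t_{k+1}},\,x_1,\dots,x_{t_{k+1}},\,\eta_1,\dots,\eta_{t_k}\big)$, with the convention that $t_{k+1}=n+1$ if there is no $(k+1)$-st exploration round. Then $\tilde x_{k+1}$ is $\tilde{\mathcal F}_k$-measurable. Moreover, since the $b_j$ are drawn independently of the noise and the $\eta_j$ are i.i.d.\ and independent of the features, $\tilde\eta_{k+1}$ is conditionally $\sigma$-sub-Gaussian given $\tilde{\mathcal F}_k$. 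The decision-phase noises $\eta_j$ with $b_j=0$ are excluded from the filtration; they never enter $\hat\theta_i$, so omitting them loses nothing.

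This measurability and sub-Gaussianity check is the only genuinely delicate point, because the indices $t_k$ are random stopping times. I would verify it by conditioning on the entire Bernoulli sequence $(b_j)_j$ and the features: conditionally on these, the $t_k$ are fixed and the selected noises are i.i.d.\ $\sigma$-sub-Gaussian. The stopping of the algorithm does not matter either, since one can think of the exploration process as running to time $n$.

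Given this, Theorem~2 of \cite{abbasi2011improved} yields, with probability at least $1-\delta$, simultaneously for all $k\ge0$,
\[
\|\hat\theta-\theta\|_{V}\le \sigma\sqrt{2\log\!\big(\det(V)^{1/2}\det(\beta I_d)^{-1/2}/\delta\big)}+\sqrt{\beta}\,\|\theta\|_2 .
\]
Finally, I would simplify this bound:
\begin{itemize}
\item By Assumption~\ref{ass:norm_bd_S}, the bias term satisfies $\sqrt{\beta}\|\theta\|_2\le\sqrt{S\beta}$.
\item By AM--GM on the eigenvalues together with Assumption~\ref{ass:norm_bd}, $\det V\le(\beta+kL/d)^d$. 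Hence the logarithmic term is at most $d\log(1+kL/d\beta)+2\log(1/\delta)\le 2d\log\big((1+kL/d\beta)/\delta\big)$, which matches the stated form up to the absolute constant absorbed there.
\end{itemize}
Substituting $k=|\mathcal{I}_i|$ gives the lemma.
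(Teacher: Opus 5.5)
Your reduction is sound, and it is the paper's route: the paper simply invokes Theorem 2 of \cite{abbasi2011improved}. Re-indexing the exploration rounds and conditioning on $(b_j)_j$ and the features makes the design fixed and the selected noises i.i.d.\ $\sigma$-sub-Gaussian, which justifies that invocation (including $k=0$ and early stopping). The gap is in your last bullet. You bound the quantity under the square root by $2d\log\big((1+kL/d\beta)/\delta\big)$. That gives the radius $\sqrt{S\beta}+\sigma\sqrt{2d\log\big((1+|\mathcal{I}_i|L/d\beta)/\delta\big)}$, whose stochastic term is a factor $\sqrt{2}$ larger than the one claimed. The statement has no absolute constant that could absorb this. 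The same radius appears verbatim in $\xi_i(x_i)$ and in the event $\mathcal{E}_1$, and $X_i^{\mathrm{LCB}}\le X_i$ on $\mathcal{E}_1$ depends on it. As written, you prove a strictly weaker inequality.

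The fix is to keep the sharper bound. For $\delta\in(0,1)$ (for $\delta\ge 1$ the claim is vacuous) and $d\ge 2$, you have $2\log(1/\delta)\le d\log(1/\delta)$. Hence $d\log(1+kL/d\beta)+2\log(1/\delta)\le d\log\big((1+kL/d\beta)/\delta\big)$, which is exactly the displayed radius.

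For $d=1$ this step is unavailable, and the displayed bound does not follow from Theorem 2 at all; the simplified form in \cite{abbasi2011improved} carries the same implicit restriction. It can in fact fail. Take $x_t\equiv 1$, $\beta=1$, $\theta=0$ and Gaussian noise, and condition on at least $k\ge 2$ exploration rounds. Then $\|\hat\theta-\theta\|_{V}$ after $k$ rounds is distributed as $\sigma\sqrt{k/(k+1)}\,|Z|$. Its probability of exceeding $\sqrt{S}+\sigma\sqrt{\log((1+k)/\delta)}$ is $\delta^{(k+1)/(2k)+o(1)}$, which is $\gg\delta$ as $\delta\to 0$.

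So your argument with the sharper last step proves the lemma for $d\ge 2$. For $d=1$ you must keep the $2\log(1/\delta)$ term (or your $\sqrt{2}$), and that change then has to be propagated into $\xi_i$.
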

The above lemma implies that
\begin{align}\label{eq:conf_bd_greedy}
    \mathbb{P}\left(\left|x^\top (\hat{\theta}_{i}-\theta)\right|\le {\sqrt{x^\top V_{i}^{-1} x}}\left(\sigma\sqrt{d\log\left(n+n|\mathcal{I}_i|L/d\beta\right)}+\sqrt{S\beta}\right), \forall x\in \mathbb{R}^d, \forall i\in [n] \right) \ge 1-1/n.
\end{align}

Let $a_n=\lceil\sqrt{n\ell_n}\rceil$. We define an event $\mathcal{E}_1=\{ |x^\top (\hat{\theta}_{i}-\theta)|\le {\sqrt{x^\top V_{i}^{-1} x}}(\sigma\sqrt{d\log(n+n|\mathcal{I}_i|L/d\beta)}+\sqrt{S\beta}), \forall x\in \mathbb{R}^d, \forall i\in[a_n+1,n]\}$. From \eqref{eq:conf_bd_greedy}, we have $\mathbb{P}(\mathcal{E}_1)\ge 1-\frac{1}{n}$.
 Then we define  $g:=\sqrt{L\|V_{a_n}^{-1}\|_2}(\sigma\sqrt{d\log(n+n^2L/d\beta)}+\sqrt{S\beta})$ so that, for $i> a_n$, $\xi_i(x_i)\le g$ (recall $\xi_i(x_i) = \sqrt{x_i^\top V_{i}^{-1}x_i}(\sigma\sqrt{d\log(n+n|\mathcal{I}_i|L/d\beta)}+\sqrt{S\beta})$).  We denote $\mathcal{H}_{i}=\{\hat{\theta}_i,V_i,\mathcal{I}_i\}$.



    

\begin{lemma}\label{lem:adaptive_threshold_bd} Under $\mathcal{E}_1$, for any $i> a_n$, and any given $\mathcal{H}_{i}$, we have
    \[\alpha^*-2g\le \alpha_i\le \alpha^*.\]
\end{lemma}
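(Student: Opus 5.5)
The plan is to mirror the proof of Lemma~\ref{lem:threshold_bd}, with the fixed pair $(\hat{\theta},V)$ replaced by the stage-dependent pair $(\hat{\theta}_i,V_i)$. The one extra ingredient is showing that the uniform width bound $g$, which is defined through $V_{a_n}$, still dominates $\xi_i(z)$ for every $i>a_n$.

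\textbf{Step 1: monotonicity of the widths.} Fix $i>a_n$ and condition on $\mathcal{H}_i$. Because $V_i=\sum_{t\in\mathcal{I}_i}x_tx_t^\top+\beta I_d$ only gains PSD terms as $i$ increases, we have $V_i\succeq V_{a_n}$. Hence $V_i^{-1}\preceq V_{a_n}^{-1}$, and for any $z$ in the support of $\mathcal{D}_x$,
\[
z^\top V_i^{-1}z\le \|z\|_2^2\,\|V_{a_n}^{-1}\|_2\le L\|V_{a_n}^{-1}\|_2 ,
\]
using Assumption~\ref{ass:norm_bd}. We also have $|\mathcal{I}_i|\le n$, so $\log(n+n|\mathcal{I}_i|L/d\beta)\le\log(n+n^2L/d\beta)$. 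Together these give $\xi_i(z)\le g$ for all such $z$.

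\textbf{Step 2: sandwich the estimated LCB.} Define $Z=z^\top\theta$ and $\hat Z_i=z^\top\hat{\theta}_i$. The event $\mathcal{E}_1$ holds uniformly over all $x\in\mathbb{R}^d$ and all $i\in[a_n+1,n]$, so on $\mathcal{E}_1$ it applies to the fresh sample $z$, which is independent of $\mathcal{H}_i$. This yields
\[
Z-2g\le \hat Z_i-g\le \hat Z_i-\xi_i(z)=Z_i^{\mathrm{LCB}}\le Z .
\]

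\textbf{Step 3: compare quantiles.} By definition, $\alpha_i$ is the upper $1/n$-quantile of $Z_i^{\mathrm{LCB}}$ under $z\sim\mathcal{D}_x$ with $\mathcal{H}_i$ held fixed, and $\alpha^*$ is the upper $1/n$-quantile of $Z$.
\begin{itemize}
\item Since $Z_i^{\mathrm{LCB}}\le Z$ pointwise, quantiles are monotone and so $\alpha_i\le\alpha^*$.
\item Let $\alpha'$ be the upper $1/n$-quantile of $Z-2g$. Since $g$ is a constant given $\mathcal{H}_{a_n}\subseteq\mathcal{H}_i$, quantiles shift with constants and $\alpha'=\alpha^*-2g$.
\item Since $Z-2g\le Z_i^{\mathrm{LCB}}$ pointwise, we get $\alpha^*-2g=\alpha'\le\alpha_i$.
\end{itemize}

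The main obstacle I expect is the status of $\mathcal{E}_1$ as a conditioning event. It is a property of $(\hat{\theta}_i,V_i)$ only, so given $\mathcal{H}_i$ it is either true or false, and the pointwise inequalities above hold for every $z$ once it is true. The quantile comparison can then be carried out under $\mathbb{P}_{z\sim\mathcal{D}_x}(\cdot\mid\mathcal{H}_i)$ exactly as in Lemma~\ref{lem:threshold_bd}. If the distribution has atoms, I would interpret the quantiles as generalized inverses, for which monotonicity and translation equivariance still hold.
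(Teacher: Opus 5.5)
Your proof is correct and follows the paper's argument: on $\mathcal{E}_1$ you use the same sandwich $Z-2g\le \hat Z_i-g\le Z_i^{\mathrm{LCB}}\le Z$, then quantile monotonicity for $\alpha_i\le\alpha^*$, and the shift $\alpha'=\alpha^*-2g$ for the lower bound. Your Step 1, which uses $V_i\succeq V_{a_n}$ and $|\mathcal{I}_i|\le n$ to get $\xi_i(z)\le g$, just makes explicit what the paper asserts without proof when it defines $g$.
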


\begin{proof}

For $z\sim \mathcal{D}$, we define $Z=z^\top \theta$ and  $\hat{Z}_i=z^\top\hat{\theta}_{i}$. Then, under $\mathcal{E}_1$, for any given $V_i$ and $\hat{\theta}_{i}$, we have $Z-2g\le \hat{Z}_i-g\le\hat{Z}_i-\xi_i(z)\le {Z}$ with $\xi_i(z)\le g$. 

Let $\alpha^*$ be the oracle threshold satisfying $\mathbb{P}(Z\ge \alpha^*|\mathcal{H}_{i})(=\mathbb{P}(Z\ge \alpha^*))=1/n$. From $\hat{Z}_{i}-\xi_i(z)\le Z$ and $ \mathbb{P}(\hat{Z}_{i}-\xi_i(z)\ge \alpha_i \mid  \mathcal{H}_{i})=\mathbb{P}(Z\ge \alpha^*\mid  \mathcal{H}_{i})(=1/n)$, we can easily obtain \[\alpha_i \le \alpha^*.\] 
 Likewise,  for   $\alpha'$ s.t. $\mathbb{P}(Z-2g\ge \alpha'\mid \mathcal{H}_{i})=\frac{1}{n}$, from $Z-2g\le \hat{Z}_{i}-g\le \hat{Z}_{i}-\xi_i(z)$ and $ \mathbb{P}(Z-2g\ge \alpha'\mid \mathcal{H}_{i})=\mathbb{P}(\hat{Z}_{i}-\xi_i(z)\ge \alpha_i \mid \mathcal{H}_{i})$, we have $\alpha'\le \alpha_i $. Therefore, with $\alpha'+2g=\alpha^*$ from $\mathbb{P}(Z-2g\ge \alpha'\mid \mathcal{H}_i)=\mathbb{P}(Z\ge \alpha^*\mid \mathcal{H}_i)=1/n$, we have     \[\alpha^*-2g\le \alpha_i,\] which concludes the proof.
\end{proof}


\begin{lemma}[Multiplicative Chernoff Bound]
    Let $Z_1,\dots Z_l$ be Bernoulli random variables with mean $\mu$. Then for $0\le\delta\le1$ we have
    \[\mathbb{P}\left(\left|\sum_{s=1}^{l}Z_s- l\mu\right|\ge \delta l\mu\right)\le 2\exp(-\delta^2l\mu/3)\]
\end{lemma}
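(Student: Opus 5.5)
The plan is the standard exponential-moment (Chernoff) argument, applied separately to the upper and lower tails and then combined by a union bound. Throughout I read the statement as intended in its application (the $b_i$'s of Algorithm~\ref{alg:greedy}): the $Z_s$ are \emph{independent} Bernoulli$(\mu)$. Write $S_l=\sum_{s=1}^l Z_s$ and $m=l\mu$. The event $|S_l-m|\ge\delta m$ is contained in $\{S_l\ge(1+\delta)m\}\cup\{S_l\le(1-\delta)m\}$. It therefore suffices to show that each of these two events has probability at most $\exp(-\delta^2 m/3)$.

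\textbf{Step 1 (MGF bound).} For any $t\in\mathbb{R}$, independence gives
$\mathbb{E}[e^{tS_l}]=\prod_s(1+\mu(e^t-1))$. Using $1+x\le e^x$, this is at most $\exp(m(e^t-1))$.

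\textbf{Step 2 (upper tail).} By Markov's inequality applied to $e^{tS_l}$ with $t>0$,
\[
\mathbb{P}(S_l\ge(1+\delta)m)\le \exp\bigl(m(e^t-1)-t(1+\delta)m\bigr).
\]
Choosing $t=\log(1+\delta)$ gives the bound $\bigl(e^\delta/(1+\delta)^{1+\delta}\bigr)^m$. It then remains to check the elementary inequality
\[
(1+\delta)\log(1+\delta)\ge \delta+\delta^2/3 \quad \text{for } \delta\in[0,1].
\]
I would prove it by letting $f(\delta)$ be the difference of the two sides. Then $f(0)=0$ and $f'(\delta)=\log(1+\delta)-2\delta/3$. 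This derivative vanishes at $0$, is concave, and at $\delta=1$ equals $\log 2-2/3>0$. Hence $f'\ge0$ on $[0,1]$, and so $f\ge0$.

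\textbf{Step 3 (lower tail).} Apply Markov to $e^{-tS_l}$ with $t=-\log(1-\delta)$ for $\delta<1$. This gives the bound $\bigl(e^{-\delta}/(1-\delta)^{1-\delta}\bigr)^m$. Next use
\[
(1-\delta)\log(1-\delta)\ge-\delta+\delta^2/2,
\]
which follows from the series $\log(1-\delta)=-\sum_k\delta^k/k$, or again by checking the derivative. This yields $\exp(-\delta^2m/2)\le\exp(-\delta^2m/3)$. For the endpoint $\delta=1$, the event is $S_l=0$, with probability $(1-\mu)^l\le e^{-m}\le e^{-m/3}$.

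\textbf{Step 4 (combine).} A union bound over the two tails gives the factor $2$ and completes the proof.

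The only mildly delicate step is the calculus inequality in Step 2, since its constant $1/3$ is what sets the final exponent. Everything else is routine.
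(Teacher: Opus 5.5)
Your proof is correct: the paper does not prove this lemma but quotes it as the standard multiplicative Chernoff bound, and your argument (MGF bound, the inequalities $(1+\delta)\log(1+\delta)\ge\delta+\delta^2/3$ and $(1-\delta)\log(1-\delta)\ge-\delta+\delta^2/2$, the separate $\delta=1$ endpoint, and a union bound) is exactly the textbook proof it relies on. You are also right to add independence, which the statement omits but which is necessary (the bound fails for $Z_1=\dots=Z_l$); it holds in the paper's application because the $b_i$ in Algorithm~\ref{alg:greedy} are drawn i.i.d.
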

From the above lemma, we define $\mathcal{E}_2=\left\{\left||\mathcal{I}_{i}|-i\sqrt{\ell_n/n}\right|\le \frac{1}{2}i\sqrt{\ell_n/n}\:,  i\in\{a_n,n\}\right\}$, which holds with probability at least $1-2\exp(\frac{-\ell_n}{12})-2\exp(\frac{-\sqrt{n\ell_n}}{12}).$

 From Lemma~\ref{lem:cov_variance_bd}, for any $l\ge 1$, suppose $z_1,\dots,z_{l}\sim \mathcal{D}_x$ are i.i.d drawn from a distribution $\mathcal{D}_x$ satisfying Assumption~\ref{ass:norm_bd}.
    Recall $\lambda =\lambda_{\min}(\mathbb{E}_{z\sim \mathcal{D}_x}[zz^\top])>0$. We have that
    \[\mathbb{P}\left(\frac{1}{l}\sum_{s=1}^{l}z_sz_s^\top \succeq \frac{\lambda}{2}I_d\right)\ge 1-d\exp\left(-\frac{\lambda l}{8L}\right).\]
Then, we define $\mathcal{E}_3=\{{\sum_{s\in \mathcal{I}_{a_n}} x_{s}x_{s}^\top \succeq \frac{|\mathcal{I}_{a_n}|}{2}\lambda I_d}\}$, which holds, under $\mathcal{E}_2$, with probability at least $1-\frac{d}{e^{\lambda \ell_n/(16L)}}$. This implies $\mathbb{P}(\mathcal{E}_2\cap \mathcal{E}_3)=\mathbb{P}( \mathcal{E}_3\mid \mathcal{E}_2)\mathbb{P}(\mathcal{E}_2)\ge \left(1-\frac{d}{e^{\lambda \ell_n/(16L)}}\right)\left(1-2\exp(\frac{-\ell_n}{12})-2\exp(\frac{-\sqrt{n\ell_n}}{12})\right)$.

Then under $\mathcal{E}_2\cap \mathcal{E}_3$,  we have $\|V_{a_n}^{-1}\|_2\le \|(\sum_{s\in I_{a_n}}x_{s}x_{s}^\top)^{-1}\|_2 \le  2 \frac{1}{\lambda |I_{a_n}|} \le  4 \frac{1}{\lambda \ell_n}$. Then for $i> a_n$, we have  \begin{align}\xi_i(x_i)&\le \sqrt{L\|V_{a_n}^{-1}\|_2}(\sigma\sqrt{d\log(n+n^2L/d\beta)}+\sqrt{S\beta})(=g)\cr&\le \sqrt{L\frac{4}{\lambda \ell_n}}(\sigma\sqrt{d\log(n+n^2L/d\beta)}+\sqrt{S\beta}).\end{align} Here we define $h:= \sqrt{L\frac{4}{\lambda \ell_n}}(\sigma\sqrt{d\log(n+n^2L/d\beta)}+\sqrt{S\beta})$ and  $\mathcal{E}:=\mathcal{E}_1\cap \mathcal{E}_2\cap \mathcal{E}_3$.

We define the set of decision stages until $i$ as $\mathcal{J}_i:=[i]\backslash \mathcal{I}_i$ so that $\mathcal{J}_i\cup \mathcal{I}_i=[i]$ and $\mathcal{J}_1\subseteq \mathcal{J}_2, \dots, \subseteq \mathcal{J}_n$. Then, we analyze the stopping probability at $i$ in the following lemma. 
\begin{lemma} For $i\in \mathcal{J}_n$ with any given $\mathcal{J}_i=\{j_1,j_2,\dots, j_{|\mathcal{J}_i|}\}$, we have
    \[\mathbb{P}(\tau=i\mid \mathcal{J}_i,\mathcal{E})=\left(1-\frac{1}{n}\right)^{|\mathcal{J}_i|-1}\frac{1}{n}.\]
\end{lemma}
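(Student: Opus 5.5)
The plan is to mirror the proof of Lemma~\ref{lem:P_tau_i}, replacing the fixed exploration prefix by the random set of decision stages $\mathcal{J}_i$. The event $\{\tau=i\}$ for $i\in\mathcal{J}_n$ means two things: at every earlier decision stage $j_k\in\mathcal{J}_i\setminus\{i\}$ we had $X^{\mathrm{LCB}}_{j_k}<\alpha_{j_k}$, and at stage $i$ we have $X^{\mathrm{LCB}}_i\ge\alpha_i$. Exploration stages never trigger a stop, so they do not appear in this event.

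The key observation is conditional independence. At a decision stage $j$, the statistics $(\hat\theta_j,V_j,\mathcal{I}_j)$ equal those from the last exploration step and depend only on data $(x_t,y_t)_{t\in\mathcal{I}_j}$ and the coins $b_t$. The fresh feature $x_j\sim\mathcal{D}_x$ is independent of all of these and of the features at other decision stages. By the definition \eqref{eq:adaptive_threshold} of $\alpha_j$, it follows that
\[
\mathbb{P}\bigl(X^{\mathrm{LCB}}_j\ge\alpha_j\mid \mathcal{H}_j\bigr)=\tfrac1n
\]
for every realization of $\mathcal{H}_j$. This assumes the LCB law is continuous at $\alpha_j$, or that ties are broken by randomization, as the paper implicitly does.

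I then peel off stages one at a time, from $i$ backward. Let $\mathcal{G}_{j}$ be the sigma-field generated by the coins $b_1,\dots,b_n$, all exploration data, and the decision-stage features before $j$. Conditioning on $\mathcal{G}_i$, the indicator of the stop at $i$ has conditional probability exactly $1/n$, while the indicators for earlier decision stages are $\mathcal{G}_i$-measurable. Hence
\[
\mathbb{P}(\tau=i\mid\mathcal{J}_i)=\tfrac1n\,\mathbb{P}\bigl(X^{\mathrm{LCB}}_{j_k}<\alpha_{j_k}\ \forall k<|\mathcal{J}_i|\mid\mathcal{J}_i\bigr).
\]
Iterating gives one factor $(1-1/n)$ per earlier decision stage, which yields $(1-\frac1n)^{|\mathcal{J}_i|-1}\frac1n$. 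The coins $b_t$ are independent of everything else, so conditioning on $\mathcal{J}_i$ does not disturb these conditional laws. The thresholds change over time only through exploration data, and that data is independent of the decision-stage features.

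The main obstacle is the conditioning on $\mathcal{E}$. The event $\mathcal{E}_1$ concerns only exploration data and noise, and $\mathcal{E}_2,\mathcal{E}_3$ concern only the coins and exploration features. So $\mathcal{E}$ is measurable with respect to the exploration data and the coins, and is independent of the decision-stage features. Once we additionally condition on $\mathcal{E}$, the exact $1/n$ law at each decision stage is therefore unchanged: every conditional probability above holds pointwise in the exploration history, so it survives further conditioning on any event in that history, including $\mathcal{E}$ and $\mathcal{J}_i$. To close the argument I will state explicitly that the noises $\eta_j$ at decision stages, and the features $x_j$ there, never enter $\hat\theta$ or $V$. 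That is exactly what makes this measurability hold.
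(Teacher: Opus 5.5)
Your proposal is correct and follows the same route as the paper. Both peel off decision stages backward, using the fact that the fresh feature at a decision stage is independent of the current $\mathcal{H}_j$, so stage $i$ contributes exactly $1/n$ and each earlier decision stage contributes $1-1/n$. Your explicit filtration, which contains all coins and all exploration data so that $\mathcal{E}$ and $\mathcal{J}_i$ are measurable from the outset, makes rigorous what the paper's conditioning on $(\mathcal{H}_j,\mathcal{J}_i,\mathcal{E})$ leaves implicit. The same holds for your remark on ties at $\alpha_j$.
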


\begin{proof}
For notation simplicity, we define $\mathcal{J}_i^{(k)}:=\{j_1,\dots,j_k\}\subseteq \mathcal{J}_i$ for $k\in[|\mathcal{J}_i|]$.
Then, for $i\in \mathcal{J}_n$, we have
    \begin{align*}
    &\mathbb{P}(\tau=i\mid\mathcal{J}_i,\mathcal{E})\cr &=\mathbb{E}[\mathbb{P}(\tau=i\mid \mathcal{H}_{i},\mathcal{J}_i,\mathcal{E})\mid \mathcal{J}_i,\mathcal{E}]\cr &=\mathbb{E}[\mathbb{P}(\{X_{t}^{\mathrm{LCB}}\le\alpha \:\forall t\in \mathcal{J}_{i}^{(|\mathcal{J}_i|-1)}\}, X_i^{\mathrm{LCB}}>\alpha \mid \mathcal{H}_{i},\mathcal{J}_i,\mathcal{E})\mid \mathcal{J}_i,\mathcal{E}]
    \cr &=\mathbb{E}\left[\mathbb{P}(\{X_{t}^{\mathrm{LCB}}\le\alpha \:\forall t\in \mathcal{J}_i^{(|\mathcal{J}_i|-1)}\} \mid \mathcal{H}_{i},\mathcal{J}_i,\mathcal{E})\mathbb{P}(X_i^{\mathrm{LCB}}> \alpha\mid\mathcal{H}_{i},\mathcal{J}_i,\mathcal{E})\mid \mathcal{J}_i,\mathcal{E}\right]
    \cr &=\mathbb{E}\left[\mathbb{P}(\{X_{t}^{\mathrm{LCB}}\le\alpha \:\forall t\in \mathcal{J}_i^{(|\mathcal{J}_i|-1)}\} \mid \mathcal{H}_{i},\mathcal{J}_i,\mathcal{E})\mid \mathcal{J}_i,\mathcal{E}\right]\frac{1}{n}
     \cr &=\mathbb{P}(\{X_{t}^{\mathrm{LCB}}\le\alpha \:\forall t\in \mathcal{J}_i^{(|\mathcal{J}_i|-1)}\}\mid \mathcal{J}_i,\mathcal{E})\frac{1}{n}
     \cr&=\mathbb{E}[\mathbb{P}(\{X_{t}^{\mathrm{LCB}}\le\alpha \:\forall t\in \mathcal{J}_i^{(|\mathcal{J}_i|-1)}\} \mid \mathcal{H}_{j_{|\mathcal{J}_i|-1}},\mathcal{J}_i,\mathcal{E})\mid \mathcal{J}_i,\mathcal{E}]\frac{1}{n}
    \cr &=\mathbb{E}\left[\mathbb{P}(\{X_{t}^{\mathrm{LCB}}\le\alpha \:\forall t\in \mathcal{J}_i^{(|\mathcal{J}_i|-2)}\} \mid \mathcal{H}_{j_{|\mathcal{J}_i|-1}},\mathcal{J}_i,\mathcal{E})\times\mathbb{P}(X_{j_{|\mathcal{J}_i|-1}}^{\mathrm{LCB}}\le \alpha\mid\mathcal{H}_{j_{|\mathcal{J}_i|-1}}, \mathcal{J}_i,\mathcal{E})\mid \mathcal{J}_i,\mathcal{E}\right]\frac{1}{n}
    \cr &=\mathbb{E}\left[\mathbb{P}(\{X_{t}^{\mathrm{LCB}}\le\alpha \:\forall t\in \mathcal{J}_i^{(|\mathcal{J}_i|-2)}\}\mid \mathcal{H}_{j_{|\mathcal{J}_i|-1}},\mathcal{J}_i,\mathcal{E})\mid \mathcal{J}_i,\mathcal{E}\right]\left(1-\frac{1}{n}\right)\frac{1}{n}
    \cr &=\mathbb{P}(\{X_{t}^{\mathrm{LCB}}\le\alpha \:\forall t\in \mathcal{J}_i^{(|\mathcal{J}_i|-2)}\}\mid \mathcal{J}_i,\mathcal{E})\left(1-\frac{1}{n}\right)\frac{1}{n}
    \cr &\quad\vdots
    \cr &=\left(1-\frac{1}{n}\right)^{|\mathcal{J}_i|-1}\frac{1}{n}
\end{align*}
\end{proof}

 From the decision strategy of the algorithm, we have $\mathbb{P}(\tau=i\mid\mathcal{J}_i,\mathcal{E})=0$ for all $i\in \mathcal{I}_n$. Therefore, for analyzing $X_\tau$, we have
\begin{align}\label{eq:X_tau_bd1_greedy}
    &\mathbb{E}[X_\tau\mid\mathcal{E}]\cr &=\mathbb{E}\left[\sum_{i=1}^n \mathbb{P}(\tau=i \mid \mathcal{J}_i,\mathcal{E})\mathbb{E}[X_i\mid\tau=i,\mathcal{J}_i,\mathcal{E}]\mid \mathcal{E}\right]
    \cr&=\mathbb{E}\left[\sum_{i\in \mathcal{J}_n} \mathbb{P}(\tau=i \mid \mathcal{J}_i,\mathcal{E})\mathbb{E}[X_{i} \mid\tau=i,\mathcal{J}_i,\mathcal{E}]\mid \mathcal{E}\right]
    \cr&\ge\mathbb{E}\left[\sum_{i\in \mathcal{J}_n\backslash[a_n]} \mathbb{P}(\tau=i \mid \mathcal{J}_i,\mathcal{E})\mathbb{E}[X_{i}\mid\tau=i,\mathcal{J}_i,\mathcal{E}]\mid \mathcal{E}\right]
    \cr &\ge \mathbb{E}\left[\sum_{i\in \mathcal{J}_n\backslash[a_n]} \mathbb{P}(\tau=i\mid \mathcal{J}_i,\mathcal{E})\mathbb{E}\left[\mathbb{E}[X_i^{\mathrm{LCB}}\mid\tau=i,\mathcal{H}_i,\mathcal{J}_i,\mathcal{E}]\mid \tau=i,\mathcal{J}_i,\mathcal{E}\right]\mid \mathcal{E}\right]\cr
    &\ge \mathbb{E}\left[\sum_{i\in \mathcal{J}_n\backslash[a_n]}\left(1-\frac{1}{n}\right)^{|\mathcal{J}_i|-1}\frac{1}{n}\mathbb{E}\left[\mathbb{E}[X_i^{\mathrm{LCB}}\mid\tau=i,\mathcal{H}_i,\mathcal{J}_i,\mathcal{E}]\mid\tau=i,\mathcal{J}_i,\mathcal{E}\right]\mid \mathcal{E}\right]\cr
    &\ge \mathbb{E}\left[\sum_{i\in \mathcal{J}_n\backslash[a_n]}\left(1-\frac{1}{n}\right)^{|\mathcal{J}_i|-1}\frac{1}{n}\mathbb{E}\left[\mathbb{E}[X_i^{\mathrm{LCB}}\mid\tau=i,\mathcal{H}_i,\mathcal{J}_i,\mathcal{E}]\mid \tau=i,\mathcal{J}_i,\mathcal{E}\right]\mid \mathcal{E}\right].
    \end{align}
For the last term above, for $i\in \mathcal{J}_n\backslash[a_n]$, we have 

    \begin{align}\label{eq:X_tau_bd2_greedy}
    &\mathbb{E}[X_i^{\mathrm{LCB}}\mid\tau=i,\mathcal{H}_i,\mathcal{J}_i,\mathcal{E}]\cr &= \mathbb{E}\left[\alpha_i\mid \tau=i, \mathcal{H}_i,\mathcal{J}_i,\mathcal{E}\right]+\mathbb{E}\left[X_i^{\mathrm{LCB}}-\alpha_i\mid X_i^{\mathrm{LCB}}\ge \alpha_i, \mathcal{H}_i,\mathcal{J}_i,\mathcal{E}\right]
    \cr &=\mathbb{E}\left[\alpha_i\mid  \mathcal{H}_i,\mathcal{E}\right]+\frac{\mathbb{E}[(X_i^{\mathrm{LCB}}-\alpha_i)^+\mid  \mathcal{H}_i,\mathcal{J}_i,\mathcal{E}]}{\mathbb{P}(X_i^{\mathrm{LCB}}\ge \alpha_i\mid \mathcal{H}_i,\mathcal{J}_i,\mathcal{E})}\cr& 
    \ge  \mathbb{E}\left[\alpha^*-2g\mid \mathcal{H}_i,\mathcal{E}\right]+n\mathbb{E}\left[\left(  X_i-2g-\alpha^*\right)^+\mid\mathcal{H}_i,\mathcal{E}\right] 
\end{align}
where the last term is obtained from Lemma~\ref{lem:adaptive_threshold_bd}, $\xi_i(x_i)\le g$, and the definition of $\alpha_i$. 

In what follows, we consider the case of 
$\mathbb{E}[\max_{i\in [n]}X_i]-\mathcal{O}\left(\sqrt{dL(\sigma^2d+S)\frac{\log(Ln)}{\lambda \ell_n}}\right)>0$, because otherwise, it holds trivially:
\[\mathbb{E}[X_\tau \mid \mathcal{E}]\ge \left( \left(1-\frac{1}{n}\right)^{\sqrt{n\ell_n}}-\left(1-\frac{1}{n}\right)^{n-\frac{3}{2}\sqrt{n\ell_n}-1}\right)\left(\mathbb{E}[\max_{i\in [n]}X_i]-\mathcal{O}\left(\sqrt{Ld(\sigma^2d+S)\frac{\log(Ln)}{\lambda \ell_n}}\right)\right).\]

Recall $h:= \sqrt{L\frac{4}{\lambda \ell_n}}(\sigma\sqrt{d\log(n+n^2L/d\beta)}+\sqrt{S\beta})$. Let i.i.d. $Z_k\sim \mathcal{D}$ for $k\in[n]$.
Then, for the last term above in \eqref{eq:X_tau_bd1_greedy} with \eqref{eq:X_tau_bd2_greedy}, under $\mathcal{E}$, we have
\begin{align}\label{eq:X_tau_bd3_greedy}
    &\sum_{i\in \mathcal{J}_n\backslash [a_n]}\left(1-\frac{1}{n}\right)^{|\mathcal{J}_i|-1}\frac{1}{n} \left(\mathbb{E}\left[\alpha^*-2g\mid\mathcal{H}_i,\mathcal{E}\right]+n\mathbb{E}\left[\left(  X_i-2g-\alpha^*\right)^+\mid\mathcal{H}_i,\mathcal{E}\right]\right)
    \cr   &= \sum_{i\in \mathcal{J}_n\backslash [a_n]}\left(1-\frac{1}{n}\right)^{|\mathcal{J}_i|-1}\frac{1}{n} \left(\mathbb{E}\left[\alpha^*-2g\mid\mathcal{H}_i,\mathcal{E}\right]+n\mathbb{E}\left[\left(  Z_1-2g-\alpha^*\right)^+\mid\mathcal{H}_i,\mathcal{E}\right]\right)
        \cr   &\ge \sum_{i\in \mathcal{J}_n\backslash [a_n]}\left(1-\frac{1}{n}\right)^{|\mathcal{J}_i|-1}\frac{1}{n} \left(\mathbb{E}\left[\alpha^*-2g\mid\mathcal{H}_i,\mathcal{E}\right]+\mathbb{E}\left[\max_{k\in[n]}\left(  Z_k-2g-\alpha^*\right)^+\mid\mathcal{H}_i,\mathcal{E}\right]\right)
    \cr 
    &\ge\sum_{i\in \mathcal{J}_n\backslash [a_n]}\left(1-\frac{1}{n}\right)^{|\mathcal{J}_i|-1}\frac{1}{n}
    \left(\mathbb{E}\left[\max_{k\in[n]}Z_k\mid \mathcal{H}_i,\mathcal{E}\right]-4h\right)
        \cr 
    &\ge \left( \left(1-\frac{1}{n}\right)^{\sqrt{n\ell_n}}-\left(1-\frac{1}{n}\right)^{|\mathcal{J}_n|-1}\right)
     \left(\mathbb{E}\left[\max_{k\in[n]}Z_k\mid \mathcal{H}_i,\mathcal{E}\right]-4h\right)
        \cr 
    &=\left( \left(1-\frac{1}{n}\right)^{\sqrt{n\ell_n}}-\left(1-\frac{1}{n}\right)^{n-|\mathcal{I}_n|-1}\right)\left(\mathbb{E}\left[\max_{k\in[n]}Z_k\right]-4h\right)
             \cr 
    &\ge \left( \left(1-\frac{1}{n}\right)^{\sqrt{n\ell_n}}-\left(1-\frac{1}{n}\right)^{n-\frac{3}{2}\sqrt{n\ell_n}-1}\right)\left(\mathbb{E}\left[\max_{k\in[n]}Z_k\right]-4h\right),
\end{align} where the last inequality is obtained from $\mathcal{E}$

Finally, from \eqref{eq:X_tau_bd1_greedy}, \eqref{eq:X_tau_bd2_greedy}, and \eqref{eq:X_tau_bd3_greedy}, we have

\begin{align*}
    &\liminf_{n\rightarrow \infty}\frac{ \mathbb{E}[X_\tau]}{\mathbb{E}[\max_{i\in[n]}X_i]}\cr&\ge\liminf_{n\rightarrow \infty}\frac{ \mathbb{E}[X_\tau\mid \mathcal{E}]\mathbb{P}(\mathcal{E})}{\mathbb{E}[\max_{i\in[n]}X_i]}
    \cr &\ge \liminf_{n \rightarrow \infty}\frac{1}{\mathbb{E}[\max_{i\in[n]}X_i]}\left( \left(1-\frac{1}{n}\right)^{\sqrt{n\ell_n}}-\left(1-\frac{1}{n}\right)^{n-\frac{3}{2}\sqrt{n\ell_n}-1}\right)
\left(\mathbb{E}\left[\max_{k\in[n]}X_k\right]-4h\right)\mathbb{P}(\mathcal{E})\cr 
       &\ge \liminf_{n \rightarrow \infty}\left( \left(1-\frac{1}{n}\right)^{\sqrt{n\ell_n}}-\left(1-\frac{1}{n}\right)^{n-\frac{3}{2}\sqrt{n\ell_n}-1}\right)\left(1-\mathcal{O}\left(\frac{1}{\mathbb{E}[\max_{i\in [n]}X_i]}\sqrt{\frac{Ld(\sigma^2d+S)\log(Ln)}{\lambda \ell_n}}\right)\right)\mathbb{P}(\mathcal{E})\cr
     &=  \left(1-\frac{1}{e}\right)\Bigg(1-\mathcal{O}\bigg(\limsup_{n \rightarrow \infty}\frac{1}{\mathbb{E}[\max_{i\in [n]}X_i]}\sqrt{\frac{Ld(\sigma^2d+S)\log(Ln)}{\lambda \ell_n}}\bigg)\Bigg), 
\end{align*}
where the last equality is obtained from $\ell_n=\Omega(\max\{\frac{L\log d\log n}{\lambda},\log n\})$ and $\ell_n=o(n)$, and $\mathbb{P}(\mathcal{E})\ge \left(1-\frac{1}{n}-\left(1-\left(1-\frac{d}{e^{\lambda \ell_n/(16L)}}\right)\left(1-2\exp(\frac{-\ell_n}{12})-2\exp(\frac{-\sqrt{n\ell_n}}{12})\right)\right)\right)$.

\subsection{Proof of Proposition~\ref{prop:noniid_difficulty}}\label{app:noniid_difficulty}
\textbf{We first provide a proof for} $
\frac{\mathbb{E}[x_\tau^\top \theta]}{\mathbb{E}[\max_{i \in [n]} x_i^\top \theta]} \le \frac{1}{d}.$
Let $\theta = (\theta_1,\dots,\theta_d) \in \mathbb{R}^d$.  
Consider a non-identical distribution $\mathcal{D}_{x,i}$ that generates the following deterministic points:
\[
x_1 = (1,0,\dots,0), \quad 
x_2 = (0,1,0,\dots,0), \quad \dots, \quad 
x_d = (0,\dots,0,1), \quad 
x_i = (0,\dots,0) \;\; \text{for } i \in \{d+1,\dots,n\}.
\]
For any algorithm, let $\tau$ denote its stopping time.  

\textbf{Case 1.} Set $\theta_1=\epsilon$ for $0<\epsilon<1$. If $\mathbb{P}(\tau=1) \le 1/d$, we set  $\theta_2 = \dots = \theta_d = 0$. Then
\[
\mathbb{E}\!\left[\max_{i \in [n]} x_i^\top \theta \right] = \theta_1,
\qquad 
\mathbb{E}[x_\tau^\top \theta] \le \frac{\theta_1}{d},
\]
so the competitive ratio satisfies $\frac{\mathbb{E}[x_\tau^\top \theta]}{\mathbb{E}[\max_{i \in [n]} x_i^\top \theta]} \le 1/d$.  

\textbf{Case 2.} Otherwise if $\mathbb{P}(\tau=1) > 1/d$ we set $\theta_2=\theta_1/\epsilon=1$ for $0 < \epsilon < 1$. If $\mathbb{P}(\tau=2) \le 1/d$, then we set $\theta_3 = \theta_4 = \dots = \theta_d=0$. Then
\[
\mathbb{E}\!\left[\max_{i \in [n]} x_i^\top \theta \right] = 1,
\qquad 
\mathbb{E}[x_\tau^\top \theta] \le \frac{\theta_2}{d}+\epsilon=\frac{1}{d}+\epsilon,
\]
again yielding, as $\epsilon\to 0$, $\frac{\mathbb{E}[x_\tau^\top \theta]}{\mathbb{E}[\max_{i \in [n]} x_i^\top \theta]} \le 1/d$.  

\textbf{Case 3.}  Likewise, otherwise if $ \mathbb{P}(\tau=2) > 1/d$, we set $\theta_3=\theta_2/\epsilon=1/\epsilon$. If  $\mathbb{P}(\tau=3) \le 1/d$, then we set $\theta_4=\theta_5=\dots=\theta_d=0$. We have 
\[
\mathbb{E}\!\left[\max_{i \in [n]} x_i^\top \theta \right] = \theta_3=\frac{1}{\epsilon},
\qquad 
\mathbb{E}[x_\tau^\top \theta] \le \frac{1}{\epsilon d}+1+\epsilon,
\]
again yielding, as $\epsilon\to 0$, $\frac{\mathbb{E}[x_\tau^\top \theta]}{\mathbb{E}[\max_{i \in [n]} x_i^\top \theta]} \le 1/d$.  

There must exist some $i \in \{1,\dots,d\}$ such that $\mathbb{P}(\tau=i) \le 1/d$. Therefore, in a similar way, by choosing $\theta$ to place the largest mass of $\theta_{i-1}/\epsilon$ on that coordinate, as $\epsilon \to 0$, we can show that
\[
\frac{\mathbb{E}[x_\tau^\top \theta]}{\mathbb{E}[\max_{i \in [n]} x_i^\top \theta]} \le \frac{1}{d}.
\]

Thus, in all cases, one can construct $\theta$ such that the competitive ratio satisfies $\frac{\mathbb{E}[x_\tau^\top \theta]}{\mathbb{E}[\max_{i \in [n]} x_i^\top \theta]} \le 1/d$.

\textbf{We next provide a proof for} $
\frac{\mathbb{E}[x_\tau^\top \theta]}{\mathbb{E}[\max_{i \in [n]} x_i^\top \theta]} \le \frac{1}{2}.$  We can construct $D_{x,1}$ such that  $x_1=(1,0,\dots,0)$ are drawn deterministically. We also consider $\theta=(1,0,\dots,0)$ such that $X_1=1$. We also consider $\mathcal{D}_{x,2}$ such that it generates $x_{2}=(1/\epsilon,0,\dots,0)$ with probability $\epsilon$ and otherwise, $x_{2}=(0,0,\dots,0)$ with probability $1-\epsilon$. For $i\ge 3$, we consider $x_{i}=(0,\dots,0)$.  

Then for any algorithm $\tau$ which does not know $X_i$ for $i\in[n]$ in advance, we have $\mathbb{E}[x_\tau^\top \theta]\le 1$. On the other hands, the prophet who knows $X_i$ in advance can stop at $\tau=1$ with $X_1=1$ if $X_{2}=0$ with probability $1-\epsilon$ or stop at $\tau=2$ if $X_{2}=1/\epsilon$ with probability $\epsilon$. This implies that $\frac{\mathbb{E}[x_\tau^\top \theta]}{\mathbb{E}[\max_{i \in [n]} x_i^\top \theta]} \le 1/(2-\epsilon)$. As $\epsilon \to 0$, we can conclude $\frac{\mathbb{E}[x_\tau^\top \theta]}{\mathbb{E}[\max_{i \in [n]} x_i^\top \theta]} \le 1/2$.



\subsection{Proof of Theorem~\ref{thm:ETD_noniid}}\label{app:thm_noniid}

From Lemma~\ref{lem:confi_ETD}, we can show that
\begin{align*}
    \mathbb{P}\left(\left|x^\top (\hat{\theta}-\theta)\right|\le {\sqrt{x^\top V^{-1} x}}\left(\sigma\sqrt{d\log(n+n\ell_nL/d\beta)}+\sqrt{S\beta}\right), \forall x\in \mathbb{R}^d \right) \ge 1-1/n.
\end{align*}

We define an event $\mathcal{E}_1=\{ |x^\top (\hat{\theta}-\theta)|\le {\sqrt{x^\top V^{-1} x}}(\sigma\sqrt{d\log(n+n\ell_nL/d\beta)}+\sqrt{S\beta}), \forall x\in \mathbb{R}^d \}$, which holds with $\mathbb{P}(\mathcal{E}_1)\ge 1-\frac{1}{n}$. Then under $\mathcal{E}_1$, we have
\begin{align}
   X_i-\xi(x_i) \le  x_i^\top \hat{\theta}\le X_i+\xi(x_i). \label{eq:X_bd}
\end{align}

\begin{lemma}\label{lem:cov_variance_bd_non_iid}
   For $l\ge 1$, let $z_t \sim \mathcal D_{x,t}$ for $t\in[l]$ be independent random vectors (not necessarily i.i.d.) satisfying Assumption~\ref{ass:norm_bd}. 
   Then
    \[
    \Pr\!\left(\frac{1}{l}\sum_{t=1}^{l}z_tz_t^\top \;\succeq\; \lambda' I_d\right)\;\ge\; 1-d\exp\!\left(-\frac{\lambda'  l}{8L}\right).
    \]
\end{lemma}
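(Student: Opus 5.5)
The plan is to repeat the proof of Lemma~\ref{lem:cov_variance_bd} almost verbatim. The only change is that the summands are now independent but not identically distributed. The matrix Chernoff bound (Theorem 5.1.1 in \cite{tropp2015introduction}) only needs independence, positive semidefiniteness, and a uniform bound on the maximum eigenvalue of each summand, so it still applies.

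\textbf{Step 1 (summands).} Set $A_t := z_t z_t^\top$. Each $A_t$ is PSD, the $A_t$ are independent, and by Assumption~\ref{ass:norm_bd},
\[
\lambda_{\max}(A_t)=\|z_t\|_2^2\le L .
\]

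\textbf{Step 2 (lower bound on the mean).} Let $\mu_{\min} := \lambda_{\min}\big(\sum_{t=1}^l \mathbb{E}[A_t]\big)$. Since $\lambda_{\min}$ is superadditive on symmetric matrices (a consequence of Weyl's inequality), and by the definition of $\lambda'$ as the minimum over $i\in[n]$,
\[
\mu_{\min}\;\ge\;\sum_{t=1}^l \lambda_{\min}\big(\mathbb{E}_{x\sim\mathcal{D}_{x,t}}[xx^\top]\big)\;\ge\; l\lambda' .
\]
This is the one place where non-identical distributions require a new argument; in the i.i.d.\ case this step was an equality.

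\textbf{Step 3 (Chernoff).} The lower-tail matrix Chernoff bound with the elementary estimate $e^{-\delta}/(1-\delta)^{1-\delta}\le e^{-\delta^2/2}$ gives
\[
\Pr\Big[\lambda_{\min}\Big(\sum_t A_t\Big)\le (1-\delta)\mu_{\min}\Big]\le d\exp\!\Big(-\frac{\delta^2\mu_{\min}}{2L}\Big).
\]
Take $\delta=1/2$ and use $\mu_{\min}\ge l\lambda'$. Then, with probability at least $1-d\exp(-\lambda' l/(8L))$,
\[
\sum_t z_t z_t^\top \succeq \frac{\mu_{\min}}{2}I_d \succeq \frac{l\lambda'}{2} I_d .
\]
The tail is monotone in $\mu_{\min}$, so replacing $\mu_{\min}$ by its lower bound $l\lambda'$ in the exponent is legitimate.

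\textbf{The obstacle.} This argument yields the constant $\lambda'/2$, not the $\lambda'$ written in the statement. Matrix Chernoff cannot certify $\lambda_{\min}\ge \mu_{\min}$ itself with a nontrivial probability, because that corresponds to $\delta=0$. Indeed, even for $d=1$ and i.i.d.\ summands the empirical mean falls below its expectation with constant probability, so the statement with constant $\lambda'$ is false in general. I would therefore prove the version with $\lambda'/2$, matching Lemma~\ref{lem:cov_variance_bd}, and read the stated $\lambda'$ as a typo. Downstream this costs only a factor $2$ in $\|V^{-1}\|_2\le 2/(\lambda'\ell_n)$, which is absorbed into the $\mathcal{O}(\cdot)$ in $\varepsilon_n$ of Theorem~\ref{thm:ETD_noniid}.
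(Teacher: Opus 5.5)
Your proposal is correct and matches the paper's proof: matrix Chernoff applied to the independent PSD summands with $R=L$ and $\delta=1/2$, plus Weyl's inequality to get $\mu_{\min}\ge l\lambda'$. You are also right about the constant. The paper's own proof concludes only $\sum_{t} z_t z_t^\top \succeq \frac{l\lambda'}{2} I_d$, and the downstream event $\mathcal{E}_2$ is defined with $\frac{\lambda'\ell_n}{2}$, so the $\lambda'$ in the statement is a typo for $\lambda'/2$.
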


\begin{proof}
Let  $\mu_{\min}=\lambda_{\min}(\mathbb{E}[\sum_{t=1}^lz_tz_t^\top])$. By the matrix Chernoff bound (Theorem 5.1.1 in \cite{tropp2015introduction}) for sums of independent PSD matrices  with Assumption~\ref{ass:norm_bd}, for any $\delta\in[0,1]$,
\[
\Pr\!\left[ \lambda_{\min}\left(\sum_{t=1}^lz_tz_t^\top\right) \le (1-\delta)\mu_{\min} \right]
\;\le\; d \left(\frac{e^{-\delta}}{(1-\delta)^{\,1-\delta}}\right)^{\mu_{\min}/L}
\;\le\; d \exp\!\left( - \frac{\delta^2}{2}\cdot \frac{\mu_{\min}}{L} \right).
\]
Choosing $\delta=\tfrac12$ yields
\[
\Pr\!\left[ \lambda_{\min}\left(\sum_{t=1}^lz_tz_t^\top\right) \le \frac{\mu_{\min}}{2} \right]
\;\le\; d \exp\!\left( - \frac{\mu_{\min}}{8L} \right)\le d\exp\left(-\frac{l\lambda'}{8L}\right),
\]
where the last inequality is obtained from Weyl's eigenvalue inequalities.
This implies that, with probability at least $1-d\exp(-\lambda' l/(8L))$,
\[
 \sum_{t=1}^l z_t z_t^\top \succeq \frac{\mu_{\min}}{2} I_d\succeq \frac{l\lambda'}{2} I_d,
\]
which completes the proof.

\end{proof}

Let $\mathcal{E}_2=\{{\sum_{t=1}^{\ell_n} x_{t}x_{t}^\top \succeq \frac{\lambda'\ell_n}{2}I_d}\}$, which holds with probability at least $1-\frac{d}{e^{\lambda' \ell_n/(8L)}}$ from Lemma~\ref{lem:cov_variance_bd_non_iid}.  Then under $\mathcal{E}_2$,  we have $\|V^{-1}\|_2\le \|(\sum_{t=1}^{\ell_n}x_{t}x_{t}^\top)^{-1}\|_2 \le   2 \frac{1}{\lambda' \ell_n}$. Then for $i> \ell_n$, we have  
\begin{align}
    \xi(x_i)&\le \sqrt{L\|V^{-1}\|_2}(\sigma\sqrt{d\log(n+n\ell_nL/d\beta)}+\sqrt{S\beta})\cr &\le \sqrt{\frac{2L}{\lambda'\ell_n}}(\sigma\sqrt{d\log(n+n\ell_nL/d\beta)}+\sqrt{S\beta}).\label{eq:xi_bd}
\end{align}
Here we define $h:= \sqrt{\frac{2L}{\lambda'\ell_n}}(\sigma\sqrt{d\log(n+n\ell_nL/d\beta)}+\sqrt{S\beta})$.  Let $z_i\sim \mathcal{D}_{x,i}$ and $\alpha^*= \frac{1}{2}\mathbb{E}\left[\max_{i\in [\ell_n+1,n]}z_i^\top \theta\right]$. Then, from  \eqref{eq:X_bd} and \eqref{eq:xi_bd}, we have
\begin{align}
    \label{eq:theshold_bd_noniid}
    \alpha^*-\frac{1}{2}h\le \alpha\le  \alpha^*+\frac{1}{2}h.
\end{align}

Let $\mathcal{E}:=\mathcal{E}_1\cap \mathcal{E}_2$ and $\mathcal{H}_{\ell_n}=\{\hat{\theta},V\}$. 
Then for $i>\ell_n$, we have
    \begin{align*}
    &\mathbb{E}[X_\tau^{\mathrm{LCB}}\mathbbm{1}(\mathcal{E}) \mid \tau=i, \mathcal{H}_{\ell_n}] \mathbb{P}(\tau=i \mid \mathcal{H}_{\ell_n}) 
    \cr & =\mathbb{E}[\alpha \mathbbm{1}(\mathcal{E})\mid \mathcal{H}_{\ell_n}]\mathbb{P}(\tau=i \mid \mathcal{H}_{\ell_n})+ \mathbb{E}[X_i^{\mathrm{LCB}}\mathbbm{1}(\mathcal{E}) -\alpha\mathbbm{1}(\mathcal{E}) \mid \tau=i, \mathcal{H}_{\ell_n}]\mathbb{P}(\tau=i \mid \mathcal{H}_{\ell_n})
     \cr &=   \mathbb{E}[\alpha\mathbbm{1}(\mathcal{E})\mid\mathcal{H}_{\ell_n}]\mathbb{P}(\tau=i \mid \mathcal{H}_{\ell_n})\cr &\quad+ \mathbb{E}[(X_i^{\mathrm{LCB}} -\alpha)_+\mathbbm{1}(\mathcal{E})  \mid X_i^{\mathrm{LCB}}\ge \alpha, \mathcal{H}_{\ell_n}] \mathbb{P}(X_i^{\mathrm{LCB}}\ge \alpha \mid \mathcal{H}_{\ell_n})\prod_{j\in[\ell_n+1,i-1]}\mathbb{P}(X_j^{\mathrm{LCB}}<\alpha|\mathcal{H}_{\ell_n})
        \cr &\ge \mathbb{E}[\alpha\mathbbm{1}(\mathcal{E})\mid\mathcal{H}_{\ell_n}]\mathbb{P}(\tau=i \mid \mathcal{H}_{\ell_n})+ \mathbb{E}[(X_i^{\mathrm{LCB}} -\alpha)_+\mathbbm{1}(\mathcal{E}) \mid\mathcal{H}_{\ell_n}]\mathbb{P}(\tau={n+1} \mid \mathcal{H}_{\ell_n})
         \cr &\ge   \mathbb{E}[\alpha\mathbbm{1}(\mathcal{E})\mid\mathcal{H}_{\ell_n}]\mathbb{P}(\tau=i \mid \mathcal{H}_{\ell_n})\cr &\qquad+ \mathbb{E}\left[\left(X_i-2\xi(x_i) -\alpha\right)_+ \mathbbm{1}(\mathcal{E})\mid\mathcal{H}_{\ell_n}\right]\mathbb{P}(\tau={n+1} \mid \mathcal{H}_{\ell_n})\cr
        & \ge \mathbb{E}\left[(\alpha^* -\frac{1}{2}h)\mathbbm{1}(\mathcal{E})\mid \mathcal{H}_{\ell_n}\right]\mathbb{P}(\tau=i\mid \mathcal{H}_{\ell_n})\cr &\qquad+\left(\mathbb{E}\left[\left(X_i-\alpha^* -\frac{5}{2}h\right)_+ \mathbbm{1}(\mathcal{E})\mid \mathcal{H}_{\ell_n}\right]\right)\mathbb{P}(\tau={n+1} \mid \mathcal{H}_{\ell_n}),
    \end{align*}
where the last inequality is obtained from \eqref{eq:theshold_bd_noniid} and $\xi(x_i)\le h$.


Using the above, we have
\begin{align}
&\mathbb{E}[X_\tau \mathbbm{1}(\mathcal{E}) \mid\mathcal{H}_{\ell_n}]
\cr &\ge \sum_{i=1}^n \mathbb{E} \left[ \mathbb{E}[X_\tau^{\mathrm{LCB}} \mathbbm{1}(\mathcal{E})  \mid \tau =i, \mathcal{H}_{\ell_n}] \cdot \mathbb{P}(\tau=i \mid                       \mathcal{H}_{\ell_n}) \mid\mathcal{H}_{\ell_n}\right]
\cr &\ge \sum_{i=\ell_n+1}^n \mathbb{E} \left[ \mathbb{E}[X_i^{\mathrm{LCB}}\mathbbm{1}(\mathcal{E}) \mid \tau =i, \mathcal{H}_{\ell_n}] \cdot \mathbb{P}(\tau=i \mid                       \mathcal{H}_{\ell_n}) \mid\mathcal{H}_{\ell_n} \right] \cr
&\ge\sum_{i=\ell_n+1}^n \mathbb{E}\Bigg[  \mathbb{E}\left[(\alpha^* -\frac{1}{2}h)\mathbbm{1}(\mathcal{E})\mid \mathcal{H}_{\ell_n}\right]\mathbb{P}(\tau=i\mid \mathcal{H}_{\ell_n})\cr &\qquad+\left(\mathbb{E}\left[\left(X_i-\alpha^* -\frac{5}{2}h\right)_+ \mathbbm{1}(\mathcal{E})\mid \mathcal{H}_{\ell_n}\right]\right)\mathbb{P}(\tau={n+1} \mid \mathcal{H}_{\ell_n})\mid\mathcal{H}_{\ell_n}\Bigg]\cr
&\ge \mathbb{E}\left[ \left(\mathbb{E}\left[\alpha^*\mathbbm{1}(\mathcal{E})\mid \mathcal{H}_{\ell_n}\right]- \frac{1}{2}h\right)\sum_{i=\ell_n+1}^{n} \mathbb{P}(\tau=i\mid \mathcal{H}_{\ell_n})\right.\cr &\qquad\qquad\left.+\mathbb{E}\left[\max_{i\in [\ell_n+1,n]}\left(X_i-\alpha^*-\frac{5}{2}h\right)_+\mathbbm{1}(\mathcal{E})\mid \mathcal{H}_{\ell_n}\right]\mathbb{P}(\tau={n+1} \mid \mathcal{H}_{\ell_n})\mid\mathcal{H}_{\ell_n}\right]
\cr
&\ge \mathbb{E}\left[ \left(\mathbb{E}\left[\alpha^*\mathbbm{1}(\mathcal{E})\mid \mathcal{H}_{\ell_n}\right]- \frac{1}{2}h\right)(1-\mathbb{P}(\tau=n+1\mid \mathcal{H}_{\ell_n}))\right.\cr &\quad\left.+\left(\mathbb{E}\left[\max_{i\in [\ell_n+1,n]}X_i\mathbbm{1}(\mathcal{E})\mid \mathcal{H}_{\ell_n}\right]-\mathbb{E}[\alpha^*\mathbbm{1}(\mathcal{E})\mid \mathcal{H}_{\ell_n}]-\frac{5}{2}h\right)\mathbb{P}(\tau={n+1} \mid \mathcal{H}_{\ell_n})\mid\mathcal{H}_{\ell_n}\right]
\cr
&\ge\mathbb{E}\left[(\alpha^*\mathbb{P}(\mathcal{E}\mid \mathcal{H}_{\ell_n}) -\frac{1}{2}h) (1-\mathbb{P}(\tau=n+1\mid \mathcal{H}_{\ell_n}))\right.\cr &\qquad\left.+\left(\mathbb{E}\left[\max_{i\in [\ell_n+1,n]}X_i\right]\mathbb{P}(\mathcal{E}\mid \mathcal{H}_{\ell_n})-\alpha^*\mathbb{P}(\mathcal{E}\mid \mathcal{H}_{\ell_n})-\frac{5}{2}h \right)\mathbb{P}(\tau={n+1} \mid \mathcal{H}_{\ell_n})\mid\mathcal{H}_{\ell_n}\right]
\cr
&\ge\alpha^*\mathbb{P}(\mathcal{E}\mid\mathcal{H}_{\ell_n})-\frac{5}{2}h,
\label{eq:noniid_X_tau_ld}
\end{align}
where the last inequality is obtained from the definition of $\alpha^*$.

Finally, using the above, we have
\begin{align*}
    &\liminf_{n\rightarrow \infty}\frac{ \mathbb{E}[X_\tau]}{\mathbb{E}[\max_{i\in[\ell_n+1,n]}X_i]}\ge \liminf_{n\rightarrow \infty}\frac{\mathbb{E}[\mathbb{E}[X_\tau\mathbbm{1}(\mathcal{E})|\mathcal{H}_{\ell_n}]]}{\mathbb{E}[\max_{i\in[\ell_n+1,n]}X_i]}
    \cr &\ge \liminf_{n \rightarrow \infty}\frac{1}{\mathbb{E}[\max_{i\in[\ell_n+1,n]}X_i]} \left(\alpha^*\mathbb{P}(\mathcal{E})-\frac{5}{2}h\right)
    \cr &\ge  \liminf_{n \rightarrow \infty}\left(\frac{1}{2}\left(1-\frac{1}{n}-\frac{d}{e^{\lambda'\ell_n/8L}}\right)-\mathcal{O}\left(\frac{1}{\mathbb{E}[\max_{i\in [\ell_n+1,n]}X_i]}\sqrt{\frac{L\log (Ln)}{\lambda'\ell_n}}(\sigma\sqrt{d}+\sqrt{S})\right)\right) 
    \cr &=  \frac{1}{2}-\mathcal{O}\left(\limsup_{n \rightarrow \infty}\frac{1}{\mathbb{E}[\max_{i\in [\ell_n+1,n]}X_i]}\sqrt{\frac{L(\sigma^2d+S)\log (Ln)}{\lambda'\ell_n}}\right). 
\end{align*}
\subsection{Proof of Proposition~\ref{prop:window_upper}}\label{app:thm_noniid_window_upper}

The argument follows the statement used in \cite{marshall2020windowed}. For completeness, we provide the details here.
Consider the instance where $X_1 = 1$ deterministically, $X_2 = X_3 = \dots = X_{n-1} = 0$ deterministically, and $X_n$ takes value $1/\epsilon$ with probability $\epsilon$ (for any $0 < \epsilon< 1$) and $0$ otherwise.
For any $w_n \le n-1$, the gambler receives an expected payoff of $1$, while the prophet receives an expected payoff of $2 - \epsilon$.
Thus, the ratio satisfies $\mathbb{E}[X_\tau]/\mathbb{E}[\max_{i\in[n]}X_i]\le 1/(2-\epsilon)$. We can conclude the proof with $\epsilon\rightarrow 0$.


\subsection{Details of an Algorithm for Non-iid Distributions under Window Access}\label{app:noniid_window}

In the initial exploration phase of length $\ell_n$, we collect data and estimate $
\hat{\theta} =V^{-1} {\sum_{t=1}^{ \ell_n} y_t x_t},$
where $V=\sum_{t=1}^{\ell_n}x_tx_t^\top+\beta I$ for a constant $\beta>0$. In the subsequent decision phase, we apply LCB-based thresholding with window access for non-identical distributions, as described below.

\paragraph{Decision with LCB Threshold under Window Access.}
Let $z_k\sim\mathcal{D}_{x,k}$ for $k\in[\ell_n+1,n]$. Let $\mathcal F_{\ell_n}:=\sigma\!\left((x_t,y_t)_{t=1}^{\ell_n}\right)$ be the
sigma-field generated by the exploration data up to time $\ell_n$. Then the threshold value is set to
\begin{align}\label{eq:threshold_window}
    \alpha = \tfrac{1}{2}\,\mathbb{E}\!\left[\max\left\{\max_{i\in[\ell_n]} x_i^\top \hat{\theta},\max_{k \in [\ell_n+1,n]} z_k^\top \hat{\theta}\right\} \,\middle|\, \mathcal{F}_{\ell_n}\right],
\end{align}
and we define LCBs as
\begin{align}\label{eq:LCB_window}
X_i^{\mathrm{LCB}} =
x_i^\top \hat{\theta}- \xi_i(x_i),
\end{align}
where  $\xi_i(x_i) := \sqrt{x_i^\top V^{-1} x_i}\,\bigl(\sigma\sqrt{d \log(n+n\ell_nL/d\beta)} + \sqrt{S\beta}\bigr).$

At stage $\ell_n+1$, the algorithm checks whether $\max_{k \in [1,\ell_n+1]} X_k^{\mathrm{LCB}} \ge \alpha$. If so, it stops with $\tau = \arg\max_{k \in [1,\ell_n+1]} X_k^{\mathrm{LCB}}$; otherwise, it continues. For $i > \ell_n+1$, the algorithm stops at stage $i$ if $X_i^{\mathrm{LCB}} \ge \alpha$. 

\setcounter{AlgoLine}{0}

\begin{algorithm}[t]
  \caption{Explore-Then-Decide with LCB Thresholding under Window Access (\texttt{ETD-LCBT-WA})}
  \label{alg:ETD_window}
    \KwIn{Exploration length $\ell_n$; regularization parameter $\beta$}
  \KwOut{Stopping time $\tau$}
  \For{$i=1,\dots,n$}{

        \If {$i\le \ell_n$}
{        Observe $(x_i, y_i)$

}
\ElseIf {$i=\ell_n+1$}{

Observe $(x_i, y_i)$

Compute $\alpha$ from \eqref{eq:threshold_window} and $X_k^{\mathrm{LCB}}$ for $k\le \ell_n+1$ from \eqref{eq:LCB_window}.

\If{$\max_{k\in[1,\ell_n+1]}X_k^{\mathrm{LCB}}\ge \alpha$}{Stop with $\tau\leftarrow \arg\max_{k\in[1,\ell_n+1]}X_k^{\mathrm{LCB}}$}
}
\Else{
Observe $(x_i, y_i)$
        
 Compute $X_i^{\mathrm{LCB}}$ from \eqref{eq:LCB_window}.
 
        \If{$X^{\mathrm{LCB}}_{i} \ge \alpha$}{
          Stop with $\tau \leftarrow i$

    }

  }}
\end{algorithm}

\subsection{Proof of Theorem~\ref{thm:noniid_window}}\label{app:thm_noniid_window}

From Lemma~\ref{lem:confi_ETD}, we can show that
\begin{align*}
    \mathbb{P}\left(\left|x^\top (\hat{\theta}-\theta)\right|\le {\sqrt{x^\top V^{-1} x}}\left(\sigma\sqrt{d\log(n+n\ell_nL/d\beta)}+\sqrt{S\beta}\right), \forall x\in \mathbb{R}^d \right) \ge 1-1/n.
\end{align*}

We define an event $\mathcal{E}_1=\{ |x^\top (\hat{\theta}-\theta)|\le {\sqrt{x^\top V^{-1} x}}(\sigma\sqrt{d\log(n+n\ell_nL/d\beta)}+\sqrt{S\beta}), \forall x\in \mathbb{R}^d \}$, which holds with $\mathbb{P}(\mathcal{E}_1)\ge 1-\frac{1}{n}$.

Let $\mathcal{E}_2=\{{\sum_{t\in[\ell_n]} x_{t}x_{t}^\top \succeq \frac{\lambda'\ell_n}{2}I_d}\}$, which holds with probability at least $1-\frac{d}{e^{\lambda' \ell_n/8L}}$ from Lemma~\ref{lem:cov_variance_bd_non_iid}. Then under $\mathcal{E}_2$,  we have $\|V^{-1}\|_2\le \|(\sum_{t\in[\ell_n]}x_{t}x_{t}^\top)^{-1}\|_2 \le  2 \frac{1}{\lambda'\ell_n}$. Then for all $i\in [n]$, we have  
\begin{align}
\xi_{i}(x_i)&\le \sqrt{L\|{V}^{-1}\|_2}(\sigma\sqrt{d\log(n+n\ell_nL/d\beta)}+\sqrt{S\beta})
\cr &\le \sqrt{L\frac{2}{\lambda'\ell_n}}(\sigma\sqrt{d\log(n+n^2L/d\beta)}+\sqrt{S\beta}).\label{eq:confidence_bd}
\end{align}
 Here we define $h:= \sqrt{L\frac{2}{\lambda'\ell_n}}(\sigma\sqrt{d\log(n+n^2L/d\beta)}+\sqrt{S\beta})$,  $\mathcal{E}:=\mathcal{E}_1\cap \mathcal{E}_2$.
Then, on $\mathcal E$, for $x\sim \mathcal{D}_{x,i}$ for any $i\in[n]$,  we have
\begin{align*}
   x^\top \theta-h\le x^\top \theta-\xi(x) \le  x^\top \hat{\theta}\le x^\top \theta+\xi(x)\le x^\top \theta+h.
\end{align*}
Also, for all $i\in[n]$,
\begin{equation*}
X_i-2h \le X_i^{\mathrm{LCB}}\le X_i.
\end{equation*}

 We aggregate the whole window into a single random variable.
Define the window-maximum LCB at time $\ell_n+1$:
\[
\widetilde X_1^{\mathrm{LCB}}:=\max_{k\in[1,\ell_n+1]} X_k^{\mathrm{LCB}}.
\]
For $r=2,\dots,m$ where $m:=n-\ell_n$, define
\[
\widetilde X_r^{\mathrm{LCB}}:=X_{\ell_n+r}^{\mathrm{LCB}}.
\]
Define the associated threshold stopping time on the reduced sequence:
\[
\widetilde\tau:=\inf\{r\in[m]: \widetilde X_r^{\mathrm{LCB}}\ge \alpha\},
\qquad \widetilde\tau=m+1\ \text{ if the set is empty}.
\]
Then Algorithm~\ref{alg:ETD_window} is equivalent to:
(i) stop at $r=1$ if $\widetilde X_1^{\mathrm{LCB}}\ge\alpha$ and return the argmax in $[1,\ell_n+1]$,
otherwise (ii) stop at the first $r\ge 2$ such that $\widetilde X_r^{\mathrm{LCB}}\ge\alpha$.
In particular, the realized LCB payoff satisfies
\begin{equation}\label{eq:equiv_payoff}
X_\tau^{\mathrm{LCB}}=\widetilde X_{\widetilde\tau}^{\mathrm{LCB}},\quad
\text{with the convention }\widetilde X_{m+1}^{\mathrm{LCB}}:=0.
\end{equation}
We also define $\widetilde{X}_1=\max_{k\in[1,\ell_n+1]}X_k$ and $\widetilde{X}_r=X_{\ell_n+r}$ for $r\in[2,m]$.
Let $Z_i=z_i^\top\theta$ for $z_i\sim \mathcal{D}_{x,i}$.
By the fact that $(Z_i)_{i\ge \ell_n+1}$ has the
same conditional law as $(X_i)_{i\ge \ell_n+1}$ given $\mathcal F_{\ell_n}$, on $\mathcal{E}$, we have
\begin{align}\label{eq:cond_max_eq}
2\alpha= \mathbb{E}\!\left[\max\left\{\max_{i\in[\ell_n]} x_i^\top \hat{\theta},\max_{k \in [\ell_n+1,n]} z_k^\top \hat{\theta}\right\} \,\middle|\, \mathcal{F}_{\ell_n}\right]&\le\mathbb{E}\!\left[\max\left\{\max_{i\in[\ell_n]} X_i ,\max_{k \in [\ell_n+1,n]} Z_k\right\} \,\middle|\, \mathcal{F}_{\ell_n}\right]+h \cr &= \mathbb E\!\left[\max_{r\in[m]}\widetilde X_r\ \middle|\ \mathcal F_{\ell_n}\right]+h.
\end{align}

Then for $r\in[m]$, we have
    \begin{align}
    &\mathbb{E}[\widetilde{X}_{\widetilde{\tau}}^{\mathrm{LCB}}\mathbbm{1}(\mathcal{E}) \mid \widetilde{\tau}=r, \mathcal{F}_{\ell_n}] \mathbb{P}(\widetilde{\tau}=r \mid \mathcal{F}_{\ell_n}) 
    \cr & =\mathbb{E}[\alpha \mathbbm{1}(\mathcal{E})\mid \mathcal{F}_{\ell_n}]\mathbb{P}(\widetilde{\tau}=r \mid \mathcal{F}_{\ell_n})+ \mathbb{E}[\widetilde{X}_r^{\mathrm{LCB}}\mathbbm{1}(\mathcal{E}) -\alpha\mathbbm{1}(\mathcal{E}) \mid \widetilde{\tau}=r, \mathcal{F}_{\ell_n}]\mathbb{P}(\widetilde{\tau}=r \mid \mathcal{F}_{\ell_n})
     \cr &=   \mathbb{E}[\alpha\mathbbm{1}(\mathcal{E})\mid\mathcal{F}_{\ell_n}]\mathbb{P}(\widetilde{\tau}=r \mid \mathcal{F}_{\ell_n})\cr &\quad+ \mathbb{E}[(\widetilde{X}_r^{\mathrm{LCB}} -\alpha)_+\mathbbm{1}(\mathcal{E})  \mid \widetilde{X}_r^{\mathrm{LCB}}\ge \alpha, \mathcal{F}_{\ell_n}] \mathbb{P}(\widetilde{X}_r^{\mathrm{LCB}}\ge \alpha \mid \mathcal{F}_{\ell_n})\mathbb{P}(\widetilde{X}_j^{\mathrm{LCB}}<\alpha, \forall j\in[r-1]\mid\mathcal{F}_{\ell_n})
        \cr &\ge \mathbb{E}[\alpha\mathbbm{1}(\mathcal{E})\mid\mathcal{F}_{\ell_n}]\mathbb{P}(\widetilde{\tau}=r \mid \mathcal{F}_{\ell_n})+ \mathbb{E}[(\widetilde{X}_r^{\mathrm{LCB}} -\alpha)_+\mathbbm{1}(\mathcal{E}) \mid\mathcal{F}_{\ell_n}]\mathbb{P}(\widetilde{\tau}=m+1 \mid \mathcal{F}_{\ell_n})
         \cr
        & \ge 
        \mathbb{E}[\alpha\mathbbm{1}(\mathcal{E})\mid\mathcal{F}_{\ell_n}]\mathbb{P}(\widetilde{\tau}=r \mid \mathcal{F}_{\ell_n})\cr &\qquad+ \mathbb{E}\left[\left(\widetilde{X}_r-2h -\alpha\right)_+ \mathbbm{1}(\mathcal{E})\mid\mathcal{F}_{\ell_n}\right]\mathbb{P}(\widetilde{\tau}=m+1 \mid \mathcal{F}_{\ell_n}).\label{eq:noniid_lcb_lw2}
    \end{align}

Using the above, summing over $r\in[m]$, 
\begin{align}
&\mathbb E\!\left[\widetilde X_{\widetilde\tau}^{\mathrm{LCB}}\mathbbm{1}(\mathcal E)\mid \mathcal F_{\ell_n}\right]
\cr&\ge
\alpha\mathbbm{1}(\mathcal{E})\mathbb P(\widetilde\tau\le m\mid \mathcal F_{\ell_n})
+\mathbbm{1}(\mathcal{E})\mathbb P(\widetilde\tau=m+1\mid \mathcal F_{\ell_n})
\mathbb E\!\left[\sum_{r\in[m]}(\widetilde X_r-2h-\alpha)_+ \mid \mathcal F_{\ell_n}\right]
\cr&\ge
\alpha\mathbbm{1}(\mathcal{E})\mathbb P(\widetilde\tau\le m\mid \mathcal F_{\ell_n})
+\mathbbm{1}(\mathcal{E})\mathbb P(\widetilde\tau=m+1\mid \mathcal F_{\ell_n})
\mathbb E\!\left[\max_{r\in[m]}(\widetilde X_r-2h-\alpha)_+ \mid \mathcal F_{\ell_n}\right].
\label{eq:sc_for_lcb}
\end{align}
Using $\max_r(\widetilde X_r-2h-\alpha)_+ \ge \max_r \widetilde X_r-2h-\alpha$ and
\eqref{eq:cond_max_eq}, the RHS of \eqref{eq:sc_for_lcb} becomes
\begin{align*}
 &   \ge\mathbbm{1}(\mathcal E)\left( \alpha(1-p)+p\left(\mathbb E[\max_r \widetilde X_r \mid \mathcal F_{\ell_n}] - \alpha-2h\right)\right)
\cr&\ge \mathbbm{1}(\mathcal E)\left(\alpha(1-p)+p(2\alpha-\alpha-3h)\right)\cr &\ge\mathbbm{1}(\mathcal E)\alpha-3h,
\end{align*}
where $p:=\mathbb P(\widetilde\tau=m+1\mid \mathcal F_{\ell_n})$.
Hence, taking expectation gives
\begin{equation*}
\mathbb E\!\left[\widetilde X_{\widetilde\tau}^{\mathrm{LCB}}\mathbbm{1}(\mathcal E)\right]\ge \mathbb E[\alpha\,\mathbbm{1}(\mathcal E)]-3h.
\end{equation*}

On $\mathcal E$, we have 
$X_\tau \ge X_\tau^{\mathrm{LCB}}=\widetilde X_{\widetilde\tau}^{\mathrm{LCB}}$.
Therefore,
\begin{equation}\label{eq:true_ge_lcb_onE}
\mathbb E[X_\tau] \ge \mathbb E[X_\tau\mathbbm{1}(\mathcal E)]
\ge \mathbb E[\widetilde X_{\widetilde\tau}^{\mathrm{LCB}}\mathbbm{1}(\mathcal E)]
\ge \mathbb E[\alpha\,\mathbbm{1}(\mathcal E)]-3h.
\end{equation}
Also we have
\[
\mathbb E[\alpha\,\mathbbm{1}(\mathcal E)]
\ge\frac12\,\mathbb E\!\left[\max_{r\in[m]}\widetilde X_r\,\mathbbm{1}(\mathcal E)\right]-\frac{1}{2}h
=\frac12\,\mathbb E\!\left[\max_{i\in[n]} X_i\,\mathbbm{1}(\mathcal E)\right]-\frac{1}{2}h.
\]
Let $M:=\max_{i\in[n]}X_i$. Since $0\le M\le \sqrt{LS}$, we have
\[
\mathbb E[M\mathbbm{1}(\mathcal E)] \ge \mathbb E[M] - \sqrt{LS}\,\mathbb P(\mathcal E^c).
\]
Thus,
\[
\mathbb E[\alpha\,\mathbbm{1}(\mathcal E)]\ge\frac12\,\mathbb E\!\left[M\,\mathbbm{1}(\mathcal E)\right]-\frac12 h
\ge \frac12\left(\mathbb E[M]-\sqrt{LS}\,\mathbb P(\mathcal E^c)-h\right).
\]
Combining with \eqref{eq:true_ge_lcb_onE} yields
\begin{equation}\label{eq:final_fixed_lb}
\mathbb E[X_\tau]
\ge \frac12\,\mathbb E\!\left[\max_{i\in[n]}X_i\right] - O(h) - \frac{\sqrt{LS}}{2}\,\mathbb P(\mathcal E^c).
\end{equation}

Finally, since $\mathbb P(\mathcal E^c)\le \frac1n + d\exp(-\lambda'\ell_n/(8L))$, we get
\[
\frac{\mathbb E[X_\tau]}{\mathbb E[\max_i X_i]}
\ge \frac12
- O\!\left(\frac{1}{\mathbb E[\max_i X_i]}
\left(
\sqrt{\frac{L(\sigma^2 d+S)\log(Ln)}{\lambda'\ell_n}}
+\sqrt{LS}\Bigl(\frac1n + d e^{-\lambda'\ell_n/(8L)}\Bigr)
\right)\right).
\]

\subsection{{Further Details on the Method Using Offline Samples under Non-i.i.d. Distributions}}\label{app:offline}

For any given $\mathcal{S}\subseteq[n]$ such that $|S|=\ell_n$ for $\ell_n>0$ (specified later), we assume that the gambler receives
offline samples $(x_t^o,y_t^o)$ where $x_t^o\sim \mathcal{D}_{x,t}$ for $t\in \mathcal{S}$ and $y_t^o={x_t^o}^\top \theta+ \eta_t$. In Algorithm~\ref{alg:D_sample}, using this offline samples, we obtain  $V=\sum_{t\in\mathcal{S}} x_t^o {x_t^o}^\top + \beta I_d$ and 
            $\hat{\theta} = V^{-1} \sum_{t\in\mathcal{S}} y_t^o x_t^o$ for constant $\beta>0$. Then we follow the following decision strategy. 
\paragraph{Decision with LCB Threshold.}

For each time $i\ge 1$, for $z_s\sim \mathcal{D}_{x,s}$ for all $s\in[1,n]$, we define the threshold:
\vspace{-2mm}
\begin{align}
\alpha &= \frac{1}{2} \mathbb{E}\left[ \max_{s \in [n]} z_s^\top\hat{\theta}  \mid \hat{\theta} \right] \label{eq:threshold_noniid_sample}
\end{align}
Recall the lower confidence bound for $X_i$ in the Explore-then-Decide framework: $
X_i^{LCB}=x_i^\top\hat{\theta}-\xi(x_i),$
 where  $\xi(x_i) := \sqrt{x_i^\top V^{-1}x_i}(\sigma\sqrt{d\log(n+n\ell_nL/d\beta)}+\sqrt{S\beta})$. The algorithm stops at stage $i$ if $X^{LCB}_{i} \ge \alpha$.

\setcounter{AlgoLine}{0}

\begin{algorithm}[t]
 \caption{Decide with Offline Samples and LCB Thresholding (\texttt{DOS-LCBT})}
  \label{alg:D_sample}
    \KwIn{Offline samples $(x^o_t,y^o_t)$ for $t\in\mathcal{S}$; regularization parameter $\beta$}
  \KwOut{Stopping time $\tau$}

            $V \leftarrow \sum_{t\in\mathcal{S}} x_t^o {x_t^o}^\top + \beta I_d$;\:
            $\hat{\theta} \leftarrow V^{-1} \sum_{t\in\mathcal{S}} y_t^o x_t^o$ 
            
             Compute $\alpha$ from \eqref{eq:threshold_noniid_sample}
  
  \For{$i=1,\dots,n$}{

        Observe $(y_i, x_i)$ 
        
        Compute $X_i^{\mathrm{LCB}}$ from \eqref{eq:ETD_LCB} 
        
        \If{$X_i^{\mathrm{LCB}} \ge \alpha$}{
            Stop and set $\tau \leftarrow i$ \;

    }
  }
\end{algorithm}

\begin{theorem}\label{thm:noniid_sample}
In the non-i.i.d.\ setting with unknown distributions and $\ell_n$ offline samples of $\mathcal{S}$, 
Algorithm~\ref{alg:D_sample} with  $\ell_n=o(n)$, $\ell_n=\omega(\frac{L\log d}{\lambda'})$, and a constant $\beta> 0$ 
achieves the following asymptotic competitive ratio:\vspace{-2mm}
\begin{align*}
    \liminf_{n\rightarrow  \infty}\frac{\mathbb{E}[X_\tau]}{\mathbb{E}[\max_{i\in[n]}X_i]}\ge  \frac{1}{2}-\mathcal{O}\left(\limsup_{n \rightarrow \infty}\frac{1}{\mathbb{E}[\max_{i\in [n]}X_i]}\sqrt{\frac{L(\sigma^2d+S)\log (Ln)}{\lambda'\ell_n}}\right). 
\end{align*}
Furthermore, by setting $\ell_n=\tfrac{L(\sigma^2 d+S)}{\lambda}\, f(n)\log (Ln)$ for some function $f(n)$  (e.g., $f(n) = \Theta(\log^p n)$ for $p>0$, or $\Theta(n^q)$ for $0<q<1$) satisfying $\ell_n = o(n)$, 
if $\mathrm{OPT}=\omega(1/\sqrt{f(n)})$, then Algorithm~\ref{alg:D_sample} achieves the following asymptotic competitive ratio:
\begin{align*}
&\liminf_{n\rightarrow  \infty}\frac{\mathbb{E}[X_\tau]}{\mathbb{E}[\max_{i\in[n]}X_i]}
\ge\frac{1}{2}.
\end{align*}
\end{theorem}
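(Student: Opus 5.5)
The plan is to rerun the single-threshold argument from the proof of Theorem~\ref{thm:ETD_noniid}, now applied to the full horizon $[n]$. Since $\hat{\theta}$ and $V$ come from offline samples, they are independent of the online sequence $(x_i,y_i)_{i=1}^n$. So no stages are lost to exploration and the relaxed benchmark is not needed.

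\textbf{Good events and uniform bounds.} Define $\mathcal{E}_1$ as the confidence event of Lemma~\ref{lem:confi_ETD}, with $\mathbb{P}(\mathcal{E}_1)\ge 1-1/n$. Define $\mathcal{E}_2=\{\sum_{t\in\mathcal{S}}x_t^o{x_t^o}^\top\succeq \frac{\lambda'\ell_n}{2}I_d\}$. Lemma~\ref{lem:cov_variance_bd_non_iid} applies because the offline features are independent with $x_t^o\sim\mathcal{D}_{x,t}$, so $\mathbb{P}(\mathcal{E}_2)\ge 1-de^{-\lambda'\ell_n/8L}$. On $\mathcal{E}=\mathcal{E}_1\cap\mathcal{E}_2$ we get $\xi(x)\le h:=\sqrt{2L/(\lambda'\ell_n)}\,(\sigma\sqrt{d\log(n+n^2L/d\beta)}+\sqrt{S\beta})$ for every $x$ in the support. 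Consequently $X_i-2h\le X_i^{\mathrm{LCB}}\le X_i$ for all $i\in[n]$ and $|z^\top\hat{\theta}-z^\top\theta|\le h$. Set $\alpha^*=\frac12\mathbb{E}[\max_{s\in[n]}z_s^\top\theta]=\frac12\mathrm{OPT}$. Taking expectations of the maximum gives $|\alpha-\alpha^*|\le h/2$ on $\mathcal{E}$, the analogue of \eqref{eq:theshold_bd_noniid}.

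\textbf{Threshold decomposition.} Condition on $\mathcal{H}=\{\hat{\theta},V\}$. The events $\{X_i^{\mathrm{LCB}}\ge\alpha\}$ are then independent across $i$, because $\mathcal{E}$ depends only on offline data and the noise, and we work with $\mathbbm{1}(\mathcal{E})$ conditionally on $\mathcal{H}$ exactly as in Appendix~\ref{app:thm_noniid}. Let $p=\mathbb{P}(\tau=n+1\mid\mathcal{H})$. Following the chain leading to \eqref{eq:noniid_X_tau_ld}, we write
\[
\mathbb{E}[X_\tau^{\mathrm{LCB}}\mathbbm{1}(\mathcal{E})\mid\mathcal{H}]\ge \alpha(1-p)+p\,\mathbb{E}\Big[\sum_i (X_i^{\mathrm{LCB}}-\alpha)_+\Big].
\]
This uses that the probability of reaching stage $i$ is at least $p$ and the independence at stage $i$. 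We then bound the sum below by $\mathbb{E}[\max_i X_i]-\alpha-O(h)$ and substitute $\alpha\approx\frac12\mathrm{OPT}$. The two terms combine to $\alpha^*-O(h)$, hence $\mathbb{E}[X_\tau]\ge \frac12\mathrm{OPT}\,\mathbb{P}(\mathcal{E})-O(h)$.

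\textbf{Asymptotics and expected obstacle.} Dividing by $\mathrm{OPT}$ gives
\[
\frac12\Big(1-\frac1n-de^{-\lambda'\ell_n/8L}\Big)-O\big(h/\mathrm{OPT}\big).
\]
The middle term vanishes since $\ell_n=\omega(L\log d/\lambda')$, and $h/\mathrm{OPT}$ is the stated error term. With the prescribed $\ell_n$, $h=O(1/\sqrt{f(n)})$, so $\mathrm{OPT}=\omega(1/\sqrt{f(n)})$ yields $\frac12$. (The statement writes $\lambda$ in $\ell_n$; I would read it as $\lambda'$, or note that $\lambda\ge\lambda'$ only changes constants.)

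The main subtlety is the interplay between $\mathbbm{1}(\mathcal{E})$ and the conditioning. $\mathcal{E}_1$ involves the offline noise only, so given $\mathcal{H}$ it is essentially determined, and the online draws remain independent of it. This is precisely what removes the need to exclude early stages, and it is the step where I would check carefully that nothing in the threshold or the LCB depends on online data.
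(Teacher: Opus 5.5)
Your proposal is correct and follows essentially the same route as the paper. It uses the same events $\mathcal{E}_1,\mathcal{E}_2$ built from the offline data, the bound $|\alpha-\alpha^*|\le h/2$, and the Samuel-Cahn-type decomposition conditional on $\mathcal{H}=\{\hat{\theta},V\}$, giving $\mathbb{E}[X_\tau]\ge \alpha^*\mathbb{P}(\mathcal{E})-O(h)$. On your hedge, $\mathbbm{1}(\mathcal{E})$ is in fact exactly $\mathcal{H}$-measurable, since $\mathcal{E}_1=\{\|\hat{\theta}-\theta\|_V\le \mathrm{const}\}$ and $\mathcal{E}_2$ depends only on $V$.

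Reading $\lambda$ as $\lambda'$ in the choice of $\ell_n$ is the right fix. Your fallback, that $\lambda\ge\lambda'$ ``only changes constants'', would additionally require $\lambda/\lambda'$ to be bounded, and $\lambda$ is not even defined when the feature distributions differ.
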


\begin{proof}
From Lemma~\ref{lem:confi_ETD}, we can show that
\begin{align*}
    \mathbb{P}\left(\left|x^\top (\hat{\theta}-\theta)\right|\le {\sqrt{x^\top V^{-1} x}}\left(\sigma\sqrt{d\log(n+n\ell_nL/d\beta)}+\sqrt{S\beta}\right), \forall x\in \mathbb{R}^d \right) \ge 1-1/n.
\end{align*}

We define an event $\mathcal{E}_1=\{ |x^\top (\hat{\theta}-\theta)|\le {\sqrt{x^\top V^{-1} x}}(\sigma\sqrt{d\log(n+n\ell_nL/d\beta)}+\sqrt{S\beta}), \forall x\in \mathbb{R}^d \}$, which holds with $\mathbb{P}(\mathcal{E}_1)\ge 1-\frac{1}{n}$. Then under $\mathcal{E}_1$, we have
\begin{align}
   X_i-\xi(x_i) \le  x_i^\top \hat{\theta}\le X_i+\xi(x_i). \label{eq:X_bd_sample}
\end{align}

\begin{lemma}\label{lem:cov_variance_bd_non_iid_sample}
   For any $\mathcal{S}\subset [n]$ with $|\mathcal{S}|=l$ for $l>0$, let $z_t \sim \mathcal D_{x,t}$ for $t\in\mathcal{S}$ be independent random vectors (not necessarily i.i.d.) satisfying Assumption~\ref{ass:norm_bd}. 
   Then
    \[
    \Pr\!\left(\frac{1}{l}\sum_{t\in\mathcal{S}}z_tz_t^\top \;\succeq\; \lambda' I_d\right)\;\ge\; 1-d\exp\!\left(-\frac{\lambda'  l}{8L}\right).
    \]
\end{lemma}

\begin{proof}
Let  $\mu_{\min}=\lambda_{\min}(\mathbb{E}[\sum_{t\in\mathcal{S}}z_tz_t^\top])$. By the matrix Chernoff bound (Theorem 5.1.1 in \cite{tropp2015introduction}) for sums of independent PSD matrices  with Assumption~\ref{ass:norm_bd}, for any $\delta\in[0,1]$,
\[
\Pr\!\left[ \lambda_{\min}\left(\sum_{t\in\mathcal{S}}z_tz_t^\top\right) \le (1-\delta)\mu_{\min} \right]
\;\le\; d \left(\frac{e^{-\delta}}{(1-\delta)^{\,1-\delta}}\right)^{\mu_{\min}/L}
\;\le\; d \exp\!\left( - \frac{\delta^2}{2}\cdot \frac{\mu_{\min}}{L} \right).
\]
Choosing $\delta=\tfrac12$ yields
\[
\Pr\!\left[ \lambda_{\min}\left(\sum_{t\in\mathcal{S}}z_tz_t^\top\right) \le \frac{\mu_{\min}}{2} \right]
\;\le\; d \exp\!\left( - \frac{\mu_{\min}}{8L} \right)\le d\exp\left(-\frac{l\lambda'}{8L}\right),
\]
where the last inequality is obtained from Weyl's eigenvalue inequalities.
Equivalently, with probability at least $1-d\exp(-\lambda' l/(8L))$,
\[
 \sum_{t\in\mathcal{S}} z_t z_t^\top \succeq \frac{\mu_{\min}}{2} I_d\succeq \frac{l\lambda'}{2} I_d,
\]
which completes the proof.

\end{proof}

Let $\mathcal{E}_2=\{{\sum_{t\in\mathcal{S}} x_{t}^o{x_{t}^o}^\top \succeq \frac{\lambda'\ell_n}{2}I_d}\}$, which holds with probability at least $1-\frac{d}{e^{\lambda' \ell_n/8L}}$ from Lemma~\ref{lem:cov_variance_bd_non_iid_sample}.  Then under $\mathcal{E}_2$,  we have $\|V^{-1}\|_2\le \|(\sum_{t\in\mathcal{S}}x_{t}^o{x_{t}^o}^\top)^{-1}\|_2 \le   2 \frac{1}{\lambda' \ell_n}$. Then for $i\in[n]$, we have  
\begin{align}
    \xi(x_i)&\le \sqrt{\|x_i\|_2^2\|V^{-1}\|_2}(\sigma\sqrt{d\log(n+n\ell_nL/d\beta)}+\sqrt{S\beta})(:=g_i)\cr &\le \sqrt{\frac{2L}{\lambda'\ell_n}}(\sigma\sqrt{d\log(n+n\ell_nL/d\beta)}+\sqrt{S\beta}).\label{eq:xi_bd_sample}
\end{align}
Here we define $h:= \sqrt{\frac{2L}{\lambda'\ell_n}}(\sigma\sqrt{d\log(n+n\ell_nL/d\beta)}+\sqrt{S\beta})$.  Let $z_i\sim \mathcal{D}_{x,i}$ and $\alpha^*= \frac{1}{2}\mathbb{E}\left[\max_{i\in [n]}z_i^\top \theta\right]$. Then, from  \eqref{eq:X_bd_sample} and \eqref{eq:xi_bd_sample}, we have
\begin{align}
    \label{eq:theshold_bd_noniid_sample}
    \alpha^*-\frac{1}{2}h\le \alpha\le  \alpha^*+\frac{1}{2}h.
\end{align}


Let $\mathcal{E}:=\mathcal{E}_1\cap \mathcal{E}_2$ and $\mathcal{H}_{\ell_n}=\{\hat{\theta},V\}$. 
Then for $i\ge 1$, we have
    \begin{align*}
    &\mathbb{E}[X_\tau^{\mathrm{LCB}}\mathbbm{1}(\mathcal{E}) \mid \tau=i, \mathcal{H}_{\ell_n}] \mathbb{P}(\tau=i \mid \mathcal{H}_{\ell_n}) 
    \cr & =\mathbb{E}[\alpha \mathbbm{1}(\mathcal{E})\mid \mathcal{H}_{\ell_n}]\mathbb{P}(\tau=i \mid \mathcal{H}_{\ell_n})+ \mathbb{E}[X_i^{\mathrm{LCB}}\mathbbm{1}(\mathcal{E}) -\alpha\mathbbm{1}(\mathcal{E}) \mid \tau=i, \mathcal{H}_{\ell_n}]\mathbb{P}(\tau=i \mid \mathcal{H}_{\ell_n})
     \cr &=   \mathbb{E}[\alpha\mathbbm{1}(\mathcal{E})\mid\mathcal{H}_{\ell_n}]\mathbb{P}(\tau=i \mid \mathcal{H}_{\ell_n})\cr &\quad+ \mathbb{E}[(X_i^{\mathrm{LCB}} -\alpha)_+\mathbbm{1}(\mathcal{E})  \mid X_i^{\mathrm{LCB}}\ge \alpha, \mathcal{H}_{\ell_n}] \mathbb{P}(X_i^{\mathrm{LCB}}\ge \alpha \mid \mathcal{H}_{\ell_n})\prod_{j\in[i-1]}\mathbb{P}(X_j^{\mathrm{LCB}}<\alpha_j|\mathcal{H}_{\ell_n})
        \cr &\ge \mathbb{E}[\alpha\mathbbm{1}(\mathcal{E})\mid\mathcal{H}_{\ell_n}]\mathbb{P}(\tau=i \mid \mathcal{H}_{\ell_n})+ \mathbb{E}[(X_i^{\mathrm{LCB}} -\alpha)_+\mathbbm{1}(\mathcal{E}) \mid\mathcal{H}_{\ell_n}]\mathbb{P}(\tau={n+1} \mid \mathcal{H}_{\ell_n})
         \cr &\ge   \mathbb{E}[\alpha\mathbbm{1}(\mathcal{E})\mid\mathcal{H}_{\ell_n}]\mathbb{P}(\tau=i \mid \mathcal{H}_{\ell_n})\cr &\qquad+ \mathbb{E}\left[\left(X_i-2\xi(x_i) -\alpha\right)_+ \mathbbm{1}(\mathcal{E})\mid\mathcal{H}_{\ell_n}\right]\mathbb{P}(\tau={n+1} \mid \mathcal{H}_{\ell_n})\cr
        & \ge \mathbb{E}\left[(\alpha^* -\frac{1}{2}h)\mathbbm{1}(\mathcal{E})\mid \mathcal{H}_{\ell_n}\right]\mathbb{P}(\tau=i\mid \mathcal{H}_{\ell_n})\cr &\qquad+\left(\mathbb{E}\left[\left(X_i-\alpha^* -\frac{5}{2}h\right)_+ \mathbbm{1}(\mathcal{E})\mid \mathcal{H}_{\ell_n}\right]\right)\mathbb{P}(\tau={n+1} \mid \mathcal{H}_{\ell_n}),
    \end{align*}
where the last inequality is obtained from \eqref{eq:theshold_bd_noniid_sample} and $\xi(x_i)\le h$.


Using the above, we have
\begin{align*}
&\mathbb{E}[X_\tau \mathbbm{1}(\mathcal{E}) \mid\mathcal{H}_{\ell_n}]
\cr &\ge \sum_{i=1}^n \mathbb{E} \left[ \mathbb{E}[X_\tau^{\mathrm{LCB}} \mathbbm{1}(\mathcal{E})  \mid \tau =i, \mathcal{H}_{\ell_n}] \cdot \mathbb{P}(\tau=i \mid                       \mathcal{H}_{\ell_n}) \mid\mathcal{H}_{\ell_n}\right]
\cr &= \sum_{i=1}^n \mathbb{E} \left[ \mathbb{E}[X_i^{\mathrm{LCB}}\mathbbm{1}(\mathcal{E}) \mid \tau =i, \mathcal{H}_{\ell_n}] \cdot \mathbb{P}(\tau=i \mid                       \mathcal{H}_{\ell_n}) \mid\mathcal{H}_{\ell_n} \right] \cr
&\ge\sum_{i=1}^n \mathbb{E}\Bigg[  \mathbb{E}\left[(\alpha^* -\frac{1}{2}h)\mathbbm{1}(\mathcal{E})\mid \mathcal{H}_{\ell_n}\right]\mathbb{P}(\tau=i\mid \mathcal{H}_{\ell_n})\cr &\qquad+\left(\mathbb{E}\left[\left(X_i-\alpha^* -\frac{5}{2}h\right)_+ \mathbbm{1}(\mathcal{E})\mid \mathcal{H}_{\ell_n}\right]\right)\mathbb{P}(\tau={n+1} \mid \mathcal{H}_{\ell_n})\mid\mathcal{H}_{\ell_n}\Bigg]\cr
&\ge \mathbb{E}\left[ \left(\mathbb{E}\left[\alpha^*\mathbbm{1}(\mathcal{E})\mid \mathcal{H}_{\ell_n}\right]- \frac{1}{2}h\right)\sum_{i=1}^{n} \mathbb{P}(\tau=i\mid \mathcal{H}_{\ell_n})\right.\cr &\qquad\qquad\left.+\mathbb{E}\left[\max_{i\in [n]}\left(X_i-\alpha^*-\frac{5}{2}h\right)_+\mathbbm{1}(\mathcal{E})\mid \mathcal{H}_{\ell_n}\right]\mathbb{P}(\tau={n+1} \mid \mathcal{H}_{\ell_n})\mid\mathcal{H}_{\ell_n}\right]
\cr
&\ge \mathbb{E}\left[ \left(\mathbb{E}\left[\alpha^*\mathbbm{1}(\mathcal{E})\mid \mathcal{H}_{\ell_n}\right]- \frac{1}{2}h\right)(1-\mathbb{P}(\tau=n+1\mid \mathcal{H}_{\ell_n}))\right.\cr &\quad\left.+\left(\mathbb{E}\left[\max_{i\in [n]}X_i\mathbbm{1}(\mathcal{E})\mid \mathcal{H}_{\ell_n}\right]-\mathbb{E}[\alpha^*\mathbbm{1}(\mathcal{E})\mid \mathcal{H}_{\ell_n}]-\frac{5}{2}h\right)\mathbb{P}(\tau={n+1} \mid \mathcal{H}_{\ell_n})\mid\mathcal{H}_{\ell_n}\right]
\cr
&\ge\mathbb{E}\left[(\alpha^*\mathbb{P}(\mathcal{E}\mid \mathcal{H}_{\ell_n}) -\frac{1}{2}h) (1-\mathbb{P}(\tau=n+1\mid \mathcal{H}_{\ell_n}))\right.\cr &\qquad\left.+\left(\mathbb{E}[\max_{i\in [n]}X_i]\mathbb{P}(\mathcal{E}\mid \mathcal{H}_{\ell_n})-\alpha^*\mathbb{P}(\mathcal{E}\mid \mathcal{H}_{\ell_n})-\frac{5}{2}h \right)\mathbb{P}(\tau={n+1} \mid \mathcal{H}_{\ell_n})\mid\mathcal{H}_{\ell_n}\right]
\cr
&\ge\alpha^*\mathbb{P}(\mathcal{E}\mid\mathcal{H}_{\ell_n})-\frac{5}{2}h.
\end{align*}

Finally, using the above, we have
\begin{align*}
    &\liminf_{n\rightarrow \infty}\frac{ \mathbb{E}[X_\tau]}{\mathbb{E}[\max_{i\in[n]}X_i]}\ge \liminf_{n\rightarrow \infty}\frac{\mathbb{E}[\mathbb{E}[X_\tau\mathbbm{1}(\mathcal{E})|\mathcal{H}_{\ell_n}]]}{\mathbb{E}[\max_{i\in[n]}X_i]}
    \cr &\ge \liminf_{n \rightarrow \infty}\frac{1}{\mathbb{E}[\max_{i\in[n]}X_i]} \left(\alpha^*\mathbb{P}(\mathcal{E})-\frac{5}{2}h\right)
    \cr &\ge  \liminf_{n \rightarrow \infty}\left(\frac{1}{2}\left(1-\frac{1}{n}-\frac{d}{e^{\lambda'\ell_n/8L}}\right)-\mathcal{O}\left(\frac{1}{\mathbb{E}[\max_{i\in [n]}X_i]}\sqrt{\frac{L\log (Ln)}{\lambda'\ell_n}}(\sigma\sqrt{d}+\sqrt{S})\right)\right) 
    \cr &=  \frac{1}{2}-\mathcal{O}\left(\limsup_{n \rightarrow \infty}\frac{1}{\mathbb{E}[\max_{i\in [n]}X_i]}\sqrt{\frac{L(\sigma^2d+S)\log (Ln)}{\lambda'\ell_n}}\right). 
\end{align*}
\end{proof}

\end{document}